\documentclass[11pt]{article}
\usepackage[a4paper,margin=1in]{geometry}
\usepackage[nottoc,numbib]{tocbibind}
\usepackage{lmodern}
\usepackage[dvipsnames]{xcolor}

\usepackage{amsmath, amssymb, amsthm, mathtools}
\usepackage{bm, mathrsfs, dsfont}
\allowdisplaybreaks

\usepackage{graphicx, booktabs, float}

\usepackage{authblk}
\usepackage[round]{natbib}
\usepackage[hyphens]{url}
\usepackage{fancyhdr}

\usepackage{hyperref}
\hypersetup{colorlinks=true, citecolor=Plum, linkcolor=PineGreen}
\usepackage{cleveref}

\theoremstyle{plain}
\newtheorem{theorem}{Theorem}[section]

\newtheorem{lemma}{Lemma}[section]
\newtheorem{corollary}{Corollary}[section]
\newtheorem{proposition}{Proposition}[section]

\theoremstyle{definition}
\newtheorem{remark}{Remark}[section]

\NewDocumentCommand{\refLBNN}{}{\ref{sec:lbnn}}
\NewDocumentCommand{\refTempering}{}{\ref{sec:likelihoodtempering}}

\newcommand{\R}{\mathbb R}

\newcommand{\E}{\mathbb E}
\DeclareMathOperator{\Var}{Var}

\DeclareMathOperator{\tr}{tr}

\renewcommand{\epsilon}{\varepsilon}

\newcommand{\iidsim}{\stackrel{\text{iid}}{\sim}}
\newcommand{\ind}{\stackrel{d}{\to}}
\newcommand{\inp}{\stackrel{p}{\to}}

\title{The Impact of Likelihood Tempering on the Limiting Predictive Moments of Variational Bayesian Linear Neural Networks}
\author[,1,2]{Ian Zhang\thanks{Corresponding author: \texttt{ianz.zhang@mail.utoronto.ca}}}
\author[1]{Thibault Randrianarisoa}
\affil[1]{Department of Statistical Sciences, University of Toronto}
\affil[2]{Dunlap Institute for Astronomy and Astrophysics}

\date{\today}

\begin{document}
\setcounter{footnote}{1}
\maketitle

\begin{abstract}
    In wide Bayesian neural networks, Gaussian mean-field variational inference is prone to \textit{prior dominance}: the Kullback-Leibler (KL) regularization term of the ELBO outweighs the expected log-likelihood, and the variational predictive distribution collapses to the prior predictive as the width $M$ grows. Tempering the likelihood, by raising it to the power $1/T$ for a temperature $T < 1$, is equivalent to scaling the KL term by $T$. We ask in this paper how fast $T$ must decrease with $M$ to counteract this degeneracy and strike a good balance between the two terms. For single-hidden-layer linear networks with isotropic Gaussian priors, we derive the limiting predictive distribution under schedules of the form $T = \tau/M^{c}$, with constants $\tau, c > 0$, as $M \to \infty$ and compare it with the untempered neural network Gaussian process (NNGP) posterior, the infinite-width limit of the exact posterior. Our main result is that the predictive expectation and variance undergo phase transitions at different scales: the limiting expectation leaves its prior value at $c = 1/2$, once $\tau$ falls below an explicit threshold, and equals the least-squares prediction for $c > 1/2$, whereas the limiting variance keeps its prior value for $c < 1$, matches the NNGP's for $c=1$, and vanishes for $c > 1$. With suitable choices of $\tau,c$, one can recover either the NNGP posterior expectation or its variance.
\end{abstract}

\section{Introduction}\label{sec:introduction}
 
Bayesian inference equips a model with a posterior distribution over its parameters, from which one obtains uncertainty quantification and a principled means of comparing models through the evidence \citep{MacKay1992,Ghahramani2015}. Neural networks have been treated in this way since \citet{MacKay1992} and \citet{nealPriorsInfiniteNetworks1996}: a Bayesian neural network (BNN) places a prior $p(\theta)$ on the weights, separates the uncertainty due to observation noise from the uncertainty about the fitted function \citep{kendallWhatUncertaintiesWe2017}, and thus combines the flexibility of neural networks with principled uncertainty quantification \citep{jospinHandsOnBayesianNeural2022,Arbel2023}. The posterior $p(\theta\mid\mathcal{D})$ of a BNN is intractable, however, and sampling methods such as MCMC, although asymptotically exact and successfully applied in practice \citep{nealPriorsInfiniteNetworks1996,Izmailov2021}, remain costly for modern architectures. Variational inference (VI) offers a tractable alternative, approximating the posterior by the member $q^*(\theta)$ of a chosen family $\mathcal{Q}$ that maximizes the evidence lower bound (ELBO) \citep{bleiVariationalInferenceReview2017}. VI became particularly practical once stochastic optimization of the ELBO became available \citep{Hoffman2013,Graves2011}, and with the reparameterization trick \citep{KingmaWelling2014} it resembles ordinary backpropagation \citep{Blundell2015}. Gaussian variational learning has been applied successfully to large modern networks \citep{Shen2024}.
 
\paragraph{Prior dominance.}
The Gaussian mean-field variational family, under which (blocks of) parameters are independent, is the most common variational family for BNNs, and is commonly held to underestimate posterior uncertainty \citep{Minka2005,bleiVariationalInferenceReview2017}. In BNNs, it also restricts the predictive uncertainty that can be expressed \citep{foongExpressivenessApproximateInference2020}. While it is true that underestimation does hold in the parameter space \citep{MargossianSaul2023}, this is not (completely) so for predictions. In conjugate Bayesian linear regression, \citet{odgersGaussianMeanField2026} show that the mean-field optimum overestimates the predictive variance on inputs resembling the training data and underestimates it in other directions. In overparameterized random feature regression with an isotropic Gaussian variational family, the optimal variational posterior variance likewise reverts to its prior value as the number of features grows with the sample size fixed \citep{harveyLearningHyperparametersDataEmphasized2026}. In both models the parameters enter the predictor linearly and the mean is preserved, so the degradation concerns the variance alone. In neural networks, where the parameters of consecutive layers combine in a nonlinear fashion, the mean is lost as well. \citet{cokerWideMeanFieldBayesian2022a} prove that, for networks with odd, Lipschitz activations, the optimal Gaussian mean-field predictive distribution converges to the prior predictive as the architecture width increases, so that the approximation effectively ignores the data. Empirical signs of this degeneracy appear in \citet{trippe2018overpruningvariationalbayesianneural}, where weights revert to their prior, and in \citet{sunFUNCTIONALVARIATIONALBAYESIAN2018}, where predictions deteriorate as networks grow. We refer to this phenomenon as ``prior dominance'' \citep[see also][]{huixVariationalInferenceOverparameterized2022,Descours2023}: the Kullback-Leibler divergence (KL) term of the ELBO outweighs the expected log-likelihood, thus the variational posterior reverts to the prior in function-space.
 
\paragraph{Remedies.}
Given the previous observation, two families of approaches have been proposed. Since network outputs are of more interest than network parameters themselves, the first approach moves the divergence to function-space, where the induced prior on the functions computed by the network is considered \citep{sunFUNCTIONALVARIATIONALBAYESIAN2018,rudnerTractableFunctionSpaceVariational2022}. Indeed, when the variational distribution and the prior over functions are both induced by distributions over the parameters, the data-processing inequality ensures that the functional KL-divergence stays below its parameter-space counterpart, reducing the regularization. However, no such bound is available when the prior is specified directly on functions (for instance, via a Gaussian process), and functional KL-divergence may be infinite \citep{sunFUNCTIONALVARIATIONALBAYESIAN2018,Burt2021}. This motivates regularized variants \citep{CinquinBamler2025}.

The second approach keeps the parameter-space objective but downweights its KL regularization term relative to the expected log-likelihood \citep{zhangNoisyNaturalGradient2018,osawaPracticalDeepLearning2019}. This reweighting is equivalent to tempering the likelihood alone, that is, to raising it to the power $1/T$ for a temperature $T<1$, and recent work prescribes how $T$ should depend on the size of the model instead of tuning it: \citet{huixVariationalInferenceOverparameterized2022} and \citet{Descours2023} scale the KL term inversely with the number of neurons of a shallow network, while \citet{harveyLearningHyperparametersDataEmphasized2026} upweight the likelihood by the ratio of the number of parameters to the sample size. Likelihood tempering must be distinguished from the `cold posteriors' of \citet{wenzelHowGoodBayes2020}, which raise the whole posterior, including the prior, to the power $1/T$ \citep{Aitchison2021,odgersGaussianMeanField2026}. \citet{Arbel2023} review the debate surrounding both. \citet{odgersGaussianMeanField2026} argue that, in conjugate linear regression, a cold posterior corrects the predictive variance while keeping the exact mean, whereas likelihood tempering cannot correct the variance without displacing the mean. When the mean itself has been lost to the prior, it is not clear which quantities tempering can restore.
 
\paragraph{Two infinite-width scalings.}
These prescriptions of temperatures depend on the parameterization of the network. Indeed, different scalings of a fully-connected single-hidden-layer network with its width $M$ lead to different limits. In the mean-field scaling, the output is the average of $M$ neurons, $f(x) = M^{-1}\sum_{m=1}^{M} b_m\,\varphi(a_m^\top x)$, whose priors do not depend on $M$. It is the scaling of the mean-field analyses of gradient training \citep{Mei2018,ChizatBach2018,RotskoffVandenEijnden2022,SirignanoSpiliopoulos2020}, and it is the one adopted by \citet{huixVariationalInferenceOverparameterized2022} and \citet{Descours2023}. By the law of large numbers, the prior over functions concentrates on a deterministic function, losing the benefits of the Bayesian approach in the limit $M\to\infty$. In the scaling we adopt here, the sum is instead normalized by $M^{-1/2}$, as in the neural tangent kernel (NTK) parameterization \citep{jacotNeuralTangentKernel2018} or, equivalently for the prior, by giving the output weights variance of order $1/M$. The induced prior over functions then converges to a non-degenerate Gaussian process, the neural network Gaussian process (NNGP) \citep{nealPriorsInfiniteNetworks1996,leeDeepNeuralNetworks2018,matthews2018gaussianprocessbehaviourwide}, and the exact posterior converges to the corresponding NNGP posterior \citep{hronExactPosteriorDistributions2020}, whose mean and variance are available in closed form under a Gaussian likelihood. While this limit provides important insights into the behaviour of BNNs, it is also the natural regime in which to ask whether a variational approximation recovers posterior uncertainty. \citet{cokerWideMeanFieldBayesian2022a} establish under this setting the collapse of the mean-field approximation, and the random-feature model of \citet{harveyLearningHyperparametersDataEmphasized2026} uses the same normalization.
 
In our notation, the prescriptions of the previous part all correspond to temperatures of order $1/M$. In the mean-field scaling, this is the rate at which the two terms of the ELBO remain balanced, and \citet{Descours2023} prove a law of large numbers for the resulting training dynamics. In the random-feature model, where the mean is preserved, it prevents the variational variance from reverting to its prior value \citep{harveyLearningHyperparametersDataEmphasized2026}. Neither result covers the $M^{-1/2}$ scaling with both layers trained, in which the predictive mean is lost as well.
 
\paragraph{Question and setting.}
We ask how fast the temperature must decrease with the width to counteract prior dominance in the NNGP regime, and whether a single schedule can recover both the posterior predictive mean and the posterior predictive variance. We study both questions in the tractable setting of single-hidden-layer linear networks in the $M^{-1/2}$ scaling, with isotropic zero-mean Gaussian priors, and in the limit $M \to \infty$ with the sample size $n$ and input dimension $d$ fixed. Linear networks have long served as a tractable alternative to their nonlinear counterparts \citep{Saxe2014}, and their mean-field variational approximations have been analyzed in detail at fixed width, where the linear network amounts to reduced-rank regression or matrix factorization \citep{nakajimaVariationalBayesSolution2007,nakajimaTheoreticalAnalysisBayesian2011,nakajimaGlobalAnalyticSolution2013}. Our analysis departs from these works in that we let the width grow to infinity, beyond the reduced-rank regime, we temper the likelihood, and we focus on the resulting predictive distribution over functions. We consider schedules of the form $T = \tau/M^{c}$ with constants $\tau, c > 0$, and we compare the limiting variational predictive distribution with the exact untempered NNGP posterior.
 
\paragraph{Contributions.}
Our contributions are the following.
\begin{enumerate}
    \item We derive the limiting variational predictive distribution under every schedule $T = \tau/M^{c}$ (Theorem~\ref{thm:main-theorem}). The predictive mean and variance undergo phase transitions at different scales: the limiting mean leaves its prior value at $c = 1/2$, once $\tau$ falls below an explicit threshold, and equals the least-squares prediction for $c > 1/2$, whereas the limiting variance transitions at $c=1$.
    \item We show that suitable choices of $\tau$ recover either the NNGP posterior mean, at $c = 1/2$, or the NNGP posterior variance, at $c = 1$, and that no schedule of this form recovers both (Corollary \ref{cor:recovery}). In particular, temperatures of order $1/M$, as previously prescribed, recover at best the variance but with an expectation that already equals the least-squares prediction.
\end{enumerate}

\section{Setting}\label{sec:lbnn}
This section introduces BNNs and their infinite-width limit, the mean-field approximation that we study and the structure of its global minimizers, and it shows that, without tempering, this approximation reverts to the prior. General definitions of deep Bayesian neural networks and of variational inference are recalled in Appendix~\ref{app:background-details}.

\subsection{Model and Neural Network Gaussian Process}\label{sec:nngp}
We consider a network with one hidden layer of width $M$, identity activation and scalar output,
\begin{equation}\label{eq:lnn}
    f_\theta(x) = \frac{1}{\sqrt M}W_2\,\frac{1}{\sqrt d}W_1x = BA^\top x = \sum_{m=1}^M b_m a_m^\top x,
\end{equation}
where $x\in\R^d$, $W_1\in\R^{M\times d}$, $W_2\in\R^{1\times M}$ and $\theta=(A,B)$. The matrix $A=W_1^\top/\sqrt d\in\R^{d\times M}$ has columns $a_m\in\R^d$, the vector $B=W_2/\sqrt M\in\R^{1\times M}$ has entries $b_m\in\R$, and each index $m$ is a hidden neuron with input weights $a_m$ and output weight $b_m$. The prior and the observation model are
\begin{align}
    \operatorname{vec}(A) &\sim \mathcal N_{Md}\bigl(0,\, d^{-1} I_{Md}\bigr), \label{eq:prior-A}\\
    B^\top &\sim \mathcal N_M\bigl(0,\, M^{-1} I_M\bigr), \label{eq:prior-B}\\
    y \mid x,A,B &\sim \mathcal N\bigl(BA^\top x,\, \sigma^2\bigr), \label{eq:model-equation}
\end{align}
with prior density $ p (A,B)= p (A) p (B)$ and known noise variance $\sigma^2>0$.

The factors $1/\sqrt d$ and $1/\sqrt M$ in \eqref{eq:lnn} are those of the NTK parameterization \citep{jacotNeuralTangentKernel2018}, in which $W_1$ and $W_2$ have standard Gaussian entries, whereas \eqref{eq:prior-A} and \eqref{eq:prior-B} express the same prior in the standard parameterization of $(A,B)$. The two descriptions lead to the same variational problem below, since $(W_1,W_2)\mapsto(A,B)$ is a fixed linear bijection that preserves Gaussian mean-field families and leaves the Kullback-Leibler (KL) divergence invariant. We therefore work with $(A,B)$ throughout.

Under this prior, $f_\theta(x)$ has mean zero and variance $x^\top x/d$ at every width. Conditionally on $A$, it is Gaussian with variance $\|A^\top x\|_2^2/M$, which converges to $x^\top x/d$ by the law of large numbers, whence the following limit (Appendix~\ref{app:proofdistBAxprior}).
\begin{proposition}\label{thm:distBAxprior}
    Under \eqref{eq:prior-A} and \eqref{eq:prior-B}, for each fixed $x\in\R^d$, $f_\theta(x)\ind\mathcal N(0,x^\top x/d)$ as $M\to\infty$.
\end{proposition}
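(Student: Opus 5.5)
The plan is to use characteristic functions and condition on $A$, following the heuristic stated just before Proposition~\ref{thm:distBAxprior}. Fix $x\in\R^d$. Under \eqref{eq:prior-B}, and conditionally on $A$, the scalar $f_\theta(x)=BA^\top x$ is a linear functional of the Gaussian vector $B^\top$. It is therefore exactly Gaussian, with mean $0$ and variance
\[
V_M := \frac{1}{M}\,\|A^\top x\|_2^2=\frac{1}{M}\sum_{m=1}^M (a_m^\top x)^2 .
\]
Hence $\E\bigl[e^{itf_\theta(x)}\mid A\bigr]=\exp(-t^2V_M/2)$ for every $t\in\R$.

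Next we identify the limit of $V_M$. Under \eqref{eq:prior-A} the columns $a_m$ are i.i.d.\ $\mathcal N_d(0,d^{-1}I_d)$. The scalars $(a_m^\top x)^2$ are therefore i.i.d.\ with mean $x^\top x/d$ and finite variance, since they equal $(x^\top x/d)\,\chi^2_1$ in distribution. By the weak law of large numbers, $V_M\inp x^\top x/d$. If one prefers a self-contained argument, Chebyshev suffices: $\Var(V_M)=2(x^\top x/d)^2/M\to 0$. When $x=0$ the statement is trivial, so assume $x\neq0$ if needed. One subtlety is that the distribution of $A$ changes with $M$ only through the number of columns. We may realize all widths on a single i.i.d.\ sequence $(a_m)_{m\ge1}$ without changing any marginal law, which is all that convergence in distribution concerns.

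Finally we conclude via the tower property:
\[
\E\, e^{itf_\theta(x)}=\E\bigl[\exp(-t^2V_M/2)\bigr].
\]
The map $v\mapsto \exp(-t^2v/2)$ is bounded and continuous on $[0,\infty)$, so convergence in probability of $V_M$ together with bounded convergence gives $\E[\exp(-t^2V_M/2)]\to\exp(-t^2x^\top x/(2d))$. This is the characteristic function of $\mathcal N(0,x^\top x/d)$, and Lévy's continuity theorem then yields the claim.

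There is no real obstacle. The only step requiring care is passing the limit inside the expectation, and boundedness of the conditional characteristic function handles it. Alternatively, one could compute the exact law: $f_\theta(x)$ is distributed as $\sqrt{V_M}\,Z$ with $Z$ standard normal independent of $V_M$, so Slutsky's lemma gives the limit directly.
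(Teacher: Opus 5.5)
Your proof is correct and follows essentially the same route as the paper: condition on $A$ (the paper conditions on $g=A^\top x$), observe that $BA^\top x$ is conditionally $\mathcal N(0,V_M)$ with $V_M=\|A^\top x\|_2^2/M$, and apply the law of large numbers to get $V_M\inp x^\top x/d$. The only difference is the last step: the paper finishes with the representation $\sqrt{V_M}\,G$ and Slutsky's theorem, which is your stated alternative, whereas your main argument uses conditional characteristic functions and L\'evy's continuity theorem, an equivalent way to conclude.
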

Proposition~\ref{thm:distBAxprior} is the one-dimensional marginal of the neural network Gaussian process (NNGP) limit \citep{nealPriorsInfiniteNetworks1996,leeDeepNeuralNetworks2018,matthews2018gaussianprocessbehaviourwide}, which here is the centred Gaussian process with covariance function $x^\top y/d$, mean and variance
\begin{equation}\label{eq:nngp-prior}
    m_0(x)=0, \qquad v_0(x)=\frac{x^\top x}{d}.
\end{equation}

For $n$ observations, write $X^n=\{x_1,\ldots,x_n\}$ and $Y^n=\{y_1,\ldots,y_n\}$, set $\alpha=n/\sigma^2$ and
\begin{equation*}
    Q=\frac1n\sum_{i=1}^n x_ix_i^\top\in\R^{d\times d}, \quad R=\frac1n\sum_{i=1}^n y_ix_i^\top\in\R^{1\times d}.
\end{equation*}
Since the covariance function is linear, the NNGP posterior with the Gaussian likelihood is that of Bayesian linear regression with prior $\mathcal N(0,I_d/d)$ on the coefficients \citep[Section~2.1]{rasmussenGaussianProcessesMachine2006}, and the posterior mean and variance of the latent output are
\begin{equation}\label{eq:nngp-posterior}
    \begin{aligned}
        m_{\text{NNGP}}(x) &= \alpha R(dI_d+\alpha Q)^{-1}x,\\
        v_{\text{NNGP}}(x) &= x^\top(dI_d+\alpha Q)^{-1}x.
    \end{aligned}
\end{equation}
For fixed data, the exact BNN posterior converges to this NNGP posterior as $M\to\infty$ \citep{hronExactPosteriorDistributions2020}, which makes \eqref{eq:nngp-posterior} the natural benchmark for the variational approximations below. All width limits are taken with $n$, $d$, the data and the test input $x$ fixed. Predictive quantities refer to the latent output $f_\theta(x)$, and $\sigma^2$ must be added to the variance when predicting a new observation.

Throughout the paper, we assume that $Q=I_d$. Then $Rx$ is the least-squares prediction at $x$, and \eqref{eq:nngp-posterior} reduces to 
\begin{align*}
m_{\text{NNGP}}(x)&=\alpha Rx/(\alpha+d),\\
v_{\text{NNGP}}(x)&=x^\top x/(\alpha+d).
\end{align*}
Any design of full column rank can be brought to this form by a linear transformation of the inputs, at the price of the data-dependent prior $a_m\sim\mathcal N(0,Q^{-1}/d)$ on the original weights (Remark~\ref{rem:whitening} in Appendix~\ref{app:supplementary-LBNN}).

\subsection{Mean-Field Variational Inference}\label{sec:lnnvi}\label{sec:variational-optimization}
Variational inference approximates the posterior $\pi(A,B\mid X^n,Y^n)\propto p (A,B)\,p(Y^n\mid X^n,A,B)$ by the member $q^*$ of a family $\mathcal{Q}$ that minimizes $\text{KL}(q\,\|\,\pi(\cdot\mid X^n,Y^n))$ or, equivalently, maximizes the evidence lower bound $\text{ELBO}(q)=\E_q[\log p(Y^n\mid X^n,A,B)]-\text{KL}(q\,\|\, p )$ \citep{bleiVariationalInferenceReview2017}. We use the Gaussian mean-field family $\mathcal{Q}_{\text{MF}}$ of densities
\begin{equation}\label{eq:vbfactorization}
    q(A,B)=\prod_{m=1}^M q(a_m)\,q(b_m),
\end{equation}
where $q(a_m)=\mathcal N_d(\mu_{a_m},\Sigma_{a_m})$ with $\Sigma_{a_m}\succ0$ and $q(b_m)=\mathcal N(\mu_{b_m},\Sigma_{b_m})$ with $\Sigma_{b_m}>0$. This family makes the neurons independent, as well as the input and output weights of each neuron, but allows a full covariance matrix for each $a_m$, as in the SimpleVB approximation of \citet{nakajimaGlobalAnalyticSolution2013}. We call neuron $m$ \emph{active} under $q$ if $(\mu_{a_m},\mu_{b_m})\neq(0,0)$.

Following \citet{nakajimaTheoreticalAnalysisBayesian2011}, we work with the variational Bayes free energy
\begin{equation}\label{eq:vbfreeenergyoriginal}
    F_{\text{VB}}(q)=\E_q\left[\log\frac{q(A,B)}{p(Y^n,A,B\mid X^n)}\right]=-\text{ELBO}(q)
\end{equation}
and minimize $\mathcal L^{\text{VB}}(q)=2F_{\text{VB}}(q)+C$ over $\mathcal{Q}_{\text{MF}}$, where $C$ does not depend on $q\in \mathcal{Q}_{\text{MF}}$. One finds (Appendix~\ref{app:longcomputationLVB})
\begin{equation}\label{eq:minproblemexpanded}
\begin{aligned}
    &\mathcal L^{\text{VB}}(q) = \sum_{m=1}^M \Bigl[d\bigl(\|\mu_{a_m}\|_2^2 + \tr(\Sigma_{a_m})\bigr) - \log|\Sigma_{a_m}| + M\bigl(\mu_{b_m}^2 + \Sigma_{b_m}\bigr) - \log\Sigma_{b_m}\Bigr] \\
    &\quad + \frac{1}{\sigma^2}\sum_{i=1}^n \Bigl[\Bigl(y_i - \sum_{m=1}^M \mu_{b_m}\mu_{a_m}^\top x_i\Bigr)^2 + \sum_{m=1}^M \Bigl(\mu_{b_m}^2 x_i^\top\Sigma_{a_m}x_i + \Sigma_{b_m}(\mu_{a_m}^\top x_i)^2 + \Sigma_{b_m}x_i^\top\Sigma_{a_m}x_i\Bigr)\Bigr].
\end{aligned}
\end{equation}

\subsection{Structure of the Global Minimizers}\label{sec:global-minimizers}

For fixed covariances, the means enter the data-fit term of \eqref{eq:minproblemexpanded} only through $w=\sum_{m=1}^M\mu_{b_m}\mu_{a_m}$, while the remaining terms form a quadratic penalty on the means that is separable across neurons. By the inequality between arithmetic and geometric means, for a given $w$ this penalty is smallest when $w$ is carried by a single neuron, and the covariance equation \eqref{posterior cov a_m} below then excludes every other minimizer (Appendix~\ref{app:orthogexpecAB}). 

Theorem~\ref{thm:orthogexpecAB} below thus guarantees an exact form of the `overpruning' of mean-field BNNs observed by \citet{trippe2018overpruningvariationalbayesianneural}: all neurons but one revert to their prior mean.

\begin{theorem}\label{thm:orthogexpecAB}
    Assume $Q=I_d$. The objective \eqref{eq:minproblemexpanded} admits a global minimizer over $\mathcal{Q}_{\text{MF}}$, and every global minimizer satisfies $(\mu_{a_m},\mu_{b_m})=(0,0)$ for all but at most one neuron $m\in\{1,\ldots,M\}$.
\end{theorem}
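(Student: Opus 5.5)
The plan is to use the reduction sketched in Section~\ref{sec:global-minimizers}: first show that the objective is coercive, then analyse the means for fixed covariances, and finally use the covariance first-order conditions to rule out ties. With $Q=I_d$, write $w=\sum_m\mu_{b_m}\mu_{a_m}$. Then
\[
\sum_i\bigl(y_i-w^\top x_i\bigr)^2=\sum_i y_i^2-2nRw+n\|w\|_2^2 ,
\]
and every quadratic form $x_i^\top S x_i$ summed over $i$ becomes $n\tr(S)$. Hence \eqref{eq:minproblemexpanded} equals
\[
\sum_m\Bigl[c_m\|\mu_{a_m}\|_2^2+e_m\mu_{b_m}^2+d\tr\Sigma_{a_m}-\log|\Sigma_{a_m}|+M\Sigma_{b_m}-\log\Sigma_{b_m}+\alpha\Sigma_{b_m}\tr\Sigma_{a_m}\Bigr]+\alpha\bigl(\|w\|_2^2-2Rw\bigr)+\text{const},
\]
with $c_m=d+\alpha\Sigma_{b_m}$ and $e_m=M+\alpha\tr\Sigma_{a_m}$.

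\emph{Existence.} Every term except $-2\alpha Rw$ is bounded below, and $\alpha(\|w\|^2-2Rw)$ is bounded below as well. The functions $s-\log s$ and $d\tr\Sigma-\log|\Sigma|$ blow up as covariances degenerate or diverge, and $c_m\|\mu_{a_m}\|^2+e_m\mu_{b_m}^2\ge d\|\mu_{a_m}\|^2+M\mu_{b_m}^2$ blows up as the means diverge. The objective is therefore continuous and coercive on the open parameter domain, its sublevel sets are compact, and a global minimizer exists.

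\emph{Means for fixed covariances.} By AM--GM, $c_m\|\mu_{a_m}\|^2+e_m\mu_{b_m}^2\ge 2\sqrt{c_me_m}\,\|\mu_{b_m}\mu_{a_m}\|$. The triangle inequality then gives, for $\kappa=\min_m\sqrt{c_me_m}$,
\[
\sum_m\bigl(c_m\|\mu_{a_m}\|^2+e_m\mu_{b_m}^2\bigr)\ge 2\kappa\|w\|_2 .
\]
The bound $2\kappa\|w\|$ is attained by placing all of $w$ on a single minimizing neuron with balanced norms. Equality throughout forces three things: every neuron with $\mu_{b_m}\mu_{a_m}\neq0$ has $\sqrt{c_me_m}=\kappa$; the products $\mu_{b_m}\mu_{a_m}$ are nonnegative multiples of $w$; and the AM--GM balance $c_m\|\mu_{a_m}\|^2=e_m\mu_{b_m}^2$ holds. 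A neuron that is active with $\mu_{b_m}\mu_{a_m}=0$ contributes a strictly positive penalty but nothing to $w$, so it cannot occur at a minimizer. Consequently, at a global minimizer, either at most one neuron is active, or several active neurons are tied at $\kappa$ and share $w$ in nonnegative proportions.

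\emph{Excluding ties.} This is the main obstacle, since AM--GM alone allows a flat family of optimal mean configurations. Suppose neurons $m\neq m'$ are both active at a global minimizer, and hold the covariances fixed. Transferring a fraction $t$ of $w$ from $m$ to $m'$, with balanced rescaling, leaves the objective unchanged for all $t$ in an interval. Every such point is therefore also a global minimizer, so the covariance stationarity conditions must hold along the whole family. With $Q=I_d$, the condition for $\Sigma_{a_m}$ reads
\[
\Sigma_{a_m}^{-1}=\bigl(d+\alpha\mu_{b_m}^2+\alpha\Sigma_{b_m}\bigr)I_d \qquad\text{(the equation referred to as \eqref{posterior cov a_m})}.
\]
Along the family $\mu_{b_m}^2$ varies strictly with $t$ while $\Sigma_{a_m}$ stays fixed, which contradicts this equation. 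Hence at most one neuron is active. I would double-check that $\mu_{b_m}^2$ is genuinely non-constant under the balanced reparametrization: since $\mu_{b_m}^2=\sqrt{c_m/e_m}\,\|\mu_{b_m}\mu_{a_m}\|$, it is proportional to the share of $w$ carried by neuron $m$, which changes with $t$.
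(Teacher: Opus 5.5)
Your proof is correct and follows essentially the same route as the paper. Both use coercivity for existence, then the AM--GM and triangle-inequality bound $2\kappa\|w\|_2$ on the mean penalty at fixed covariances, and finally the covariance condition \eqref{posterior cov a_m} evaluated at two global minimizers that share all covariances but differ in $\mu_{b_m}^2$. The paper skips your equality-case analysis and compares $q^*$ directly with the configuration that places all of $w$ on a single minimizing neuron, which is in effect an endpoint of your transfer family, and it then uses the stationarity equation for $\mu_{a_m}$ to pass from $\mu_{b_m}=0$ to $\mu_{a_m}=0$.
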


\begin{remark}\label{rem:tempered-one-active}
    Theorem~\ref{thm:orthogexpecAB} also holds for the tempered objective of Section~\ref{sec:likelihoodtempering}, in which $\sigma^2$ is replaced by $T\sigma^2$ (Appendix~\ref{app:tempdownweight}).
\end{remark}

Since the parameter domain is open, global minimizers are stationary points of \eqref{eq:minproblemexpanded}. At a stationary point with at most one active neuron, the first-order conditions read, for every $m$ (Appendix~\ref{app:coordinateupdates}),
\begin{align}
    \mu_{a_m} &= \alpha\mu_{b_m}\Sigma_{a_m}R^\top, \label{posterior mean a_m}\\
    \Sigma_{a_m} &= \frac{I_d}{d+\alpha\left[\mu_{b_m}^2+\Sigma_{b_m}\right]}, \label{posterior cov a_m}\\
    \mu_{b_m} &= \alpha\Sigma_{b_m}R\mu_{a_m}, \label{posterior mean b_m}\\
    \Sigma_{b_m} &= \Bigl(M+\alpha\bigl[\|\mu_{a_m}\|_2^2+\tr(\Sigma_{a_m})\bigr]\Bigr)^{-1}. \label{posterior cov b_m}
\end{align}

These are the stationary conditions of the SimpleVB approximation \citep{nakajimaGlobalAnalyticSolution2013}, adapted to our priors and to $Q=I_d$. We note that in general, $R$ must be replaced in \eqref{posterior mean a_m} and \eqref{posterior mean b_m} by the residual $R-\sum_{k\neq m}\mu_{b_k}\mu_{a_k}^\top$, but the correction vanishes at every global minimiser, by Theorem \ref{thm:orthogexpecAB}.

In particular, every $\Sigma_{a_m}$ is proportional to $I_d$ at a stationary point, so that every global minimizer over $\mathcal{Q}_{\text{MF}}$ belongs to the fully factorized Gaussian family, which is contained in $\mathcal{Q}_{\text{MF}}$. Both families therefore have the same global minimizers, and our results apply verbatim to fully factorized mean-field VI. The full covariance of $q(a_m)$ matters only for general designs, through Remark~\ref{rem:whitening}.

If neuron $j$ is the only possibly active neuron, the independence of the factors gives the predictive moments
\begin{align*}
    \E_q(BA^\top x) = \mu_{b_j}\mu_{a_j}^\top x, \qquad
    \Var_q(BA^\top x) = \mu_{b_j}^2x^\top\Sigma_{a_j}x+\Sigma_{b_j}(\mu_{a_j}^\top x)^2 +\sum_{m=1}^M\Sigma_{b_m}x^\top\Sigma_{a_m}x.
\end{align*}
The mean is carried by a single neuron, whereas all $M$ neurons contribute to the variance, so that the analysis of the predictive distribution requires both the active neuron and the covariances of the inactive ones.

\subsection{Prior Dominance}\label{sec:limitbehaviour}

\begin{lemma}\label{lem:limitdistofbhahxlemma1}
    Assume $Q=I_d$ and let $q$ be a global minimizer of \eqref{eq:minproblemexpanded}. If $M>\alpha^2\|R\|_2^2/d$, then no neuron is active under $q$.
\end{lemma}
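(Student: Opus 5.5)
Suppose for contradiction that neuron $j$ is active under the global minimizer $q$. By Theorem~\ref{thm:orthogexpecAB}, every other neuron is inactive, so the stationarity conditions \eqref{posterior mean a_m}--\eqref{posterior cov b_m} hold for $m=j$ with the residual equal to $R$. I will show that these equations, combined with $M>\alpha^2\|R\|_2^2/d$, force $(\mu_{a_j},\mu_{b_j})=(0,0)$.

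The first step is to substitute \eqref{posterior mean a_m} into \eqref{posterior mean b_m}. By \eqref{posterior cov a_m}, $\Sigma_{a_j}=s_a I_d$ with
\[
s_a=\bigl(d+\alpha(\mu_{b_j}^2+\Sigma_{b_j})\bigr)^{-1}.
\]
Substitution then gives $\mu_{b_j}=\alpha^2\Sigma_{b_j}s_a\|R\|_2^2\,\mu_{b_j}$. If $\mu_{b_j}=0$, then \eqref{posterior mean a_m} gives $\mu_{a_j}=0$, so the neuron is inactive. Hence activity requires $\mu_{b_j}\neq 0$, and therefore
\[
1=\alpha^2\|R\|_2^2\,\Sigma_{b_j}\,s_a .
\]

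The second step bounds the right-hand side using only positivity. From \eqref{posterior cov b_m} we get $\Sigma_{b_j}\le 1/M$, since $\|\mu_{a_j}\|^2+\tr(\Sigma_{a_j})\ge0$. From \eqref{posterior cov a_m} we get $s_a\le 1/d$. Therefore
\[
1\le \frac{\alpha^2\|R\|_2^2}{Md}<1,
\]
where the strict inequality is the hypothesis $M>\alpha^2\|R\|_2^2/d$. This is a contradiction, so neither $\mu_{a_j}$ nor $\mu_{b_j}$ can be nonzero, and no neuron is active.

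The only delicate point is justifying the stationarity equations with residual $R$ at the active neuron. The equations hold because a global minimizer of the objective over an open domain is a stationary point. The replacement of the residual by $R$ holds because Theorem~\ref{thm:orthogexpecAB} guarantees that all other neurons have zero means, which is exactly the setting of the displayed first-order conditions. The degenerate case $R=0$ is covered by the same argument, since then $\mu_{b_j}=0$ directly. Note that the argument never uses global optimality beyond stationarity and the one-active-neuron structure, so the conclusion holds for any such stationary point.
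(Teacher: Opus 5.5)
Your proposal is correct and follows the paper's proof essentially step for step. For an active neuron you substitute the mean equations to obtain $1=\alpha^2\Sigma_{b_j}R\Sigma_{a_j}R^\top$, then use $\Sigma_{b_j}\le 1/M$ and $\Sigma_{a_j}\preceq I_d/d$ to contradict $M>\alpha^2\|R\|_2^2/d$.
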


Beyond the threshold of Lemma \ref{lem:limitdistofbhahxlemma1}, all means therefore vanish, the covariances of the inactive neurons approach their prior values, and the central limit theorem returns the NNGP prior of Proposition~\ref{thm:distBAxprior}. 

\begin{theorem}\label{thm:limitdistofbhahx}
    Assume $Q=I_d$ and, for each $M$, let $q_M$ be a global minimizer of \eqref{eq:minproblemexpanded}. For each fixed $x\in\R^d$, the law of $BA^\top x$ under $q_M$ converges weakly to $\mathcal N(0,x^\top x/d)$ as $M\to\infty$.
\end{theorem}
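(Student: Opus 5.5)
By Lemma~\ref{lem:limitdistofbhahxlemma1}, once $M>\alpha^2\|R\|_2^2/d$ every global minimizer $q_M$ has all means equal to zero. Theorem~\ref{thm:limitdistofbhahx} concerns $M\to\infty$, so we may assume this throughout. Setting $\mu_{a_m}=0$ and $\mu_{b_m}=0$ in the stationarity conditions \eqref{posterior cov a_m} and \eqref{posterior cov b_m} gives
\[
\Sigma_{a_m}=\frac{I_d}{d+\alpha\Sigma_{b_m}},\qquad
\Sigma_{b_m}=\frac{1}{M+\alpha\,\tr(\Sigma_{a_m})}.
\]
The same system holds for every $m$. Write $s=\Sigma_{b_m}$ and $\Sigma_{a_m}=t I_d$ with $t=1/(d+\alpha s)$; a priori $s$ could differ between neurons, but the bounds below are uniform in $m$. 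These equations give $s\le 1/M$ and $t\le 1/d$. Substituting back yields $t\ge 1/(d+\alpha/M)$ and $s\ge 1/(M+\alpha)$, since $\tr(\Sigma_{a_m})\le 1$. So uniformly in $m$,
\[
\frac{1}{M+\alpha}\le \Sigma_{b_m}\le \frac1M,\qquad \frac{1}{d+\alpha/M}\le t_m\le \frac1d .
\]

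Next I compute the law of $BA^\top x=\sum_m b_m a_m^\top x$ under $q_M$. The $b_m$ are independent $\mathcal N(0,\Sigma_{b_m})$ and the $a_m$ are independent $\mathcal N(0,t_m I_d)$, all mutually independent. The cleanest route is to condition on $A$, as in Proposition~\ref{thm:distBAxprior}. Given $A$, $BA^\top x\sim\mathcal N\bigl(0,\sum_m \Sigma_{b_m}(a_m^\top x)^2\bigr)$. I will show that
\[
S_M:=\sum_m \Sigma_{b_m}(a_m^\top x)^2\inp \frac{x^\top x}{d}.
\]
Its mean is $\sum_m\Sigma_{b_m}t_m\,x^\top x$. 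By the bounds above this is squeezed between $\frac{M}{(M+\alpha)(d+\alpha/M)}x^\top x$ and $x^\top x/d$, so it converges to $x^\top x/d$. Its variance is $\sum_m \Sigma_{b_m}^2\,2t_m^2(x^\top x)^2\le 2M\cdot M^{-2}d^{-2}(x^\top x)^2\to0$, so Chebyshev gives convergence in probability.

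To conclude, I use the characteristic function. Conditionally on $A$, $\E[e^{iuBA^\top x}\mid A]=e^{-u^2S_M/2}$. This is bounded by $1$ and converges in probability to $e^{-u^2x^\top x/(2d)}$, so dominated convergence gives convergence of the unconditional characteristic function. Lévy's continuity theorem then yields weak convergence to $\mathcal N(0,x^\top x/d)$.

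The only delicate step is the first one: justifying that the covariances at the global minimizer satisfy the displayed fixed-point equations, and obtaining bounds on them that are uniform in $m$ without assuming the neurons are symmetric. The stationarity conditions \eqref{posterior cov a_m}–\eqref{posterior cov b_m} hold at any interior minimizer, so I would rely on those and on the explicit two-sided bounds rather than on solving the quadratic in $s$ exactly. Everything after that is a standard conditional central limit argument mirroring Proposition~\ref{thm:distBAxprior}.
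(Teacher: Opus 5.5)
Your proof is correct, but it takes a different route from the paper.

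\textbf{The paper's route.} After Lemma~\ref{lem:limitdistofbhahxlemma1}, the paper eliminates $\Sigma_{a_m}$ to get the scalar equation $M\Sigma_{b_m}+\alpha d\,\Sigma_{b_m}/(d+\alpha\Sigma_{b_m})=1$. It shows this has a unique root in $(0,1/M]$, so all neurons share the same covariances. It then applies Lyapunov's CLT with $\delta=2$ to the i.i.d.\ summands $b_ma_m^\top x$, using fourth moments $9\varsigma^4$ and a Lyapunov ratio $9/M$. It finishes with $s_M^2\to x^\top x/d$ and Slutsky, treating $x=0$ separately.

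\textbf{Your route.} You make two substitutions. First, you replace the uniqueness/symmetry step by two-sided bounds on $\Sigma_{b_m}$ and $t_m$ that are uniform in $m$. This is the same device the paper itself uses later in the tempered case $c<1$. Second, you replace the CLT by conditioning on $A$. Exact Gaussianity of $B$ given $A$ reduces the problem to a weak law of large numbers for the conditional variance $S_M$. You get that from a squeeze on $\E S_M$ and an $O(1/M)$ variance bound, and conclude with bounded convergence of $e^{-u^2S_M/2}$ and L\'evy's continuity theorem. This is the mechanism of the paper's proof of Proposition~\ref{thm:distBAxprior}, extended to non-identical variances.

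\textbf{What each buys.} Your version needs only second moments of $S_M$ and handles $x=0$ with no special case. It also does not require the neurons to be exchangeable. The paper's Lyapunov argument does not rely on conditional Gaussianity of $B$ given $A$, only on independence and moment control of the summands. It also produces the common covariance value explicitly, and the same computation is reused for case $c=1$ of Theorem~\ref{thm:main-theorem}.
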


The proofs of Lemma~\ref{lem:limitdistofbhahxlemma1} and Theorem~\ref{thm:limitdistofbhahx} are given in Appendices~\ref{app:limitdistofbhahxlemma1} and~\ref{app:limitdistofbhahxproof}.

Theorem~\ref{thm:limitdistofbhahx} is the shallow linear case of Theorem~1 of \citet{cokerWideMeanFieldBayesian2022a}, which covers odd Lipschitz activations and thus the identity. Appendix~J of that paper also treats the predictive mean of linear networks. The next section studies how likelihood tempering changes this limit.

\section{Likelihood Tempering}\label{sec:likelihoodtempering}

\subsection{Tempered Variational Objective}\label{sec:tempered-objective}
For a temperature $T>0$, the tempered posterior raises the likelihood to the power $1/T$,
\begin{equation*}
    \pi_T(A,B\mid X^n,Y^n)\propto p(A,B)\,p(Y^n\mid X^n,A,B)^{1/T},
\end{equation*}
a form of generalized Bayesian updating \citep{bissiriGeneralFrameworkUpdating2016a}. With the Gaussian likelihood \eqref{eq:model-equation}, 
\begin{equation*}
\pi_T\propto p(A,B)\exp\{-S(A,B)/(2T\sigma^2)\}, \qquad S(A,B)=\sum_{i=1}^n(y_i-BA^\top x_i)^2,
\end{equation*}
where $S(A,B)$ is the residual sum of squares, so that $\pi_T$ is the exact posterior of the same model with noise variance $T\sigma^2$. Only the likelihood is tempered, unlike in the cold posteriors of \citet{wenzelHowGoodBayes2020}, where the prior is raised to the same power. We call the tempering \emph{cold} when $T<1$, and the data then receive more weight than in the untempered posterior.

The tempered variational approximation $q_T^*$ is a global minimizer of $\text{KL}(q\,\|\,\pi_T(\cdot\mid X^n,Y^n))$ over $\mathcal{Q}_{\text{MF}}$, or equivalently of
\begin{equation}\label{eq:tempered-elbo}
    T\,\text{KL}(q\,\|\,p)-\E_q[\log p(Y^n\mid X^n,A,B)],
\end{equation}
so that tempering down-weights the KL term of the ELBO by the factor $T$ \citep{yang$alpha$variationalInference2020}. Up to a positive factor and an additive constant, \eqref{eq:tempered-elbo} is the objective \eqref{eq:minproblemexpanded} with $\sigma^2$ replaced by $T\sigma^2$ (Appendix~\ref{app:tempdownweight}). Theorem~\ref{thm:orthogexpecAB} and the first-order conditions \eqref{posterior mean a_m} to \eqref{posterior cov b_m} therefore hold for $q_T^*$ with $\sigma^2$ replaced by $T\sigma^2$ (Remark~\ref{rem:tempered-one-active}).

We let the temperature decrease with the width as $T_M=\tau/M^c$, with constants $\tau,c>0$, and we write $q_{T_M}^*$ for a global minimizer at width $M$, with $\mu_{a_m}$, $\Sigma_{a_m}$, $\mu_{b_m}$ and $\Sigma_{b_m}$ for its moments, without indicating their dependence on $M$.

\subsection{Phase Transitions}\label{sec:phase-transitions}
The means and the covariances of the variational factors respond to tempering at different scales. For the means, the comparison detailed in Appendix~\ref{app:means-threshold} gives an explicit threshold.

\begin{lemma}\label{lem:tempered-zero-means}
    Assume $Q=I_d$ and let $\tau_*=\alpha\|R\|_2/\sqrt d$. For every width $M$ and every temperature $T\geq\tau_*/\sqrt M$, all the means of every global minimizer of the tempered objective vanish.
\end{lemma}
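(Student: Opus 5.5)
The plan is to reduce the statement to the first-order conditions \eqref{posterior mean a_m}--\eqref{posterior cov b_m}, which by Remark~\ref{rem:tempered-one-active} hold at every global minimizer of the tempered objective once $\sigma^2$ is replaced by $T\sigma^2$, that is, once $\alpha$ is replaced by $\alpha_T=\alpha/T$. The conclusion should then follow from a direct self-consistency argument, in the spirit of Lemma~\ref{lem:limitdistofbhahxlemma1}. I do not simply invoke that lemma with $\alpha_T$: its hypothesis $M>\alpha_T^2\|R\|_2^2/d$ is equivalent to $T>\tau_*/\sqrt M$, a strict inequality, whereas the statement includes the boundary case $T=\tau_*/\sqrt M$. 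So I will redo the argument and keep track of strictness.

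Fix a global minimizer $q_T^*$. By Theorem~\ref{thm:orthogexpecAB} and Remark~\ref{rem:tempered-one-active}, at most one neuron $j$ is active, and for it the uncorrected stationary equations hold. From \eqref{posterior cov a_m}, $\Sigma_{a_j}=s_aI_d$ with $s_a=1/(d+\alpha_T[\mu_{b_j}^2+\Sigma_{b_j}])$. Suppose, for contradiction, that $(\mu_{a_j},\mu_{b_j})\neq(0,0)$.

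\begin{itemize}
\item First, \eqref{posterior mean a_m} gives $\mu_{a_j}=\alpha_T\mu_{b_j}s_aR^\top$. Hence $\mu_{b_j}=0$ forces $\mu_{a_j}=0$. Likewise, \eqref{posterior mean b_m} shows that $\mu_{a_j}=0$ forces $\mu_{b_j}=0$. So both means are nonzero.
\item Substituting the expression for $\mu_{a_j}$ into \eqref{posterior mean b_m} gives $\mu_{b_j}=\alpha_T^2 s_a\Sigma_{b_j}\|R\|_2^2\,\mu_{b_j}$. Dividing by $\mu_{b_j}\neq0$ yields the balance identity
\[
1=\alpha_T^2\|R\|_2^2\,s_a\Sigma_{b_j}.
\]
\end{itemize}

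The key step, and the only delicate one, is to bound $s_a\Sigma_{b_j}$ strictly by $1/(dM)$. From \eqref{posterior cov a_m}, $s_a<1/d$, since $\alpha_T(\mu_{b_j}^2+\Sigma_{b_j})>0$. From \eqref{posterior cov b_m}, $\Sigma_{b_j}<1/M$, since $\|\mu_{a_j}\|_2^2+d s_a>0$. Therefore
\[
1<\frac{\alpha_T^2\|R\|_2^2}{dM}=\frac{\alpha^2\|R\|_2^2}{d\,T^2M}=\Bigl(\frac{\tau_*}{T\sqrt M}\Bigr)^2\le 1
\]
whenever $T\geq\tau_*/\sqrt M$. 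This is a contradiction, so every neuron has $(\mu_{a_m},\mu_{b_m})=(0,0)$.

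The main point to watch is exactly this strict inequality: it comes from the covariance equations, whose strict positivity of the variance terms absorbs the boundary case of the threshold. It is also worth checking explicitly that the stationary conditions apply to the tempered minimizer. They do, because the parameter domain is open, the tempered objective is the untempered one with $\sigma^2$ replaced by $T\sigma^2$ (Appendix~\ref{app:tempdownweight}), and the residual correction in \eqref{posterior mean a_m} and \eqref{posterior mean b_m} vanishes when only one neuron is active.
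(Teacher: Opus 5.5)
Your proof is correct, but it differs from the paper's proof of this lemma in Appendix~\ref{app:means-threshold}. It coincides instead with the argument the paper uses later for case~2(a) of Theorem~\ref{thm:main-theorem}, where \eqref{eq:tempered-b-variance-bound} and \eqref{eq:tempered-a-covariance-bound} are made strict and combined with the balance identity \eqref{eq:active-mean-identity}.

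The paper proves the lemma by directly comparing objective values. Take an arbitrary $q$, hold its covariances fixed, and set $w=\sum_m\mu_{b_m}\mu_{a_m}$. The tempered version of \eqref{eq:minproblemexpanded} evaluated at the means of $q$ then exceeds its value at zero means by at least $2(\sqrt{dM}-\alpha\|R\|_2/T)\|w\|_2+\frac{\alpha}{T}\sum_m[\mu_{b_m}^2\tr(\Sigma_{a_m})+\Sigma_{b_m}\|\mu_{a_m}\|_2^2]$. This bound comes from AM--GM and the triangle inequality on the mean penalty, and from Cauchy--Schwarz on the data fit. The last sum is strictly positive whenever some mean is nonzero, so it absorbs the boundary case $T=\tau_*/\sqrt M$, just as your strict covariance bounds do.

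What each route buys:
\begin{itemize}
\item The paper's route needs neither Theorem~\ref{thm:orthogexpecAB} nor stationarity. It applies with any number of active neurons, and it turns the ``KL cost versus likelihood gain'' heuristic for $\tau_*$ into a proof.
\item Your route relies on the one-active-neuron structure, whose proof is the nontrivial step, and on the first-order conditions. In exchange it is shorter, and it excludes every stationary point with a single active neuron when $T\geq\tau_*/\sqrt M$, not only global minimizers. It also runs on the same identity as the rest of the proof of Theorem~\ref{thm:main-theorem}.
\end{itemize}

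One small point: the strictness of your final inequality needs $R\neq0$. The balance identity $1=\alpha_T^2\|R\|_2^2s_a\Sigma_{b_j}$ already forces this, so nothing is missing.
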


Fix the covariances and write $w=\sum_m\mu_{b_m}\mu_{a_m}$, as in Section~\ref{sec:global-minimizers}. Relative to zero means, the KL term of \eqref{eq:tempered-elbo} increases by
\begin{equation*}
    \frac T2\sum_{m=1}^M\bigl(d\|\mu_{a_m}\|_2^2+M\mu_{b_m}^2\bigr)\geq T\sqrt{dM}\,\|w\|_2,
\end{equation*}
by the AM-GM inequality and the triangle inequality, with equality for a single active neuron with $d\|\mu_{a_m}\|_2^2=M\mu_{b_m}^2$ (Theorem~\ref{thm:orthogAB}). The expected negative log-likelihood decreases by at most $\alpha\|R\|_2\|w\|_2$, by the Cauchy-Schwarz inequality. The two bounds balance at $T=\tau_*/\sqrt M$, which is the threshold of Lemma~\ref{lem:tempered-zero-means}. For $T=1$, Lemma~\ref{lem:tempered-zero-means} is Lemma~\ref{lem:limitdistofbhahxlemma1}. Under the schedule $T_M=\tau/M^c$, it shows that the predictive mean can depart from zero only if $c>1/2$, or if $c=1/2$ and $\tau<\tau_*$.

For the covariances, consider an inactive neuron and write $s_m^{(M)}=M\Sigma_{b_m}$. With $\sigma^2$ replaced by $T_M\sigma^2$, \eqref{posterior cov a_m} and \eqref{posterior cov b_m} become
\begin{align*}
    \Sigma_{a_m} &= \frac{I_d}{d+\frac{\alpha}{MT_M}s_m^{(M)}},\\
    s_m^{(M)} &= \Bigl(1+\frac{\alpha}{MT_M}\tr(\Sigma_{a_m})\Bigr)^{-1},
\end{align*}
which involve the temperature only through $\alpha/(MT_M)=(\alpha/\tau)M^{c-1}$. For $c<1$, the inactive neurons therefore keep their prior covariances in the limit, $s_m^{(M)}\to1$ and $\Sigma_{a_m}\to I_d/d$, and their total contribution to the predictive variance converges to $v_0(x)$. The active neuron, if any, contributes a variance of at most $2T_Mx^\top x/\alpha$ (Appendix~\ref{app:proof-phase-transition}). The predictive variance can thus only change if $c\geq1$. Both necessary conditions are sharp, and the next theorem identifies the limit in every regime.

\begin{theorem}\label{thm:main-theorem}
    Assume that $Q=I_d$ and $R\neq0$, let $T_M=\tau/M^c$ with $\tau,c>0$, and let $\tau_*$ be as in Lemma~\ref{lem:tempered-zero-means}. Let $s$ be the unique solution in $(0,1)$ of
    \begin{equation}\label{eq:s-fixed-point}
        s=\Bigl(1+\frac{\alpha d}{\alpha s+\tau d}\Bigr)^{-1},
    \end{equation}
    and set $m_\infty(x)=Rx$, $m_{1/2}(x)=(1-\tau/\tau_*)Rx$ and $v_1(x)=x^\top x/(\alpha/\tau+d/s)$. For each fixed $x\in\R^d$ and any choice of global minimizers $q_{T_M}^*$, the law of $BA^\top x$ under $q_{T_M}^*$ converges weakly, as $M\to\infty$, to the following limits.
    \begin{enumerate}
    \item If $c<1/2$, to the prior $\mathcal N(0,v_0(x))$.
    \item If $c=1/2$:
    \begin{enumerate}
        \item to the prior $\mathcal N(0,v_0(x))$ if $\tau\geq\tau_*$,
        \item to $\mathcal N(m_{1/2}(x),v_0(x))$ if $\tau<\tau_*$.
    \end{enumerate}
    \item If $1/2<c<1$, to $\mathcal N(m_\infty(x),v_0(x))$.
    \item If $c=1$, to $\mathcal N(m_\infty(x),v_1(x))$.
    \item If $c>1$, to the point mass at $m_\infty(x)$.
\end{enumerate}
\end{theorem}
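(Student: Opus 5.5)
The plan is to exploit the structure from Theorem~\ref{thm:orthogexpecAB} and Remark~\ref{rem:tempered-one-active}. Every global minimizer $q_{T_M}^*$ has at most one active neuron $j$, and all factors are independent. So under $q_{T_M}^*$ we can write
\[
BA^\top x = b_ja_j^\top x + \sum_{m\neq j} b_m a_m^\top x,
\]
with independent summands. Throughout, write $\lambda_M=\alpha/(MT_M)=(\alpha/\tau)M^{c-1}$ and $\alpha_T=\alpha/T_M$ for the tempered precision. The proof then has three parts: (i) the inactive neurons, which carry the variance; (ii) the active neuron, which carries the mean; (iii) assembling the two by Slutsky's lemma.

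\textbf{Inactive neurons.} For an inactive neuron, the tempered versions of \eqref{posterior cov a_m} and \eqref{posterior cov b_m} reduce to $\Sigma_{a_m}=I_d/(d+\lambda_M s)$ with $s=s_m^{(M)}$ solving
\[
s=\Bigl(1+\frac{\lambda_M d}{d+\lambda_M s}\Bigr)^{-1}.
\]
The right-hand side is increasing in $s$ and maps $[0,1]$ into $(0,1)$. I will show the equation has a unique root by checking that the difference of the two sides crosses zero exactly once; this is a quadratic in $s$. Uniqueness makes all inactive neurons identical, with variance contribution
\[
\frac{s}{M}\,\frac{x^\top x}{d+\lambda_M s}
\]
each. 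Next, the summand $b_ma_m^\top x$ is a product of independent centred Gaussians, with finite fourth moment of order $M^{-2}$ times the squared variance. The Lyapunov CLT for the triangular array of $M-1$ such terms therefore gives a centred Gaussian limit, with variance
\[
\lim_{M\to\infty}\frac{s\,x^\top x}{d+\lambda_M s}.
\]
The three regimes of $c$ give:
\begin{itemize}
\item $c<1$: $\lambda_M\to0$, so $s\to1$ and the limit is $v_0(x)$.
\item $c=1$: $\lambda_M=\alpha/\tau$, so $s$ is exactly the root of \eqref{eq:s-fixed-point}, and $s x^\top x/(d+\alpha s/\tau)=v_1(x)$.
\item $c>1$: $\lambda_M\to\infty$. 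Setting $u=\lambda_M s$, the equation becomes $s+du/(d+u)=1$. This forces $s\to0$ with $s/(d+u)\to0$, including the case $d=1$ where $u\to\infty$, so the variance vanishes.
\end{itemize}

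\textbf{Active neuron.} With $\alpha$ replaced by $\alpha_T$, the equations \eqref{posterior mean a_m}--\eqref{posterior cov b_m} for neuron $j$ are scalar after one reduction. Indeed, $\mu_{a_j}$ must be parallel to $R^\top$, since $\Sigma_{a_j}$ is isotropic. So I write $\mu_{a_j}=\rho R^\top/\|R\|_2$ and reduce to a system in $(\rho,\mu_{b_j},\Sigma_{a_j},\Sigma_{b_j})$. Eliminating $\mu_{a_j}$ gives the consistency condition
\[
1=\alpha_T^2\,\Sigma_{b_j}\,\|R\|_2^2\,\frac{1}{d+\alpha_T(\mu_{b_j}^2+\Sigma_{b_j})}.
\]
I will solve this together with the two covariance equations asymptotically. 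The variance of this single term is at most $2T_Mx^\top x/\alpha\to0$ (Appendix~\ref{app:proof-phase-transition}), so only the mean $\mu_{b_j}\mu_{a_j}^\top x$ matters.
\begin{itemize}
\item $c=1/2$: the balanced scaling $d\|\mu_{a_j}\|^2\approx M\mu_{b_j}^2$ should yield $\mu_{b_j}\mu_{a_j}^\top x\to(1-\tau/\tau_*)Rx$ when $\tau<\tau_*$.
\item $c>1/2$: the shrinkage factor tends to $1$, giving $Rx$.
\end{itemize}
Cases 1 and 2(a) follow directly from Lemma~\ref{lem:tempered-zero-means}, since then no neuron is active.

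\textbf{Main obstacle.} The hardest step is showing that, in cases 2(b) through 5, the global minimizer actually has an active neuron, and that it is the nontrivial stationary branch rather than the all-zero one (or another root). To do this I will compare the value of the tempered objective at the zero-mean stationary point with its value at the active branch. Profiling out the means via the AM-GM and Cauchy--Schwarz bounds used for Lemma~\ref{lem:tempered-zero-means} shows that the optimal reduction is roughly
\[
(\alpha\|R\|_2-T\sqrt{dM})\,\|w\|_2-O(\|w\|_2^2).
\]
This is strictly negative at the active branch once $T<\tau_*/\sqrt M$, which holds for all large $M$ whenever $c>1/2$, or $c=1/2$ and $\tau<\tau_*$. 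This comparison rules out the zero branch. A continuity and uniqueness argument on the reduced scalar system then identifies the limit of $w$, and with it the limit of the mean.

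Finally, Slutsky's lemma combines the deterministic limit of the active mean, the vanishing active fluctuation, and the Gaussian CLT limit of the inactive sum. This yields each of the five stated limits, independently of which global minimizer is chosen.
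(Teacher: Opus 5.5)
Your plan is correct and has the same skeleton as the paper's proof: at most one active neuron, a fixed point plus Lyapunov CLT for the inactive neurons, instability of the all-zero stationary point, asymptotics of the active branch, and Slutsky. Getting cases 1 and 2(a) directly from Lemma~\ref{lem:tempered-zero-means} is legitimate. It is also shorter than the paper's route, which re-derives the vanishing means from $1=L_M^2\Sigma_{b_m}R\Sigma_{a_m}R^\top$ together with $\Sigma_{b_m}\le 1/M$ and $\Sigma_{a_m}\preceq I_d/d$ (here $L_M=\alpha/T_M$, your $\alpha_T$).

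One step is misstated, though harmlessly. Suppose you perturb the zero-mean stationary point, holding its covariances $v_aI_d$ and $v_b$ fixed, by a balanced single neuron aligned with $R^\top$. Then the coupling terms $L_M(\mu_b^2\tr\Sigma_a+\Sigma_b\|\mu_a\|_2^2)$ are of \emph{first} order in $\|w\|_2$, not $O(\|w\|_2^2)$.
\begin{itemize}
\item Done exactly, the optimal change of the objective on the $\mathcal L^{\text{VB}}$ scale is $2\|w\|_2\bigl((v_av_b)^{-1/2}-L_M\|R\|_2\bigr)+L_M\|w\|_2^2$, with $(v_av_b)^{-1}=dM+L_M(s+d)$.
\item In your normalization, $T\sqrt{dM}$ must therefore be replaced by $T_M\sqrt{dM+L_M(s+d)}$. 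The criterion becomes $L_M^2\|R\|_2^2v_av_b>1$, which is exactly the paper's negative-determinant condition for the Hessian at zero.
\item This criterion holds for all large $M$ in cases 2(b)--5. At $c=1/2$ it holds because $L_M/M\to0$ and $\tau<\tau_*$. For $c>1/2$ it holds because $T_M\sqrt{dM}\to0$ and $T_M\sqrt{L_M(s+d)}=\sqrt{\alpha T_M(s+d)}\to0$.
\item However, your claim ``strictly negative once $T<\tau_*/\sqrt M$'' is not true at fixed $M$.
\end{itemize}

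Where you genuinely differ from the paper is the active branch, and your route closes more directly than you indicate.
\begin{itemize}
\item Write $u=1/v_a$ for the active neuron. The four stationarity equations then eliminate to $(M/L_M)u^2+(d-1)u-L_Md\|R\|_2^2=0$, which is the equation the paper uses only in its simulation appendix.
\item Its unique positive root satisfies $u_M\sim d(\tau_*/\tau)M^{c-1/2}$, and $w=\mu_b\mu_a=\bigl(1-d/u_M-1/(L_M\|R\|_2^2)\bigr)R^\top$.
\item This gives $(1-\tau/\tau_*)R^\top$ at $c=1/2$ and $R^\top$ for $c>1/2$ in one stroke. It also shows that the active stationary point is unique up to sign and choice of index.
\item Your phrases ``should yield'' and ``shrinkage factor tends to $1$'' need precisely this computation to become proofs.
\end{itemize}
The paper never solves the system. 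It tracks $\gamma_M=L_M\mu_b^2$, proves boundedness (for $c=1/2$) or divergence (for $c>1/2$) by contradiction using the second identity of Lemma~\ref{lem:activebranchidentity}, and identifies the limit through subsequences. Your version buys uniqueness of the active branch and explicit rates. The paper's version avoids the algebra at the cost of a compactness argument.
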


By the time the variance responds, the mean has already passed the NNGP posterior mean and reached the least-squares prediction. The fixed point \eqref{eq:s-fixed-point} solves a quadratic equation, whose root in $(0,1)$ is
\begin{equation*}
    s=\frac{2\tau d}{\tau d+\alpha(d-1)+\sqrt{(\tau d+\alpha(d-1))^2+4\alpha\tau d}},
\end{equation*}
and $v_1(x)$ increases from $0$ to $v_0(x)$ as $\tau$ ranges over $(0,\infty)$. A proof of Theorem~\ref{thm:main-theorem} is given in Appendix~\ref{app:proof-phase-transition}.

For comparison, tempering the NNGP posterior itself replaces $\alpha$ by $\alpha/T_M$ in \eqref{eq:nngp-posterior}, so that it converges to the point mass at $m_\infty(x)$ for every $c>0$, a limit that the variational predictive reaches only for $c>1$. The recovery of the untempered NNGP posterior moments is characterized as follows.

\begin{corollary}\label{cor:recovery}
    Under the assumptions of Theorem~\ref{thm:main-theorem}, the following holds.
    \begin{enumerate}
        \item For $c=1/2$, the limiting mean equals $m_{\text{NNGP}}(x)$ for all $x$ if and only if $\tau=\tau_\mu$, where $\tau_\mu=\tau_*d/(\alpha+d)$.
        \item For $c=1$, the limiting variance equals $v_{\text{NNGP}}(x)$ for all $x$ if and only if $\tau=\tau_v$, where
        \begin{equation*}
            \tau_v=\frac{d+1+\sqrt{(d+1)^2-4d\alpha/(d+\alpha)}}{2}\in(1,d+1).
        \end{equation*}
        \item No schedule $T_M=\tau/M^c$ recovers both $m_{\text{NNGP}}(x)$ and $v_{\text{NNGP}}(x)$ for all $x$.
    \end{enumerate}
\end{corollary}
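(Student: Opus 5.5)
The corollary follows by comparing the limits of Theorem~\ref{thm:main-theorem} with the closed forms $m_{\text{NNGP}}(x)=\alpha Rx/(\alpha+d)$ and $v_{\text{NNGP}}(x)=x^\top x/(\alpha+d)$ from Section~\ref{sec:nngp}. Throughout we use $R\neq0$ and $\alpha,d>0$. These give some $x$ with $Rx\neq0$, and $m_{\text{NNGP}}$ is then a factor $\alpha/(\alpha+d)\in(0,1)$ times $Rx$. So $m_{\text{NNGP}}$ differs from both $0$ and $m_\infty(x)=Rx$ at that $x$. Likewise, for $x\neq0$, $v_{\text{NNGP}}(x)$ differs from both $v_0(x)=x^\top x/d$ and $0$.

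\emph{Part 1.} At $c=1/2$, the limiting mean is $0$ if $\tau\ge\tau_*$, which fails by the remark above. Otherwise it is $(1-\tau/\tau_*)Rx$, and matching it to $m_{\text{NNGP}}$ at an $x$ with $Rx\neq0$ forces $1-\tau/\tau_*=\alpha/(\alpha+d)$. This gives $\tau=\tau_*d/(\alpha+d)=\tau_\mu$, which indeed lies below $\tau_*$, so it belongs to the branch assumed. Conversely, this $\tau$ gives equality for all $x$.

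\emph{Part 2.} At $c=1$, for $x\neq0$, we need $\alpha/\tau+d/s=\alpha+d$, where $s$ solves \eqref{eq:s-fixed-point}. I would eliminate $s$ as follows.
\begin{itemize}
\item The target condition gives $s=d\tau/\bigl(\tau(\alpha+d)-\alpha\bigr)$. Requiring $s\in(0,1)$ forces $\tau>1$.
\item Rewriting \eqref{eq:s-fixed-point} as $d/s=d+\alpha d^2/(\alpha s+\tau d)$ and substituting into the target condition gives $1/\tau+d^2/(\alpha s+\tau d)=1$. Solving for $s$ yields $s=\tau d(d+1-\tau)/\bigl(\alpha(\tau-1)\bigr)$, which requires $\tau<d+1$.
\item Equating the two expressions for $s$ and expanding reduces, after cancellation, to the quadratic
\begin{equation*}
p(\tau)=\tau^2-(d+1)\tau+\frac{d\alpha}{d+\alpha}=0 .
\end{equation*}
\end{itemize}
To select the root, note that $p(1)=-d^2/(\alpha+d)<0$ and $p(d+1)=d\alpha/(d+\alpha)>0$. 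Hence exactly one root lies in $(1,d+1)$, namely the larger one, $\tau_v$. The smaller root lies below $1$ and is excluded by $\tau>1$.

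Every step is reversible for $\tau\in(1,d+1)$. So with $\tau=\tau_v$, the value $s=d\tau/(\tau(\alpha+d)-\alpha)$ lies in $(0,1)$, satisfies \eqref{eq:s-fixed-point}, and by uniqueness of the root in $(0,1)$ it is the $s$ of the theorem. This gives $v_1=v_{\text{NNGP}}$. Alternatively, uniqueness of $\tau_v$ follows from the monotonicity of $v_1$ in $\tau$ stated after the theorem.

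\emph{Part 3.} By the remark in the first paragraph, the mean is recovered only in regime 2(b) with $\tau=\tau_\mu$, where $c=1/2$. The variance is recovered only at $c=1$, because every other regime has limiting variance $v_0$ or $0$. These two requirements cannot hold simultaneously. The only real computation is the algebraic elimination in Part 2, and the point needing care there is showing that the admissible root is the one in $(1,d+1)$.
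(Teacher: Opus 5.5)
Your proposal is correct and essentially matches the paper's proof. Parts 1 and 3 make the same case-by-case comparison with the limits of Theorem~\ref{thm:main-theorem}. In Part 2 you likewise eliminate $s$ to reach $\tau^2-(d+1)\tau+d\alpha/(d+\alpha)=0$ and select the root through its signs at $1$ and $d+1$; the only difference is that you equate two expressions for $s$ instead of substituting $\tilde s$ directly into \eqref{eq:s-fixed-point}, which is cosmetic.
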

The first item solves $1-\tau/\tau_*=\alpha/(\alpha+d)$. For the second, $v_1=v_{\text{NNGP}}$ means $\alpha/\tau+d/s=\alpha+d$, which, inserted in \eqref{eq:s-fixed-point}, gives $\tau^2-(d+1)\tau+d\alpha/(d+\alpha)=0$. The smaller root of this equation lies below $1$ and yields $s\notin(0,1)$. Also, uniqueness follows from the monotonicity of $v_1$ in $\tau$. Note that $\tau_v$ depends on the data only through $\alpha$, whereas $\tau_\mu$ depends on $R$. Details are given in Appendix~\ref{app:proof-recovery}.

The prescriptions of order $1/M$ discussed in Section~\ref{sec:introduction} fall in case~4. Holding the prior fixed, the \emph{data-emphasized ELBO} of \citet{harveyLearningHyperparametersDataEmphasized2026} multiplies the likelihood by the ratio of the number of parameters, $M(d+1)$, to the sample size, which is the schedule $T_M=\tau/M$ with $\tau=n/(d+1)$. Its limiting mean is then the least-squares prediction, and, since $\tau_v<d+1$ and $v_1$ increases with $\tau$, its limiting variance exceeds $v_{\text{NNGP}}(x)$ as soon as $n\geq(d+1)^2$. 


Table~\ref{tab:phase-transition} summarizes the phase transitions.

\begin{table}[ht]
    \centering
    \caption{Limit of the variational predictive distribution of $BA^\top x$ under $T_M=\tau/M^c$ (Theorem~\ref{thm:main-theorem} and Corollary~\ref{cor:recovery}). LS: least squares.}
    \label{tab:phase-transition}
    \renewcommand{\arraystretch}{1.3}
    \begin{tabular*}{\linewidth}{@{\extracolsep{\fill}}ccc@{}}
        \toprule
        \textbf{Condition} & \textbf{Limit} & \textbf{Behaviour} \\
        \midrule
        $c<1/2$                   & $\mathcal N\bigl(0,v_0(x)\bigr)$           & Prior \\
        $c=1/2$, $\tau\geq\tau_*$ & $\mathcal N\bigl(0,v_0(x)\bigr)$           & Prior \\
        $c=1/2$, $\tau<\tau_*$    & $\mathcal N\bigl(m_{1/2}(x),v_0(x)\bigr)$  & NNGP mean iff $\tau=\tau_\mu$ \\
        $1/2<c<1$                 & $\mathcal N\bigl(m_\infty(x),v_0(x)\bigr)$ & LS mean, prior variance \\
        $c=1$                     & $\mathcal N\bigl(m_\infty(x),v_1(x)\bigr)$ & NNGP variance iff $\tau=\tau_v$ \\
        $c>1$                     & $\delta_{m_\infty(x)}$                     & Point mass \\
        \bottomrule
    \end{tabular*}
\end{table}

\section{Simulation Study}\label{sec:simulations}
We conduct an empirical study of the large-but-finite-width behaviour of the tempered variational posterior under the model introduced in Sections~\ref{sec:lbnn} and \ref{sec:likelihoodtempering}. We generate a synthetic dataset with $n = 40$, $d = 2$, and $Q = I_d$, and known noise variance $\sigma^2 = 20$. The same dataset is used for every $M \in \{2^i : i \text{ even}, 4 \leq i \leq 28\}$, and eight tempering scales
\begin{align*}
    (c,\tau)\in\{&
    (1/4,\tau_\mu), (1/2,\tau_*/0.7), (1/2,\tau_\mu), (3/4,\tau_\mu),\\
    &
    (0.95,\tau_v), (1,\tau_v), (1.05,\tau_v), (5/4,\tau_\mu)
    \}.
\end{align*}
We compute the mean and variance of the latent network output at $x^* = (1, -1)^\top / \sqrt 2$, excluding observation noise. 

We employ an iterated conditional modes (ICM) algorithm \citep{besagStatisticalAnalysisDirty1986, bishopPatternRecognitionMachine2006}, Theorem \ref{thm:orthogexpecAB}, Remark \ref{rem:tempered-one-active} and Lemma \ref{lem:tempered-zero-means} to find the optimum $q^*$, iterating over the fixed-point equations \eqref{posterior mean a_m}-\eqref{posterior cov b_m}. $\tau_\mu$ and $\tau_v$ are as in Corollary \ref{cor:recovery}, so that the limiting variational mean at $c = 1/2$ and variance at $c = 1$ respectively agree with those of the untempered NNGP posterior. The dataset construction details, numerical procedure, and verification of global minimality of the variational posterior are given in Appendix~\ref{app:simulation-details}. 

\subsection{Results}

Figure~\ref{fig:phase-transition} shows the computed posterior mean and variance at $x^*$ for the eight tempering schedules. At the largest width, $M = 2^{28}$, every curve is within $0.01$ of its limit in Table~\ref{tab:phase-transition}, except the variances for $c = 0.95$ and $c = 1.05$, which approach their limits slowly because $1/(MT_M) = M^{c-1}/\tau$ changes slowly when $|c-1| = 0.05$.

\begin{figure}[ht]
    \centering
    \includegraphics[width=\linewidth]{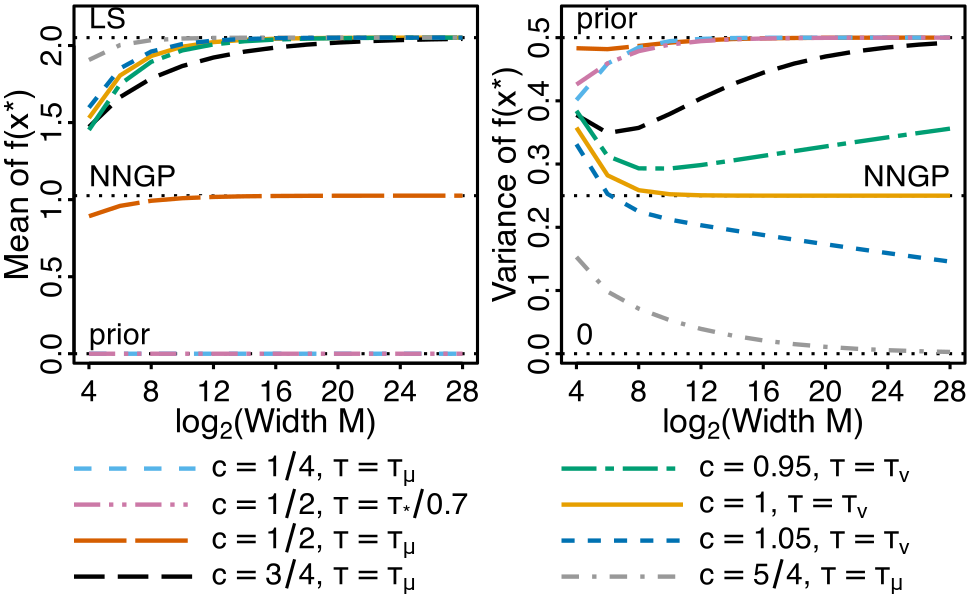}
    \caption{\textit{Exact mean and variance of $f(x^*)$ under the fitted variational distribution for the eight schedules $T_M=\tau/M^c$. Dotted lines indicate the prior parameters, the untempered NNGP posterior ($m_{\text{NNGP}}(x^*)=1.026$, $v_{\text{NNGP}}(x^*)=0.25$), the least-squares prediction $m_\infty(x^*)=2.051$ and zero variance.}}
    \label{fig:phase-transition}
\end{figure}

When $c < 1/2$, the predictive distribution converges to the limiting prior predictive distribution as $M\to\infty$. In particular, the curve for $c = 1/4$ remains close to the prior mean $0$ and the prior variance $1/2$. The same behaviour occurs when $c = 1/2$ and $\tau > \tau_*$: the curve with $c = 1/2$ and $\tau = \tau_*/0.7$ also remains close to the prior reference levels. 

At $c = 1/2$ and $\tau = \tau_\mu$, the posterior mean converges to the untempered NNGP posterior mean, while the posterior variance remains close to the prior variance. For $1/2 < c < 1$, the posterior mean instead approaches the least-squares prediction $Rx^*$. The $c = 3/4$ curve also approaches the prior variance, as expected. The $c = 0.95$ mean also approaches $Rx^*$, but its variance first falls to $0.29$ around $M = 2^{10}$ and then rises up to $0.356$ by $M = 2^{28}$.

At $c = 1$, the untempered NNGP posterior variance is recovered, and the posterior mean approaches the least-squares prediction as well. For $c > 1$, the posterior mean again approaches the least-squares prediction, but the posterior variance collapses towards zero. This is visible in the curves for both $c = 1.05$ and $c = 5/4$, with the variance collapsing more quickly for the larger value of $c$.

\section{Discussion}\label{sec:discussion}
We show that for wide shallow linear BNNs, likelihood tempering provides a partial remedy for prior dominance, in which the variational predictive distribution ignores the data. Under our assumptions, the predictive mean and variance can be recovered at different scales of temperature, $M^{-1/2}$ for mean and $M^{-1}$ for variance. No choice of $\tau,c>0$ in $T_M=\tau/M^c$ recovers both untempered NNGP posterior predictive moments simultaneously. A similar trade-off between recovering the predictive mean and variance under likelihood tempering is discussed by \citet{odgersGaussianMeanField2026} for conjugate linear regression. It remains to study variational approximations of cold posteriors, where the likelihood and prior are jointly rescaled, to determine at which temperature scaling the different predictive moments can potentially be recovered.

While we focus on the predictive distribution at a single input for simplicity, the multivariate central limit theorem would extend our results to finite marginals of this predictive process.

A natural extension would be to investigate if similar results in mean and variance recovery under different tempering scales would be present in nonlinear or deeper networks. Another direction to consider would be to investigate a larger variational family that would allow dependence between each neuron's input and output weights, and not necessarily impose the mean-field assumption. Finally, our results concern global minimizers of the variational objective, and whether gradient-based optimisation reaches them at large width, and whether it exhibits the same phase transitions as in Theorem \ref{thm:main-theorem}, is a separate question.

\bibliographystyle{apalike}
\bibliography{references.bib}

@article{bleiVariationalInferenceReview2017,
  title = {Variational {{Inference}}: {{A Review}} for {{Statisticians}}},
  shorttitle = {Variational {{Inference}}},
  author = {Blei, David M. and Kucukelbir, Alp and McAuliffe, Jon D.},
  year = 2017,
  month = apr,
  journal = {Journal of the American Statistical Association},
  volume = {112},
  number = {518},
  eprint = {1601.00670},
  primaryclass = {stat.CO},
  pages = {859--877},
  issn = {0162-1459, 1537-274X},
  doi = {10.1080/01621459.2017.1285773},
  urldate = {2026-01-30},
  archiveprefix = {arXiv},
  url = {https://doi.org/10.1080/01621459.2017.1285773},
}

@inproceedings{cokerWideMeanFieldBayesian2022a,
  title = {Wide {{Mean-Field Bayesian Neural Networks Ignore}} the {{Data}}},
  booktitle = {Proceedings of {{The}} 25th {{International Conference}} on {{Artificial Intelligence}} and {{Statistics}}},
  author = {Coker, Beau and Bruinsma, Wessel P. and Burt, David R. and Pan, Weiwei and {Doshi-Velez}, Finale},
  year = 2022,
  month = mar,
  pages = {5276--5333},
  publisher = {PMLR},
  issn = {2640-3498},
  urldate = {2026-02-18},
  langid = {english},
  volume = {151},
  series = {Proceedings of Machine Learning Research},
  url = {https://proceedings.mlr.press/v151/coker22a.html},
}

@misc{hronExactPosteriorDistributions2020,
  title = {Exact Posterior Distributions of Wide {{Bayesian}} Neural Networks},
  author = {Hron, Jiri and Bahri, Yasaman and Novak, Roman and Pennington, Jeffrey and {Sohl-Dickstein}, Jascha},
  year = 2020,
  month = nov,
  eprint = {2006.10541},
  primaryclass = {stat.ML},
  doi = {10.48550/arXiv.2006.10541},
  urldate = {2026-02-18},
  archiveprefix = {arXiv},
  url = {https://arxiv.org/abs/2006.10541},
  howpublished = {arXiv preprint},
  note = {Accepted at the {ICML} 2020 Workshop on Uncertainty and Robustness in Deep Learning (non-archival)},
}

@inproceedings{jacotNeuralTangentKernel2018,
 author = {Jacot, Arthur and Gabriel, Franck and Hongler, Clement},
 booktitle = {Advances in Neural Information Processing Systems},
 editor = {S. Bengio and H. Wallach and H. Larochelle and K. Grauman and N. Cesa-Bianchi and R. Garnett},
 pages = {},
 publisher = {Curran Associates, Inc.},
 title = {Neural Tangent Kernel: Convergence and Generalization in Neural Networks},
 url = {https://proceedings.neurips.cc/paper_files/paper/2018/file/5a4be1fa34e62bb8a6ec6b91d2462f5a-Paper.pdf},
 volume = {31},
 year = {2018}
}

@article{nakajimaTheoreticalAnalysisBayesian2011,
  title = {Theoretical {{Analysis}} of {{Bayesian Matrix Factorization}}},
  author = {Nakajima, Shinichi and Sugiyama, Masashi},
  year = 2011,
  journal = {Journal of Machine Learning Research},
  volume = {12},
  number = {79},
  pages = {2583--2648},
  issn = {1533-7928},
  urldate = {2026-02-04},
  month = sep,
  url = {https://www.jmlr.org/papers/v12/nakajima11a.html},
}

@article{nakajimaVariationalBayesSolution2007,
  title = {Variational {{Bayes}} Solution of Linear Neural Networks and Its Generalization Performance},
  author = {Nakajima, Shinichi and Watanabe, Sumio},
  year = 2007,
  month = apr,
  journal = {Neural Computation},
  volume = {19},
  number = {4},
  pages = {1112--1153},
  issn = {0899-7667},
  doi = {10.1162/neco.2007.19.4.1112},
  langid = {english},
  pmid = {17348777},
  url = {https://doi.org/10.1162/neco.2007.19.4.1112},
}

@article{jospinHandsOnBayesianNeural2022,
  title = {Hands-{On} {Bayesian} {Neural} {Networks}—{A} {Tutorial} for {Deep} {Learning} {Users}},
  volume = {17},
  copyright = {https://ieeexplore.ieee.org/Xplorehelp/downloads/license-information/IEEE.html},
  issn = {1556-603X, 1556-6048},
  url = {https://ieeexplore.ieee.org/document/9756596/},
  doi = {10.1109/MCI.2022.3155327},
  language = {en},
  number = {2},
  urldate = {2026-09-14},
  journal = {IEEE Computational Intelligence Magazine},
  author = {Jospin, Laurent Valentin and Laga, Hamid and Boussaid, Farid and Buntine, Wray and Bennamoun, Mohammed},
  month = may,
  year = {2022},
  pages = {29--48},
}

@inproceedings{kendallWhatUncertaintiesWe2017,
  title = {What {Uncertainties} {Do} {We} {Need} in {Bayesian} {Deep} {Learning} for {Computer} {Vision}?},
  volume = {30},
  url = {https://proceedings.neurips.cc/paper_files/paper/2017/hash/2650d6089a6d640c5e85b2b88265dc2b-Abstract.html},
  urldate = {2026-09-14},
  booktitle = {Advances in {Neural} {Information} {Processing} {Systems}},
  publisher = {Curran Associates, Inc.},
  author = {Kendall, Alex and Gal, Yarin},
  year = {2017},
  pages = {5574--5584},
}

@incollection{nealPriorsInfiniteNetworks1996,
  address = {New York, NY},
  title = {Priors for {Infinite} {Networks}},
  isbn = {978-1-4612-0745-0},
  url = {https://doi.org/10.1007/978-1-4612-0745-0_2},
  doi = {10.1007/978-1-4612-0745-0_2},
  language = {en},
  urldate = {2026-09-15},
  booktitle = {Bayesian {Learning} for {Neural} {Networks}},
  publisher = {Springer},
  author = {Neal, Radford M.},
  year = {1996},
  pages = {29--53},
  series = {Lecture Notes in Statistics},
  volume = {118},
}

@inproceedings{matthews2018gaussianprocessbehaviourwide,
  title     = {{Gaussian {P}rocess {B}ehaviour in {W}ide {D}eep {N}eural {N}etworks}},
  author    = {de G. Matthews, Alexander G. and Hron, Jiri and Rowland, Mark and Turner, Richard E. and Ghahramani, Zoubin},
  booktitle = {International Conference on Learning Representations},
  year      = {2018},
  url       = {https://mlanthology.org/iclr/2018/degmatthews2018iclr-gaussian/}
}

@misc{trippe2018overpruningvariationalbayesianneural,
  title = {Overpruning in variational bayesian neural networks},
  url = {https://arxiv.org/abs/1801.06230},
  author = {Trippe, Brian and Turner, Richard},
  year = {2018},
  note = {Presented at the Advances in Approximate Bayesian Inference workshop at {NIPS} 2017},
  eprint = {1801.06230},
  archiveprefix = {arXiv},
  primaryclass = {stat.ML},
  doi = {10.48550/arXiv.1801.06230},
  howpublished = {arXiv preprint},
}

@inproceedings{foongExpressivenessApproximateInference2020,
  title = {On the {Expressiveness} of {Approximate} {Inference} in {Bayesian} {Neural} {Networks}},
  volume = {33},
  url = {https://proceedings.neurips.cc/paper/2020/hash/b6dfd41875bc090bd31d0b1740eb5b1b-Abstract.html},
  urldate = {2026-09-15},
  booktitle = {Advances in {Neural} {Information} {Processing} {Systems}},
  publisher = {Curran Associates, Inc.},
  author = {Foong, Andrew and Burt, David and Li, Yingzhen and Turner, Richard},
  year = {2020},
  pages = {15897--15908},
}

@inproceedings{leeDeepNeuralNetworks2018,
  title = {Deep {Neural} {Networks} as {Gaussian} {Processes}},
  url = {https://openreview.net/forum?id=B1EA-M-0Z},
  language = {en},
  urldate = {2026-09-15},
  author = {Lee, Jaehoon and Bahri, Yasaman and Novak, Roman and Schoenholz, Samuel S. and Pennington, Jeffrey and Sohl-Dickstein, Jascha},
  year = {2018},
  booktitle = {International Conference on Learning Representations},
}

@inproceedings{sunFUNCTIONALVARIATIONALBAYESIAN2018,
  title     = {{Functional Variational Bayesian Neural Networks}},
  author    = {Sun, Shengyang and Zhang, Guodong and Shi, Jiaxin and Grosse, Roger},
  booktitle = {International Conference on Learning Representations},
  year      = {2019},
  url       = {https://mlanthology.org/iclr/2019/sun2019iclr-functional/}
}

@inproceedings{rudnerTractableFunctionSpaceVariational2022,
  title = {Tractable {Function}-{Space} {Variational} {Inference} in {Bayesian} {Neural} {Networks}},
  url = {https://proceedings.neurips.cc/paper_files/paper/2022/hash/8ea50bf458f6070548b11babbe0bf89b-Abstract-Conference.html},
  language = {en},
  urldate = {2026-09-16},
  author = {Rudner, Tim G. J. and Chen, Zonghao and Teh, Yee Whye and Gal, Yarin},
  year = {2022},
  booktitle = {Advances in Neural Information Processing Systems},
  volume = {35},
  pages = {22686--22698},
  publisher = {Curran Associates, Inc.},
  doi = {10.52202/068431-1648},
}

@inproceedings{zhangNoisyNaturalGradient2018,
  title = {Noisy {Natural} {Gradient} as {Variational} {Inference}},
  issn = {2640-3498},
  url = {https://proceedings.mlr.press/v80/zhang18l.html},
  language = {en},
  urldate = {2026-09-16},
  booktitle = {Proceedings of the 35th {International} {Conference} on {Machine} {Learning}},
  publisher = {PMLR},
  author = {Zhang, Guodong and Sun, Shengyang and Duvenaud, David and Grosse, Roger},
  month = jul,
  year = {2018},
  pages = {5852--5861},
  volume = {80},
  series = {Proceedings of Machine Learning Research},
}

@inproceedings{osawaPracticalDeepLearning2019,
  title = {Practical {Deep} {Learning} with {Bayesian} {Principles}},
  volume = {32},
  url = {https://proceedings.neurips.cc/paper_files/paper/2019/hash/b53477c2821c1bf0da5d40e57b870d35-Abstract.html},
  urldate = {2026-09-14},
  booktitle = {Advances in {Neural} {Information} {Processing} {Systems}},
  publisher = {Curran Associates, Inc.},
  author = {Osawa, Kazuki and Swaroop, Siddharth and Khan, Mohammad Emtiyaz and Jain, Anirudh and Eschenhagen, Runa and Turner, Richard and Yokota, Rio},
  year = {2019},
  pages = {4287--4299},
}

@inproceedings{wenzelHowGoodBayes2020,
  title = {How {Good} is the {Bayes} {Posterior} in {Deep} {Neural} {Networks} {Really}?},
  issn = {2640-3498},
  url = {https://proceedings.mlr.press/v119/wenzel20a.html},
  language = {en},
  urldate = {2026-09-16},
  booktitle = {Proceedings of the 37th {International} {Conference} on {Machine} {Learning}},
  publisher = {PMLR},
  author = {Wenzel, Florian and Roth, Kevin and Veeling, Bastiaan and Swiatkowski, Jakub and Tran, Linh and Mandt, Stephan and Snoek, Jasper and Salimans, Tim and Jenatton, Rodolphe and Nowozin, Sebastian},
  month = jul,
  year = {2020},
  pages = {10248--10259},
  volume = {119},
  series = {Proceedings of Machine Learning Research},
}

@misc{huixVariationalInferenceOverparameterized2022,
  title = {Variational {Inference} of overparameterized {Bayesian} {Neural} {Networks}: a theoretical and empirical study},
  shorttitle = {Variational {Inference} of overparameterized {Bayesian} {Neural} {Networks}},
  url = {https://arxiv.org/abs/2207.03859},
  doi = {10.48550/arXiv.2207.03859},
  urldate = {2026-09-14},
  author = {Huix, Tom and Majewski, Szymon and Durmus, Alain and Moulines, Eric and Korba, Anna},
  month = jul,
  year = {2022},
  eprint = {2207.03859},
  archiveprefix = {arXiv},
  primaryclass = {stat.ML},
  howpublished = {arXiv preprint},
}

@inproceedings{harveyLearningHyperparametersDataEmphasized2026,
  title = {Learning {Hyperparameters} via a {Data}-{Emphasized} {Variational} {Objective}},
  booktitle = {Proceedings of the 29th {International} {Conference} on {Artificial} {Intelligence} and {Statistics}},
  author = {Harvey, Ethan and Petrov, Mikhail and Hughes, Michael C.},
  year = {2026},
  volume = {300},
  series = {Proceedings of Machine Learning Research},
  pages = {2143--2151},
  publisher = {PMLR},
  month = may,
  url = {https://proceedings.mlr.press/v300/harvey26a.html},
  eprint = {2502.01861},
  archiveprefix = {arXiv},
  primaryclass = {cs.LG},
}

@inproceedings{odgersGaussianMeanField2026,
  title = {Gaussian {Mean} {Field} {Variational} {Inference} can {Overestimate} {Predictive} {Variance}},
  url = {https://openreview.net/forum?id=RG7maF4bGu},
  language = {en},
  urldate = {2026-09-16},
  booktitle = {Proceedings of the 43rd International Conference on Machine Learning},
  author = {Odgers, James and Riegler, Ben and Swaroop, Siddharth and Fortuin, Vincent},
  month = jul,
  year = {2026},
  volume = {306},
  series = {Proceedings of Machine Learning Research},
  publisher = {PMLR},
  eprint = {2606.25745},
  archiveprefix = {arXiv},
  primaryclass = {stat.ML},
}

@article{bissiriGeneralFrameworkUpdating2016a,
    title = {A general framework for updating belief distributions},
    volume = {78},
    issn = {1467-9868},
    url = {https://onlinelibrary.wiley.com/doi/abs/10.1111/rssb.12158},
    doi = {10.1111/rssb.12158},
    language = {en},
    number = {5},
    urldate = {2026-08-31},
    journal = {Journal of the Royal Statistical Society: Series B (Statistical Methodology)},
    author = {Bissiri, P. G. and Holmes, C. C. and Walker, S. G.},
    year = {2016},
    pages = {1103--1130},
}

@article{yang$alpha$variationalInference2020,
    title = {\${\textbackslash}alpha \$-variational inference with statistical guarantees},
    volume = {48},
    issn = {0090-5364, 2168-8966},
    url = {https://projecteuclid.org/journals/annals-of-statistics/volume-48/issue-2/alpha--variational-inference-with-statistical-guarantees/10.1214/19-AOS1827.full},
    doi = {10.1214/19-AOS1827},
    language = {en},
    number = {2},
    urldate = {2026-09-29},
    journal = {The Annals of Statistics},
    publisher = {Institute of Mathematical Statistics},
    author = {Yang, Yun and Pati, Debdeep and Bhattacharya, Anirban},
    month = apr,
    year = {2020},
    pages = {886--905},
}

@article{MacKay1992,
author = {MacKay,David J. C.},
title = {A Practical
Bayesian Framework for Backpropagation Networks},
journal = {Neural Computation},
volume = {4},
number = {3},
pages = {448-472},
year = {1992},
month = {05},
issn = {0899-7667},
doi = {10.1162/neco.1992.4.3.448},
url =
{https://doi.org/10.1162/neco.1992.4.3.448},
eprint = {https://direct.mit.edu/neco/article-pdf/4/3/448/812348/neco.1992.4.3.448.pdf}
}

@article{Ghahramani2015,
	author = {Ghahramani, Zoubin},
	date = {2015/05/01},
	doi = {10.1038/nature14541},
	id = {Ghahramani2015},
	isbn = {1476-4687},
	journal = {Nature},
	number = {7553},
	pages = {452--459},
	title = {Probabilistic machine learning and artificial intelligence},
	url = {https://doi.org/10.1038/nature14541},
	volume = {521},
	year = {2015}}

@article{Arbel2023,
author = {Julyan Arbel and Konstantinos Pitas and Mariia Vladimirova and Vincent Fortuin},
title = {{A Primer on Bayesian Neural Networks: Review and Debates}},
volume = {41},
journal = {Statistical Science},
number = {2},
publisher = {Institute of Mathematical Statistics},
pages = {316 -- 353},
year = {2026},
doi = {10.1214/24-STS969},
URL = {https://doi.org/10.1214/24-STS969}
}

@InProceedings{Izmailov2021,
  title = 	 {What Are Bayesian Neural Network Posteriors Really Like?},
  author =       {Izmailov, Pavel and Vikram, Sharad and Hoffman, Matthew D and Wilson, Andrew Gordon Gordon},
  booktitle = 	 {Proceedings of the 38th International Conference on Machine Learning},
  pages = 	 {4629--4640},
  year = 	 {2021},
  editor = 	 {Meila, Marina and Zhang, Tong},
  volume = 	 {139},
  series = 	 {Proceedings of Machine Learning Research},
  month = 	 {18--24 Jul},
  publisher =    {PMLR},
  url = 	 {https://proceedings.mlr.press/v139/izmailov21a.html}
}

@article{Hoffman2013,
  author  = {Matthew D. Hoffman and David M. Blei and Chong Wang and John Paisley},
  title   = {Stochastic Variational Inference},
  journal = {Journal of Machine Learning Research},
  year    = {2013},
  volume  = {14},
  number  = {4},
  pages   = {1303-1347},
  url     = {http://jmlr.org/papers/v14/hoffman13a.html}
}

@inproceedings{KingmaWelling2014,
  title     = {{Auto-Encoding Variational Bayes}},
  author    = {Kingma, Diederik P. and Welling, Max},
  booktitle = {International Conference on Learning Representations},
  year      = {2014},
  url       = {https://mlanthology.org/iclr/2014/kingma2014iclr-auto/}
}

@inproceedings{Graves2011,
 author = {Graves, Alex},
 booktitle = {Advances in Neural Information Processing Systems},
 editor = {J. Shawe-Taylor and R. Zemel and P. Bartlett and F. Pereira and K. Weinberger},
 pages = {},
 publisher = {Curran Associates, Inc.},
 title = {Practical Variational Inference for Neural Networks},
 url = {https://proceedings.neurips.cc/paper_files/paper/2011/file/7eb3c8be3d411e8ebfab08eba5f49632-Paper.pdf},
 volume = {24},
 year = {2011}
}

@InProceedings{Blundell2015,
  title = 	 {Weight Uncertainty in Neural Network},
  author = 	 {Blundell, Charles and Cornebise, Julien and Kavukcuoglu, Koray and Wierstra, Daan},
  booktitle = 	 {Proceedings of the 32nd International Conference on Machine Learning},
  pages = 	 {1613--1622},
  year = 	 {2015},
  editor = 	 {Bach, Francis and Blei, David},
  volume = 	 {37},
  series = 	 {Proceedings of Machine Learning Research},
  address = 	 {Lille, France},
  month = 	 {07--09 Jul},
  publisher =    {PMLR},
  url = 	 {https://proceedings.mlr.press/v37/blundell15.html}
}

@InProceedings{Shen2024,
  title = 	 {Variational Learning is Effective for Large Deep Networks},
  author =       {Shen, Yuesong and Daheim, Nico and Cong, Bai and Nickl, Peter and Marconi, Gian Maria and Raoul, Bazan Clement Emile Marcel and Yokota, Rio and Gurevych, Iryna and Cremers, Daniel and Khan, Mohammad Emtiyaz and M\"{o}llenhoff, Thomas},
  booktitle = 	 {Proceedings of the 41st International Conference on Machine Learning},
  pages = 	 {44665--44686},
  year = 	 {2024},
  editor = 	 {Salakhutdinov, Ruslan and Kolter, Zico and Heller, Katherine and Weller, Adrian and Oliver, Nuria and Scarlett, Jonathan and Berkenkamp, Felix},
  volume = 	 {235},
  series = 	 {Proceedings of Machine Learning Research},
  month = 	 {21--27 Jul},
  publisher =    {PMLR},
  url = 	 {https://proceedings.mlr.press/v235/shen24b.html}
}

@techreport{Minka2005,
author = {Minka, Tom},
title = {Divergence Measures and Message Passing},
year = {2005},
month = {January},
url = {https://www.microsoft.com/en-us/research/publication/divergence-measures-and-message-passing/},
pages = {17},
institution = {Microsoft Research},
number = {MSR-TR-2005-173},
}

@InProceedings{MargossianSaul2023,
  title = 	 {The Shrinkage-Delinkage Trade-off: an Analysis of Factorized {G}aussian Approximations for Variational Inference},
  author =       {Margossian, Charles C. and Saul, Lawrence K.},
  booktitle = 	 {Proceedings of the Thirty-Ninth Conference on Uncertainty in Artificial Intelligence},
  pages = 	 {1358--1367},
  year = 	 {2023},
  editor = 	 {Evans, Robin J. and Shpitser, Ilya},
  volume = 	 {216},
  series = 	 {Proceedings of Machine Learning Research},
  month = 	 {31 Jul--04 Aug},
  publisher =    {PMLR},
  url = 	 {https://proceedings.mlr.press/v216/margossian23a.html}
}

@InProceedings{Descours2023,
  title = 	 {Law of Large Numbers for Bayesian two-layer Neural Network trained with Variational Inference},
  author =       {Descours, Arnaud and Huix, Tom and Guillin, Arnaud and Michel, Manon and Moulines, {\'E}ric and Nectoux, Boris},
  booktitle = 	 {Proceedings of Thirty Sixth Conference on Learning Theory},
  pages = 	 {4657--4695},
  year = 	 {2023},
  editor = 	 {Neu, Gergely and Rosasco, Lorenzo},
  volume = 	 {195},
  series = 	 {Proceedings of Machine Learning Research},
  month = 	 {12--15 Jul},
  publisher =    {PMLR},
  url = 	 {https://proceedings.mlr.press/v195/descours23a.html}
}

@inproceedings{Burt2021,
 author    = {David R. Burt and Sebastian W. Ober and Adri{\`a} Garriga-Alonso and Mark van der Wilk},
 booktitle = {Third Symposium on Advances in Approximate Bayesian Inference},
 title     = {Understanding Variational Inference in Function-Space},
 year      = {2021}
}

@InProceedings{CinquinBamler2025,
  title = 	 {Well-Defined Function-Space Variational Inference in Bayesian Neural Networks via Regularized KL-Divergence},
  author =       {Cinquin, Tristan and Bamler, Robert},
  booktitle = 	 {Proceedings of the Forty-first Conference on Uncertainty in Artificial Intelligence},
  pages = 	 {752--776},
  year = 	 {2025},
  editor = 	 {Chiappa, Silvia and Magliacane, Sara},
  volume = 	 {286},
  series = 	 {Proceedings of Machine Learning Research},
  month = 	 {21--25 Jul},
  publisher =    {PMLR},
  url = 	 {https://proceedings.mlr.press/v286/cinquin25a.html}
}

@inproceedings{Aitchison2021,
  title     = {{A Statistical Theory of Cold Posteriors in Deep Neural Networks}},
  author    = {Aitchison, Laurence},
  booktitle = {International Conference on Learning Representations},
  year      = {2021},
  url       = {https://mlanthology.org/iclr/2021/aitchison2021iclr-statistical/}
}

@article{
Mei2018,
author = {Song Mei  and Andrea Montanari  and Phan-Minh Nguyen },
title = {A mean field view of the landscape of two-layer neural networks},
journal = {Proceedings of the National Academy of Sciences},
volume = {115},
number = {33},
pages = {E7665-E7671},
year = {2018},
doi = {10.1073/pnas.1806579115},
URL = {https://www.pnas.org/doi/abs/10.1073/pnas.1806579115},
eprint = {https://www.pnas.org/doi/pdf/10.1073/pnas.1806579115}}

@inproceedings{ChizatBach2018,
  title     = {{On the Global Convergence of Gradient Descent for Over-Parameterized Models Using Optimal Transport}},
  author    = {Chizat, Lénaïc and Bach, Francis},
  booktitle = {Neural Information Processing Systems},
  year      = {2018},
  pages     = {3036-3046},
  url       = {https://mlanthology.org/neurips/2018/chizat2018neurips-global/}
}

@article{RotskoffVandenEijnden2022,
author = {Rotskoff, Grant and Vanden-Eijnden, Eric},
title = {Trainability and Accuracy of Artificial Neural Networks: An Interacting Particle System Approach},
journal = {Communications on Pure and Applied Mathematics},
volume = {75},
number = {9},
pages = {1889-1935},
doi = {https://doi.org/10.1002/cpa.22074},
url = {https://onlinelibrary.wiley.com/doi/abs/10.1002/cpa.22074},
eprint = {https://onlinelibrary.wiley.com/doi/pdf/10.1002/cpa.22074},
year = {2022}
}

@article{SirignanoSpiliopoulos2020,
author = {Sirignano, Justin and Spiliopoulos, Konstantinos},
title = {Mean Field Analysis of Neural Networks: A Law of Large Numbers},
journal = {SIAM Journal on Applied Mathematics},
volume = {80},
number = {2},
pages = {725-752},
year = {2020},
doi = {10.1137/18M1192184},

URL = { 
    
        https://doi.org/10.1137/18M1192184
    
    

},
eprint = { 
    
        https://doi.org/10.1137/18M1192184
    
    

}
}

@inproceedings{Saxe2014,
  title     = {{Exact Solutions to the Nonlinear Dynamics of Learning in Deep Linear Neural Networks}},
  author    = {Saxe, Andrew M. and McClelland, James L. and Ganguli, Surya},
  booktitle = {International Conference on Learning Representations},
  year      = {2014},
  url       = {https://mlanthology.org/iclr/2014/saxe2014iclr-exact/}
}

@article{nakajimaGlobalAnalyticSolution2013,
  title     = {{Global Analytic Solution of Fully-Observed Variational Bayesian Matrix Factorization}},
  author    = {Nakajima, Shinichi and Sugiyama, Masashi and Babacan, S. Derin and Tomioka, Ryota},
  journal   = {Journal of Machine Learning Research},
  year      = {2013},
  pages     = {1-37},
  volume    = {14},
  url       = {https://mlanthology.org/jmlr/2013/nakajima2013jmlr-global/}
}

@article{besagStatisticalAnalysisDirty1986,
    title = {On the {Statistical} {Analysis} of {Dirty} {Pictures}},
    volume = {48},
    issn = {0035-9246},
    url = {https://www.jstor.org/stable/2345426},
    number = {3},
    urldate = {2026-09-30},
    journal = {Journal of the Royal Statistical Society. Series B (Methodological)},
    publisher = {[Royal Statistical Society, Oxford University Press]},
    author = {Besag, Julian},
    year = {1986},
    pages = {259--302},
}

@book{bishopPatternRecognitionMachine2006,
    address = {New York},
    series = {Information science and statistics},
    title = {{Pattern recognition and machine learning}},
    isbn = {978-0-387-31073-2},
    language = {en},
    publisher = {Springer},
    author = {Bishop, Christopher M.},
    year = {2006},
}

@book{rasmussenGaussianProcessesMachine2006,
     title     = {Gaussian Processes for Machine Learning},
     author    = {Rasmussen, Carl Edward and Williams, Christopher K. I.},
     publisher = {MIT Press},
     address   = {Cambridge, MA},
     year      = {2006}
   }

\appendix
\thispagestyle{empty}

\section{Supplementary Materials for Section~\texorpdfstring{\refLBNN}{Section}}\label{app:supplementary-LBNN}
\begin{remark}\label{rem:whitening}
    While seemingly a limiting condition, note that for any design matrix $X \in \R^{n\times d}$ with full column rank, the isotropic second moment condition can be enforced via a linear transformation on $X$. Write the singular value decomposition (SVD) of $X$ as $X^\top = UDV^\top$ where $U$ and $V$ each have orthonormal columns and $D$ contains the positive singular values of $X$. Define $\xi = \sqrt n D^{-1} U^\top$ and $\widetilde X^\top = \xi X^\top$.  The transformed covariate vectors $\widetilde x_i = \xi x_i$ satisfy $\tilde Q = \xi Q\xi^\top = I_d$. 
    Variational inference may be performed using $\widetilde X$ and the stated isotropic prior on the transformed input weights $\widetilde A$. The fitted variational posterior distribution can then be mapped back to the original covariates through $A = \xi^\top \widetilde A$, since $ B\widetilde A^\top \widetilde x = BA^\top x$ and $\widetilde x = \xi x$.  The corresponding prior on the original weights is $a_m\sim\mathcal N(0,Q^{-1}/d)$, which generally differs from the isotropic prior in \eqref{eq:prior-A}. 
    Since the whitening transformation is independent of $M$, the untempered variational predictive distribution on the original design still converges to the corresponding prior predictive distribution under the induced prior.
\end{remark}

In this appendix, we assume $A \in \R^{d\times M}$ and $B \in \R^{1\times M}$ with priors as defined in \eqref{eq:prior-A} and \eqref{eq:prior-B}, where $p(A, B) = p(A)p(B)$ is the density of the prior. 
Since $BA^\top = W \in \R^{1\times d}$ is a row vector, one neuron suffices to represent it: take $a_1 = W^\top$, $b_1 = 1$, and $a_m = b_m = 0$ for $m \geq 2$. Then $BA^\top = W$, the columns of $A$ are pairwise orthogonal, and at most one entry of $B$ is nonzero.

\begin{theorem}\label{thm:orthogAB}
    Let $A$ and $B$ be as defined in Section~\ref{sec:lbnn} and assume prior distributions defined in \eqref{eq:prior-A} and \eqref{eq:prior-B} for $A$ and $B$ respectively. Fix $W \in \R^{1\times d}$. Among all pairs $(A, B)$ satisfying $BA^\top = W$, there exists a minimizer of $d\|A\|_F^2 + M\|B\|_2^2$ such that the columns of $A$ are pairwise orthogonal and at most one entry of $B$ is nonzero. The minimum then equals $2\sqrt{d M}\|W\|_2$.
\end{theorem}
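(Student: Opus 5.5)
The plan is to prove a lower bound that holds for every feasible pair and then show that a single-neuron pair attains it.

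For the lower bound, take any $(A,B)$ with $BA^\top=W$, so $W=\sum_{m=1}^M b_m a_m^\top$. For each neuron $m$, the AM-GM inequality gives
\begin{equation*}
d\|a_m\|_2^2+Mb_m^2\geq 2\sqrt{dM}\,|b_m|\,\|a_m\|_2 .
\end{equation*}
Summing over $m$ and using the triangle inequality,
\begin{equation*}
d\|A\|_F^2+M\|B\|_2^2\geq 2\sqrt{dM}\sum_{m=1}^M\|b_ma_m\|_2\geq 2\sqrt{dM}\,\Bigl\|\sum_{m=1}^M b_ma_m\Bigr\|_2=2\sqrt{dM}\,\|W\|_2 .
\end{equation*}

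For attainment, I would first treat $W\neq0$ by balancing the single-neuron construction from the text. Put $a_1=\lambda W^\top$ and $b_1=1/\lambda$, with $\lambda>0$ chosen so that $d\|a_1\|_2^2=Mb_1^2$. This means $\lambda^4=M/(d\|W\|_2^2)$. Set all other neurons to zero. Then $BA^\top=W$, and the objective equals $2\sqrt{dM}\,|b_1|\,\|a_1\|_2=2\sqrt{dM}\|W\|_2$. If $W=0$, take $A=0$ and $B=0$, which gives the value $0$. In both cases the lower bound is attained, so the pair is a minimizer and the minimum equals $2\sqrt{dM}\|W\|_2$.

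This minimizer also has the required structure. The zero columns of $A$ are trivially orthogonal to every other column, so the columns are pairwise orthogonal, and at most one entry of $B$ is nonzero.

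I do not expect a real obstacle here. The only step needing care is to check that equality in AM-GM and in the triangle inequality can hold at the same time, and the balanced single-neuron choice above does exactly that.
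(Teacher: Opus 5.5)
Your proof is correct and follows the paper's: the same lower bound $2\sqrt{dM}\|W\|_2$, attained by the same balanced single-neuron construction. Your $\lambda$ yields exactly the paper's $a_1=(M/d)^{1/4}W^\top/\|W\|_2^{1/2}$ and $b_1=(d/M)^{1/4}\|W\|_2^{1/2}$. The only difference is cosmetic: the paper applies AM-GM once to $\|A\|_F$ and $\|B\|_2$ and then uses Cauchy--Schwarz, whereas you apply AM-GM neuron by neuron and then the triangle inequality, which is the form the paper itself uses in the proofs of Theorem~\ref{thm:orthogexpecAB} and Lemma~\ref{lem:tempered-zero-means}.
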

\begin{proof}
    We wish to solve the minimization problem 
    \begin{equation}\label{eq:minproblem}
        \min_{W = BA^\top} \left[d\|A\|_F^2 + M\|B\|_2^2\right]
    \end{equation}
    By the AM-GM and Cauchy-Schwarz inequalities,
    \begin{align*}
        d\|A\|_F^2 + M\|B\|_2^2 &\geq 2\sqrt{dM}\|A\|_F\|B\|_2\\
        &\geq 2\sqrt{dM}\|BA^\top\|_2 = 2\sqrt{dM}\|W\|_2.
    \end{align*}
    If $W = 0$, this bound is attained by taking $A = B = 0$. For $W \neq 0$, take
    \begin{equation*}
        a_1 = \left(\frac{M}{d}\right)^{1/4}\frac{W^\top}{\|W\|_2^{1/2}}, \qquad b_1 = \left(\frac{d}{M}\right)^{1/4}\|W\|_2^{1/2},
    \end{equation*}
    and set $a_m = b_m = 0$ for $m \geq 2$. Then $BA^\top = W$ and
    \begin{equation*}
        d\|A\|_F^2 + M\|B\|_2^2 = 2\sqrt{dM}\|W\|_2.
    \end{equation*}
    Thus, this choice of $A$ and $B$ solves \eqref{eq:minproblem}. The columns of $A$ are pairwise orthogonal and at most one entry of $B$ is nonzero, which proves the claim.
\end{proof}

\subsection{Proof of Proposition~\ref{thm:distBAxprior}}\label{app:proofdistBAxprior}
\begin{proof}
    Let $g_m \coloneq a_m^\top x$. Then
    \begin{equation*}
    g_m \iidsim \mathcal N\left(0,\frac {x^\top x}d\right)
    \end{equation*}
    under the prior, and hence $g \coloneq A^\top x = (g_1,\dots,g_M)^\top \sim \mathcal N\left(0,\frac{x^\top x}{d}I_M\right)$.
    Conditioning on $g$, since $B^\top\coloneq(b_1,\dots,b_M)^\top \sim \mathcal N_M(0,\frac{1}{M}I_M)$, $BA^\top x \mid A^\top x=g$ is almost surely equal to
    \begin{equation*}
     B g \sim \mathcal N\left(0,\frac{1}{M}\|g\|_2^2\right).
    \end{equation*}
    Moreover, by the law of large numbers,
    \begin{equation*}
    \frac{1}{M}\|g\|_2^2 = \frac{1}{M}\sum_{m=1}^M g_m^2 \inp  \mathbb E(g_1^2) = \frac{x^\top x}{d}.
    \end{equation*}
    Therefore, $BA^\top x$ has the same distribution as $\sqrt{\frac{1}{M}\|g\|_2^2 }\, G$ for $G\sim \mathcal N (0,1)$ independent of $g$, and by Slutsky's theorem,
    \begin{equation*}
    BA^\top x \ind \mathcal N\left(0,\frac {x^\top x}d\right).
    \end{equation*}
\end{proof}

\subsection{Proof of Theorem~\ref{thm:orthogexpecAB}}\label{app:orthogexpecAB}

\begin{proof}
    For each fixed $M$, we first show the existence of a minimizer of the objective in \eqref{eq:minproblemexpanded}. Since $\Sigma_{a_m} \succ 0$, 
    \begin{equation*}
        x_i^\top \Sigma_{a_m} x_i = \|\Sigma_{a_m}^{1/2} x_i \|_2^2 \geq 0.
    \end{equation*}
    Since $\Sigma_{b_m} > 0$ and $\sigma^2 > 0$, then for the terms in \eqref{eq:minproblemexpanded} that depend on $x_i$, 
    \begin{equation*}
        \frac{1}{\sigma^2}\sum_{i=1}^n \left[\left(y_i - \sum_{m=1}^M \mu_{b_m}\mu_{a_m}^\top x_i \right)^2 + \sum_{m=1}^M \left((\mu_{b_m}^2 + \Sigma_{b_m}) \|\Sigma_{a_m}^{1/2} x_i\|_2^2 + \Sigma_{b_m}(\mu_{a_m}^\top x_i)^2\right)\right] \geq 0
    \end{equation*}
    due to every summand being nonnegative. For the sum of the remaining terms, 
    \begin{equation}\label{eq:unbounded-means}
        \sum_{m=1}^M [d\|\mu_{a_m}\|_2^2 + M\mu_{b_m}^2] \geq \sum_{m=1}^M [\|\mu_{a_m}\|_2^2 + \mu_{b_m}^2] 
    \end{equation}
    If any $\mu_{b_m}$ or $\|\mu_{a_m}\|_2$ becomes unbounded, then the left term in \eqref{eq:unbounded-means} diverges. Let $$\lambda_{m,1},\ldots,\lambda_{m,d}>0$$ be the eigenvalues of $\Sigma_{a_m}$ and define $h(u)=u-\log u-1$ for $u>0$. Then
    \begin{equation}\label{eq:covariance-penalty-limits}
        h(u)\geq0,\qquad \lim_{u\downarrow0}h(u)=\infty,\qquad \lim_{u\to\infty}h(u)=\infty.
    \end{equation}
    For the terms involving the covariances $\Sigma_{a_m}$ and $\Sigma_{b_m}$,
    \begin{equation}\label{eq:covariance-penalty-identities}
    \begin{aligned}
        d\tr(\Sigma_{a_m})-\log|\Sigma_{a_m}|-d\log d &= d+\sum_{j=1}^d h(d\lambda_{m,j}),\\
        M\Sigma_{b_m}-\log\Sigma_{b_m}-\log M &= 1+h(M\Sigma_{b_m}).
    \end{aligned}
    \end{equation}
    Using \eqref{eq:covariance-penalty-identities} and the fact that the terms in \eqref{eq:minproblemexpanded} that depend on $x_i$ are nonnegative, we have
    \begin{equation}\label{eq:objective-lower-bound}
        \mathcal L^{\text{VB}}(q)-M(d+1+d\log d +\log M)\geq\sum_{m=1}^M\Bigl[d\|\mu_{a_m}\|_2^2+M\mu_{b_m}^2+\sum_{j=1}^d h(d\lambda_{m,j})+h(M\Sigma_{b_m})\Bigr]\geq0.
    \end{equation}
    So, \eqref{eq:objective-lower-bound} and \eqref{eq:covariance-penalty-limits} show that the objective in \eqref{eq:minproblemexpanded} is coercive. They also show that any sequence along which the objective is bounded above has bounded means and covariances, with covariance eigenvalues and scalar variances bounded away from zero. Thus, we can choose a subsequence along which the means and covariances converge, with the limiting covariances still satisfying $\Sigma_{a_m}\succ0$ and $\Sigma_{b_m}>0$.

    Since the objective is continuous, the bound in \eqref{eq:objective-lower-bound} and the limits in \eqref{eq:covariance-penalty-limits} imply that it attains a global minimum. The same argument applies to the tempered objective for each fixed $T>0$, with $\sigma^2$ replaced by $T\sigma^2$.

    Let $q^*$ be a global minimizer with means $\hat\mu_{a_m}$ and $\hat\mu_{b_m}$, and set
    \begin{equation*}
        \widehat W = \sum_{m=1}^M \hat\mu_{b_m}\hat\mu_{a_m}^\top.
    \end{equation*}
    Hold $\Sigma_{a_m}$ and $\Sigma_{b_m}$ fixed at their values under $q^*$ and optimize $\mathcal L^{\text{VB}}(q)$ with regards to the mean $\mu_{b_m},\mu_{a_m}$ only, under the constraint
    \begin{equation*}
        \sum_{m=1}^M \mu_{b_m}\mu_{a_m}^\top = \widehat W.
    \end{equation*}
    Rewrite \eqref{eq:minproblemexpanded} and neglect some constant terms to obtain 
    \begin{equation*}
    \begin{aligned}
        \mathcal L^{\text{VB}}(r) = {}&C(\Sigma_{a_m}, \Sigma_{b_m}) + \frac{1}{\sigma^2}\sum_{i=1}^n \biggl(y_i - \sum_{m=1}^M \mu_{b_m} \mu_{a_m}^\top x_i\biggr)^2 \\
        &+ \sum_{m=1}^M \biggl[\biggl(d+\frac{n}{\sigma^2}\Sigma_{b_m}\biggr)\|\mu_{a_m}\|_2^2 + \mu_{b_m}^2 \biggl(M+\frac{n}{\sigma^2}\tr(\Sigma_{a_m})\biggr)\biggr],
    \end{aligned}
    \end{equation*}
    where $C(\Sigma_{a_m}, \Sigma_{b_m})$ is some constant term. Notice how the objective takes the form of a penalized regression problem, where the squared-error term measures the fit to the data and the final sum over $m$ is the penalty term.  Since $\mu_B\mu_A^\top = \widehat W$, the fit term is constant. Thus, it suffices to minimize only the penalty term while keeping $\mu_B \mu_A^\top  = \widehat W$. The penalty term simplifies 
    \begin{equation*}
        P = \sum_{m=1}^M \left[\left(d+\frac{n}{\sigma^2}\Sigma_{b_m}\right)\|\mu_{a_m}\|_2^2 + \mu_{b_m}^2 \left(M+\frac{n}{\sigma^2}\tr(\Sigma_{a_m})\right)\right] \coloneq \sum_{m=1}^M [\mu_{a_m}^\top \eta_{a_m} \mu_{a_m} + \eta_{b_m} \mu_{b_m}^2],
    \end{equation*}
    where
    \begin{equation*}
        \kappa_m=d+\frac{n}{\sigma^2}\Sigma_{b_m}, \qquad \eta_{a_m}=\kappa_m I_d, \qquad \eta_{b_m}=M+\frac{n}{\sigma^2}\tr(\Sigma_{a_m}).
    \end{equation*}

    Let
    \begin{equation*}
        \zeta = \min_m \sqrt{\kappa_m\eta_{b_m}}.
    \end{equation*}
    By the AM-GM and triangle inequalities,
    \begin{align*}
        P &\geq 2\sum_{m=1}^M \sqrt{\kappa_m\eta_{b_m}}\,|\mu_{b_m}|\,\|\mu_{a_m}\|_2\\
        &\geq 2\zeta\sum_{m=1}^M |\mu_{b_m}|\,\|\mu_{a_m}\|_2\\
        &\geq 2\zeta\left\|\sum_{m=1}^M \mu_{b_m}\mu_{a_m}^\top\right\|_2\\
        &= 2\zeta\|\widehat W\|_2.
    \end{align*}
    Therefore, $P$ is bounded below by $2\zeta\|\widehat W\|_2$.

    To show that this bound is attained, choose an index $j$ such that $\zeta = \sqrt{\kappa_j\eta_{b_j}}$. If $\widehat W \neq 0$, let
    \begin{equation*}
        t = \left(\frac{\eta_{b_j}}{\kappa_j}\right)^{1/4}\|\widehat W\|_2^{1/2},
    \end{equation*}
    and take
    \begin{equation*}
        \mu_{a_j} = t\frac{\widehat W^\top}{\|\widehat W\|_2}, \qquad
        \mu_{b_j} = \frac{\|\widehat W\|_2}{t}.
    \end{equation*}
    Set $\mu_{a_m} = 0$ and $\mu_{b_m} = 0$ for all $m \neq j$. Then,
    \begin{equation*}
        \sum_{m=1}^M \mu_{b_m}\mu_{a_m}^\top = \mu_{b_j}\mu_{a_j}^\top = \widehat W,
    \end{equation*}
    and
    \begin{align*}
        P &= \kappa_j t^2 + \eta_{b_j}\frac{\|\widehat W\|_2^2}{t^2}\\
        &= 2\sqrt{\kappa_j\eta_{b_j}}\,\|\widehat W\|_2\\
        &= 2\zeta\|\widehat W\|_2.
    \end{align*}
    If $\widehat W = 0$, the bound is attained by taking all means to be zero.

    Denote by $q'$ the distribution with these means and the covariances of $q^*$. It has the same data-fit term as $q^*$, and its penalty is at most $P$ by construction. Since $q^*$ is a global minimizer, $q'$ must also be a global minimizer. By construction, at most one neuron is active under $q'$.

    It remains to show that at most one neuron is active under $q^*$ as well. The objective is smooth on the open set of admissible parameters $\mu_{a_m}, \mu_{b_m}, \Sigma_{a_m}, \Sigma_{b_m}$, so every global minimizer is a stationary point. At any stationary point, differentiating \eqref{eq:minproblemexpanded} with respect to $\mu_{a_m}$ gives for  $W = \sum_{m=1}^M \mu_{b_m}\mu_{a_m}^\top$
    \begin{equation*}
        \left(d + \frac{n}{\sigma^2}\Sigma_{b_m}\right)\mu_{a_m}
        = \frac{n}{\sigma^2}\mu_{b_m}(R - W)^\top.
    \end{equation*}
    The scalar coefficient on the left is positive, so $\mu_{b_m} = 0$ implies $\mu_{a_m} = 0$. Thus, it suffices to show that $\hat\mu_{b_m}$ is nonzero for at most one index $m$.

    Differentiating \eqref{eq:minproblemexpanded} with respect to $\Sigma_{a_m}$ gives
    \begin{align*}
        \frac{\partial\mathcal L^{\text{VB}}(q)}{\partial\Sigma_{a_m}}
        &= dI_d - \Sigma_{a_m}^{-1} + \frac{1}{\sigma^2}\left(\mu_{b_m}^2+\Sigma_{b_m}\right)\sum_{i=1}^n x_ix_i^\top\\
        &= \left(d+\frac{n}{\sigma^2}\left[\mu_{b_m}^2+\Sigma_{b_m}\right]\right)I_d - \Sigma_{a_m}^{-1},
    \end{align*}
    where we used $\sum_{i=1}^n x_ix_i^\top=nQ=nI_d$. Setting the derivative equal to zero gives
    \begin{equation*}
        \Sigma_{a_m}^{-1} = \left(d+\frac{n}{\sigma^2}\left[\mu_{b_m}^2+\Sigma_{b_m}\right]\right)I_d.
    \end{equation*}
    Since $d>0$, $\mu_{b_m}^2\geq0$ and $\Sigma_{b_m}>0$, the coefficient of $I_d$ is positive, so inverting both sides gives
    \begin{equation*}
        \Sigma_{a_m} = \frac{I_d}{d+\frac{n}{\sigma^2}\left[\mu_{b_m}^2+\Sigma_{b_m}\right]},
    \end{equation*}
    which is \eqref{posterior cov a_m}. No parameter of any other neuron appears in this derivative, because the only term of \eqref{eq:minproblemexpanded} that couples the neurons, $\|y_i-\sum_{k=1}^M\mu_{b_k}\mu_{a_k}^\top x_i\|_2^2$, does not depend on $\Sigma_{a_m}$. Hence, \eqref{posterior cov a_m} holds for every $m$ at every stationary point, regardless of the number of active neurons.

    Suppose, for contradiction, that there are two neurons at which $\hat\mu_{b_m} \neq 0$. Choose one of these neurons whose index $m$ differs from $j$. Its mean $\mu_{b_m}$ is $\hat\mu_{b_m} \neq 0$ under $q^*$, whereas it is zero under $q'$ because all means outside neuron $j$ were set to zero.

    The covariances $\Sigma_{a_m}$ and $\Sigma_{b_m}$ are the same under $q^*$ and $q'$ by construction of $q'$. Both distributions are global minimizers, so both must satisfy \eqref{posterior cov a_m}. Applying this equation at $q^*$ and $q'$, respectively, gives
    \begin{align*}
        d + \frac{n}{\sigma^2}\left(\hat\mu_{b_m}^2 + \Sigma_{b_m}\right) = d + \frac{n}{\sigma^2}\Sigma_{b_m},
    \end{align*}
    which implies
    \begin{equation*}
        \frac{n}{\sigma^2}\hat\mu_{b_m}^2 = 0.
    \end{equation*}
    Hence, we must have $\hat\mu_{b_m} = 0$, which is a contradiction. Therefore, at most one neuron is active under $q^*$. Since $q^*$ was an arbitrary global minimizer, the result follows.
\end{proof}

\subsection{Proof of Lemma~\ref{lem:limitdistofbhahxlemma1}}\label{app:limitdistofbhahxlemma1}
\begin{proof}
    Suppose $M > (n/\sigma^2)^2 \|R\|_2^2/d$ and, for contradiction, that some neuron is active under $q^*$. Without loss of generality, assume this neuron is neuron 1. By Theorem~\ref{thm:orthogexpecAB}, it is the only active neuron. Since $q^*$ is a global minimizer, it satisfies the equations \eqref{posterior mean a_m}-\eqref{posterior cov b_m} for $m=1$. In particular, $\mu_{b_1}\neq0$, since otherwise \eqref{posterior mean a_m} gives $\mu_{a_1}=0$ and neuron 1 would be inactive. Since $\Sigma_{a_1} \succ 0$, then
    \begin{equation*}
        \frac{n}{\sigma^2}\left(\|\mu_{a_1}\|_2^2 + \tr(\Sigma_{a_1})\right) \geq 0,
    \end{equation*}
    so \eqref{posterior cov b_m} gives $\Sigma_{b_1} \leq 1/M$. Substituting \eqref{posterior mean a_m} into \eqref{posterior mean b_m} gives
    \begin{equation*}
        \mu_{b_1} = \left(\frac{n}{\sigma^2}\right)^2 \Sigma_{b_1} R \Sigma_{a_1}R^\top \mu_{b_1}.
    \end{equation*}
    Since $\mu_{b_1} \neq 0$, then
    \begin{equation}\label{eq:lem3.2}
        R\Sigma_{a_1}R^\top = \left(\frac{\sigma^2}{n}\right) ^2 \Sigma_{b_1}^{-1}.
    \end{equation}
    Since
    \begin{equation*}
        \frac{n}{\sigma^2}\left[\mu_{b_1}^2 + \Sigma_{b_1}\right] \geq 0,
    \end{equation*}
    then $d + \frac{n}{\sigma^2} \left[\mu_{b_1}^2 + \Sigma_{b_1}\right] \geq d$, hence \eqref{posterior cov a_m} gives $\Sigma_{a_1} \preceq d^{-1} I_d$. Since $\Sigma_{b_1} \leq 1/M$ and by \eqref{eq:lem3.2}, we have
    \begin{equation*}
        \left(\frac{\sigma^2}{n}\right)^2 M \leq \left(\frac{\sigma^2}{n}\right)^2\Sigma_{b_1}^{-1} = R\Sigma_{a_1}R^\top \leq \frac{1}{d}\|R\|_2^2,
    \end{equation*}
    which implies
    \begin{equation*}
        M \leq \left(\frac{n}{\sigma^2}\right)^2 \frac{1}{d}\|R\|_2^2,
    \end{equation*}
    hence a contradiction. So, no neuron is active under $q^*$. 
\end{proof}

\subsection{Proof of Theorem~\ref{thm:limitdistofbhahx}}\label{app:limitdistofbhahxproof}
\begin{proof}
    By Lemma~\ref{lem:limitdistofbhahxlemma1}, all means are zero when $M$ is sufficiently large. Substituting \eqref{posterior cov a_m} into \eqref{posterior cov b_m} with the $\mu_{a_m}$ and $\mu_{b_m}$ set to zero gives
    \begin{equation*}
    \Sigma_{b_m}=\left(M+\frac{\frac{n}{\sigma^2}d}{d+\frac{n}{\sigma^2}\Sigma_{b_m}}\right)^{-1}.
    \end{equation*}
    Writing $t = \Sigma_{b_m}$, the fixed-point equation is equivalent to 
    \begin{equation*}
        Mt + \frac{(n/\sigma^2)dt}{d+(n/\sigma^2)t} = 1.
    \end{equation*}
    The LHS is continuous, strictly increasing for $t \geq 0$ and equals $0$ at $t = 0$, and is at least 1 at $t = 1/M$. Thus, there is a unique solution in $(0, 1/M]$. Since this equation does not depend on $m$, $\Sigma_{b_m}$ is the same for all $m$. By \eqref{posterior cov a_m}, $\Sigma_{a_m}$ is also the same for all $m$.
    
    If $x = 0$, then $BA^\top x = 0$, for which the conclusion is trivial. From hereon, suppose $x \neq 0$. Under the variational posterior $q^*(A, B)$, $\E_{q^*(A, B)}(b_m a_m^\top x) = 0$. The variance of $b_ma_m^\top x$ is 
    \begin{equation*}
        \Var_{q^*(A, B)}(b_ma_m^\top x) = \E_{q^*(A, B)}(b_m^2 (a_m^\top x)^2) = \Sigma_{b_m}x^\top\Sigma_{a_m}x \coloneq \varsigma_m^2,
    \end{equation*}
    which is dependent on $M$. Define 
    \begin{equation*}
        s_M^2 = \sum_{m=1}^M \varsigma_m^2.
    \end{equation*}
    Consider Lyapunov's condition, which requires that for some $\delta > 0$, 
    \begin{equation*}
        \lim_{M\to\infty} \frac{1}{s_M^{2 + \delta}}\sum_{m=1}^M \E_{q(A, B)}[|b_ma_m^\top x|^{2 + \delta}] = 0.
    \end{equation*}
    Taking $\delta=2$, since the variational factors are Gaussian and all means are zero, under $q(A,B)$ we have
    \begin{equation*}
        b_m\sim\mathcal N(0,\Sigma_{b_m}),\qquad a_m^\top x\sim\mathcal N(0,x^\top\Sigma_{a_m}x).
    \end{equation*}
    The fourth moment of a zero-mean Gaussian is three times the square of its variance, so
    \begin{align*}
        \E_{q^*(A,B)}(b_m^4) &= 3\Sigma_{b_m}^2,\\
        \E_{q^*(A,B)}((a_m^\top x)^4) &= 3(x^\top\Sigma_{a_m}x)^2.
    \end{align*}
    Since $b_m$ and $a_m$ are independent under $q^*(A,B)$,
    \begin{align*}
        \E_{q^*(A,B)}((b_ma_m^\top x)^4)
        &= \E_{q^*(A,B)}(b_m^4)\E_{q^*(A,B)}((a_m^\top x)^4)\\
        &= 9\Sigma_{b_m}^2(x^\top\Sigma_{a_m}x)^2\\
        &= 9\varsigma_m^4,
    \end{align*}
    so, since $\Sigma_{b_m}$ and $\Sigma_{a_m}$ are the same for all $m$ (as shown at the start of the proof), so is $\varsigma_m^2=\Sigma_{b_m}x^\top\Sigma_{a_m}x$. Writing $\varsigma^2>0$ for this common value, which may depend on $M$ but not on $m$,
    \begin{equation*}
        \frac{9\sum_{m=1}^M \varsigma_m^4}{\bigl(\sum_{m=1}^M \varsigma_m^2\bigr)^2}
        = \frac{9M\varsigma^4}{(M\varsigma^2)^2} = \frac{9}{M} \to 0
        \quad\text{as } M\to\infty,
    \end{equation*}
    thus by Lyapunov's CLT, 
    \begin{equation*}
        \frac{1}{s_M}\sum_{m=1}^M b_m a_m^\top x \ind_{q^*} \mathcal N(0, 1).
    \end{equation*}   
    
    So,
    \begin{align*}
    s_M^2 &= \sum_{m =1}^M \Sigma_{b_m} x^\top \Sigma_{a_m} x\\
    &= M \Sigma_{b_m}x^\top \Sigma_{a_m} x\\
    &= M\left(\frac{n}{\sigma^2}\tr(\Sigma_{a_m}) + M\right)^{-1} x^\top \Sigma_{a_m} x\\
    &= \frac{M x^\top x}{\left(M+\frac{n}{\sigma^2}\tr(\Sigma_{a_m})\right)\left(d+\frac{n}{\sigma^2}\Sigma_{b_m}\right)}.
    \end{align*}
    Since $0<\Sigma_{b_m}\leq1/M$, \eqref{posterior cov a_m} gives $0\prec\Sigma_{a_m}\preceq I_d/d$, so $0<\tr(\Sigma_{a_m})\leq1$. Thus,
    \begin{align*}
        0 &\leq \frac{n}{\sigma^2}\Sigma_{b_m}\leq\frac{n}{M\sigma^2}\to0,\\
        0 &\leq \frac{n}{M\sigma^2}\tr(\Sigma_{a_m})\leq\frac{n}{M\sigma^2}\to0.
    \end{align*}
    Dividing the numerator and denominator in the expression for $s_M^2$ by $M$ gives
    \begin{align*}
        s_M^2 = \frac{x^\top x}{\left(1+\frac{n}{M\sigma^2}\tr(\Sigma_{a_m})\right)\left(d+\frac{n}{\sigma^2}\Sigma_{b_m}\right)}\to \frac{x^\top x}{(1+0)(d+0)}=\frac{x^\top x}{d},
    \end{align*}
    thus by Slutsky's theorem, 
    \begin{equation}\label{eq:posteriorconvBAx}
    BA^\top x \ind_{q^*} \mathcal N\left(0, \frac{x^\top x}{d}\right).
    \end{equation}
\end{proof}

\section{Supplementary Material for \texorpdfstring{Section~\refTempering}{Section}}

\subsection{Proof of Lemma \ref{lem:tempered-zero-means}}\label{app:means-threshold}
Consider the predictive means $w^\top x$, where $w = \sum_{m=1}^M \mu_{b_m} \mu_{a_m}$. 
By AM-GM and triangle inequalities, 
\begin{align*}
    \sum_{m=1}^M \bigl[d\|\mu_{a_m}\|_2^2 + M\mu_{b_m}^2\bigr]&\geq 2\sqrt{dM}\sum_{m=1}^{M} |\mu_{b_m}|\,\|\mu_{a_m}\|_2\\
    &\geq 2\sqrt{dM}\,\|w\|_2.
\end{align*}
Hold the covariances fixed and consider the variational objective computed at the current means versus when all the means are zero. From \eqref{eq:minproblemexpanded}, if $\sigma^2$ is replaced with $T\sigma^2$, the difference between these two values of $\mathcal L^{\text{VB}}$ results in 
\begin{equation*}
    \sum_{m=1}^M \left[d\|\mu_{a_m}\|_2^2 + M\mu_{b_m}^2\right] + \frac{n}{T\sigma^2} \left(\|w\|_2^2 - 2Rw\right)
    + \frac{n}{T\sigma^2} \sum_{m=1}^M \left[\mu_{b_m}^2\tr(\Sigma_{a_m}) + \Sigma_{b_m}\|\mu_{a_m}\|_2^2\right],
\end{equation*}
where the term involving $R$ comes from the data-fit terms. Since $\|w\|_2^2 \geq 0$ and $Rw \leq \|R\|_2\|w\|_2$, the first two terms of this difference add up to at least $2(\sqrt{dM} - \frac{n}{T}\|R\|_2/\sigma^2)\|w\|_2$, which is nonnegative when $T \ge \tau_*/\sqrt M$, with $\tau_*$ as in Lemma \ref{lem:tempered-zero-means}. The final sum is strictly positive whenever any variational mean is nonzero, since $\Sigma_{a_m}\succ0$ and $\Sigma_{b_m}>0$ for every $m$. Therefore, under this condition, setting all means to zero strictly reduces the objective whenever any mean is nonzero. Every global minimizer must thus have all means equal to zero.

This recovers Lemma~\ref{lem:limitdistofbhahxlemma1} for the untempered objective and shows that the means in case 2(a) of Theorem~\ref{thm:main-theorem} are zero at every width. The proof of case 2(b) in the same theorem shows that, when $T = \tau/\sqrt M$ and $\tau < \tau_*$, exactly one neuron is active for sufficiently large $M$.

\subsection{Active Neuron Identities}\label{app:activebranchidentityproof}
\begin{lemma}\label{lem:activebranchidentity} 
    Let $Q=Id$ and consider a stationary point of the tempered variational objective with $T=T_M=\tau M^{-c}$ and exactly one active neuron indexed by $m$. Then
    \begin{equation}\label{eq:active-mean-identity}
        1 = \left(\frac{nM^c}{\tau\sigma^2}\right)^2 \Sigma_{b_m}\left(R\Sigma_{a_m}R^\top\right),
    \end{equation}
    and
    \begin{equation}
            R\Sigma_{a_m}R^\top = \frac{nM^c}{\tau\sigma^2}\mu_{b_m}^2 R\Sigma_{a_m}^2R^\top + \frac{\tr(\Sigma_{a_m})}{nM^c/(\tau\sigma^2)} + \frac{M}{(nM^c/(\tau\sigma^2))^2}.
    \end{equation}
\end{lemma}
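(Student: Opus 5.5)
Write $\beta=n/(T_M\sigma^2)=nM^c/(\tau\sigma^2)$. By Remark~\ref{rem:tempered-one-active} and Section~\ref{sec:tempered-objective}, the tempered objective is \eqref{eq:minproblemexpanded} with $\sigma^2$ replaced by $T_M\sigma^2$, so $\alpha$ becomes $\beta$. The plan is to derive both identities algebraically from the first-order conditions \eqref{posterior mean a_m}--\eqref{posterior cov b_m}, written with $\alpha$ replaced by $\beta$, at the active neuron $m$.

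\textbf{Justifying the first-order conditions.} These conditions apply to a stationary point with exactly one active neuron. Differentiating in $\mu_{a_m}$ gives
\[
(d+\beta\Sigma_{b_m})\mu_{a_m}=\beta\mu_{b_m}\bigl(R-\mu_{b_m}\mu_{a_m}^\top\bigr)^\top .
\]
Here we use $Q=I_d$, and the residual contains only neuron $m$'s own contribution because every other neuron has zero means. Moving the self-term $\beta\mu_{b_m}^2\mu_{a_m}$ to the left yields $(d+\beta[\mu_{b_m}^2+\Sigma_{b_m}])\mu_{a_m}=\beta\mu_{b_m}R^\top$. By \eqref{posterior cov a_m} this is exactly \eqref{posterior mean a_m}, and the same self-term absorption gives \eqref{posterior mean b_m}.

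\textbf{First identity.} Substitute \eqref{posterior mean a_m} into \eqref{posterior mean b_m} to get
\[
\mu_{b_m}=\beta^2\,\Sigma_{b_m}\,\bigl(R\Sigma_{a_m}R^\top\bigr)\,\mu_{b_m}.
\]
Next check that $\mu_{b_m}\neq0$. If $\mu_{b_m}=0$, then \eqref{posterior mean a_m} forces $\mu_{a_m}=0$, which contradicts neuron $m$ being active. Dividing by $\mu_{b_m}$ then gives $1=\beta^2\Sigma_{b_m}R\Sigma_{a_m}R^\top$, which is \eqref{eq:active-mean-identity}.

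\textbf{Second identity.} Rewrite \eqref{eq:active-mean-identity} as $R\Sigma_{a_m}R^\top=\Sigma_{b_m}^{-1}/\beta^2$. By \eqref{posterior cov b_m}, $\Sigma_{b_m}^{-1}=M+\beta\bigl(\|\mu_{a_m}\|_2^2+\tr(\Sigma_{a_m})\bigr)$. From \eqref{posterior mean a_m} and the symmetry of $\Sigma_{a_m}$, $\|\mu_{a_m}\|_2^2=\beta^2\mu_{b_m}^2R\Sigma_{a_m}^2R^\top$. Substituting and dividing by $\beta^2$ gives
\[
R\Sigma_{a_m}R^\top=\beta\mu_{b_m}^2R\Sigma_{a_m}^2R^\top+\tr(\Sigma_{a_m})/\beta+M/\beta^2,
\]
which is the second claimed identity.

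The only delicate step is the first one: confirming that, with a single active neuron, the residual correction $R-\sum_{k\neq m}\mu_{b_k}\mu_{a_k}^\top$ reduces to $R$. This holds because the self-term is absorbed into $\Sigma_{a_m}$ and $\Sigma_{b_m}$ through the $\mu_{b_m}^2$ and $\|\mu_{a_m}\|_2^2$ terms. Everything after that is routine substitution.
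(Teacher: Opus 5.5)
Your proposal is correct and follows the paper's proof step for step. You substitute the mean condition for $\mu_{a_m}$ into the one for $\mu_{b_m}$ and divide by $\mu_{b_m}\neq0$ to get the first identity, then insert $\|\mu_{a_m}\|_2^2=\beta^2\mu_{b_m}^2R\Sigma_{a_m}^2R^\top$ into the equation for $\Sigma_{b_m}^{-1}$ and divide by $\beta^2$; your extra check that the self-term $\beta\mu_{b_m}^2\mu_{a_m}$ is absorbed into $\Sigma_{a_m}$, so the residual reduces to $R$, is also correct and is what the paper handles in its appendix on coordinate-ascent updates.
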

\begin{proof}
    Substituting \eqref{posterior mean a_m} into \eqref{posterior mean b_m}, with $\sigma^2$ replaced by $T_M\sigma^2$, and using $\mu_{b_m}\neq 0$ gives
    \begin{equation*}
        1 = \left(\frac{nM^c}{\tau\sigma^2}\right)^2 \Sigma_{b_m}(R\Sigma_{a_m}R^\top).
    \end{equation*}
    Also,
    \begin{equation}\label{eq:tempered-b-inverse-variance}
        \Sigma_{b_m}^{-1} = \frac{nM^c}{\tau\sigma^2}\left(\|\mu_{a_m}\|_2^2+\tr\left(\Sigma_{a_m}\right)\right)+ M,
    \end{equation}
    By \eqref{posterior mean a_m}, with $\sigma^2$ replaced by $T_M\sigma^2$ and $T_M=\tau/M^c$,
    \begin{equation*}
        \mu_{a_m} = \frac{nM^c}{\tau\sigma^2}\mu_{b_m}\Sigma_{a_m}R^\top,
    \end{equation*}
    we have
    \begin{equation*}
        \|\mu_{a_m}\|_2^2 = \left(\frac{nM^c}{\tau\sigma^2}\right)^2\mu_{b_m}^2 R\Sigma_{a_m}^2R^\top. 
    \end{equation*}
    Substituting this into \eqref{eq:tempered-b-inverse-variance} and using \eqref{eq:active-mean-identity} gives
    \begin{align*}
        \left(\frac{nM^c}{\tau\sigma^2}\right)^2 R\Sigma_{a_m}R^\top = \Sigma_{b_m}^{-1} = \left(\frac{nM^c}{\tau\sigma^2}\right)^3\mu_{b_m}^2 R\Sigma_{a_m}^2R^\top + \frac{nM^c}{\tau\sigma^2}\tr(\Sigma_{a_m})+M.
    \end{align*}
    Dividing both sides by the positive factor $\left(nM^c/(\tau\sigma^2)\right)^2$ gives
    \begin{align*}
        R\Sigma_{a_m}R^\top = \frac{nM^c}{\tau\sigma^2}\mu_{b_m}^2 R\Sigma_{a_m}^2R^\top +\frac{\tr(\Sigma_{a_m})}{nM^c/(\tau\sigma^2)}+\frac{M}{(nM^c/(\tau\sigma^2))^2},
    \end{align*}
    which is the second identity.
\end{proof}

\subsection{Proof of Theorem~\ref{thm:main-theorem}}\label{app:proof-phase-transition}
\begin{proof}
We consider three cases on $c$: $c < 1/2$, $c = 1/2$, and $c > 1/2$. By Theorem~\ref{thm:orthogexpecAB} and Remark~\ref{rem:tempered-one-active}, at most one neuron is active under $q_{T_M}^*$.

    Let $c < 1/2$. Suppose, for contradiction, that there exists some index $m$ such that $\mu_{a_m}$ and $\mu_{b_m}$ are nonzero. By Lemma~\ref{lem:activebranchidentity},
    \begin{equation*}
        1 = \left(\frac{nM^c}{\tau\sigma^2}\right)^2 \Sigma_{b_m}(R\Sigma_{a_m}R^\top).
    \end{equation*}
    Also,
    \begin{equation}\label{eq:tempered-b-variance-bound}
        \Sigma_{b_m} = \left(\frac{nM^c}{\tau\sigma^2}\left(\|\mu_{a_m}\|_2^2 + \tr(\Sigma_{a_m})\right) + M\right)^{-1} \leq \frac{1}{M},
    \end{equation}
    so
    \begin{equation*}
        \left(\frac{nM^c}{\tau\sigma^2}\right)^2 \Sigma_{b_m} \leq \frac{n^2}{\tau^2\sigma^4}M^{2c-1}.
    \end{equation*}
    As well,
    \begin{equation}\label{eq:tempered-a-covariance-bound}
        \Sigma_{a_m} = \frac{I_d}{d+\frac{nM^c}{\tau\sigma^2}\left(\mu_{b_m}^2+\Sigma_{b_m}\right)} \preceq \frac{1}{d}I_d,
    \end{equation}
    which implies
    \begin{equation*}
        R\Sigma_{a_m}R^\top \leq \frac{\|R\|_2^2}{d}.
    \end{equation*}
    Then,
    \begin{equation*}
        1 \leq \frac{n^2}{\tau^2\sigma^4}M^{2c-1}\frac{\|R\|_2^2}{d}.
    \end{equation*}
    Since $c < \frac12$, we have $2c-1<0$, and hence
    \begin{equation*}
        \lim_{M\to\infty}\frac{n^2}{\tau^2\sigma^4}M^{2c-1}\frac{\|R\|_2^2}{d} = 0.
    \end{equation*}
    This is a contradiction. Therefore, for sufficiently large $M$, we must have $\mu_{a_m}=0 \text{ and } \mu_{b_m}=0$ for all $m$.

    It follows that for every $m$,
    \begin{equation*}
        \Sigma_{a_m} = \frac{I_d}{d+\frac{nM^c}{\tau\sigma^2}\Sigma_{b_m}},
    \end{equation*}
    and
    \begin{equation*}
        \Sigma_{b_m} = \left(\frac{nM^c}{\tau\sigma^2}\tr(\Sigma_{a_m}) + M\right)^{-1}.
    \end{equation*}
    Let $s_m^{(M)} = M\Sigma_{b_m}$. Then
    \begin{equation*}
        s_m^{(M)} = \left(1 + \frac{n}{\tau\sigma^2}M^{c-1}\tr(\Sigma_{a_m})\right)^{-1},
    \end{equation*}
    and
    \begin{equation*}
        \Sigma_{a_m} = \frac{I_d}{d+\frac{n}{\tau\sigma^2}M^{c-1}s_m^{(M)}}.
    \end{equation*}
    Since $c<1$, we have $M^{c-1}\to0$. Also, since $n/(\tau\sigma^2) M^{c-1} s_m^{(M)} > 0$, then $0 \prec \Sigma_{a_m} \preceq I_d/d$, thus,
    \begin{align*}
        0\leq1-s_m^{(M)} =\frac{\frac{n}{\tau\sigma^2}M^{c-1}\tr(\Sigma_{a_m})}{1+\frac{n}{\tau\sigma^2}M^{c-1}\tr(\Sigma_{a_m})}\leq\frac{n}{\tau\sigma^2}M^{c-1}\to0,
    \end{align*}
    so $s_m^{(M)}\to1$. Since $0<s_m^{(M)}\leq1$, we also have
    \begin{equation*}
        0\leq\frac{n}{\tau\sigma^2}M^{c-1}s_m^{(M)}\leq\frac{n}{\tau\sigma^2}M^{c-1}\to0.
    \end{equation*}
    The denominator in the expression for $\Sigma_{a_m}$ therefore tends to $d$, giving
    \begin{equation}\label{eq:zero-mean-a-covariance-limit}
        \Sigma_{a_m} \to \frac{1}{d}I_d.
    \end{equation}
    Hence,
    \begin{equation}\label{eq:zero-mean-b-variance-limit}
        \Sigma_{b_m} = \frac{s_m^{(M)}}{M} \sim \frac{1}{M}.
    \end{equation}
        We note that \eqref{eq:zero-mean-a-covariance-limit} and \eqref{eq:zero-mean-b-variance-limit} are valid for any $c<1$ and we will use these inequalities repeatedly throughout the proof.
    Thus, the tempered posterior has the same asymptotic covariance structure as in the prior case. If $x=0$, then $BA^\top x=0$ and the conclusion is immediate. For $x\neq0$, the terms $b_ma_m^\top x$ are independent under $q_{T_M}^*$ and have zero means. Define
    \begin{equation*}
        \varsigma_m^2=\Var_{r_T}(b_ma_m^\top x)=\Sigma_{b_m}x^\top\Sigma_{a_m}x,\qquad s_M^2=\sum_{m=1}^M\varsigma_m^2.
    \end{equation*}
    The bounds above do not depend on $m$, so $s_m^{(M)}\to1$ and $\Sigma_{a_m}\to I_d/d$ uniformly over $m\in\{1,\ldots,M\}$. Hence,
    \begin{align*}
        \left|s_M^2-\frac{x^\top x}{d}\right|=\left|\frac1M\sum_{m=1}^M\left(s_m^{(M)}x^\top\Sigma_{a_m}x-\frac{x^\top x}{d}\right)\right|\leq\max_{1\leq m\leq M}\left|s_m^{(M)}x^\top\Sigma_{a_m}x-\frac{x^\top x}{d}\right|\to0.
    \end{align*}
    Since the variational factors are independent zero-mean Gaussians,
    \begin{align*}
        \E_{q_{T_M}^*}((b_ma_m^\top x)^4)
        &=\E_{q_{T_M}^*}(b_m^4)\E_{q_{T_M}^*}((a_m^\top x)^4)\\
        &=9\Sigma_{b_m}^2(x^\top\Sigma_{a_m}x)^2=9\varsigma_m^4.
    \end{align*}
    Also, $\Sigma_{b_m}\leq1/M$ and $\Sigma_{a_m}\preceq I_d/d$ give $\varsigma_m^2\leq x^\top x/(Md)$. Taking $\delta=2$, the Lyapunov ratio therefore satisfies
    \begin{align*}
        \frac{1}{s_M^4}\sum_{m=1}^M\E_{q_{T_M}^*}(|b_ma_m^\top x|^4)
        &=\frac{9\sum_{m=1}^M\varsigma_m^4}{s_M^4}\\
        &\leq\frac{9\max_{1\leq m\leq M}\varsigma_m^2}{s_M^2}\\
        &\leq\frac{9x^\top x}{Md\,s_M^2}\to0,
    \end{align*}
    since $s_M^2\to x^\top x/d>0$. Thus, by Lyapunov's CLT,
    \begin{equation*}
        \frac{1}{s_M}\sum_{m=1}^M b_ma_m^\top x\ind_{q_{T_M}^*}\mathcal N(0,1).
    \end{equation*}
    Since $s_M\to\sqrt{x^\top x/d}$, Slutsky's theorem gives
    \begin{equation*}
        BA^\top x \ind_{q_{T_M}^*} \mathcal N\left(0, \frac{x^\top x}{d}\right).
    \end{equation*}

    Let $c = 1/2$ and suppose $\tau \geq \tau_*$. Suppose, for contradiction, that there exists some index $m$ such that $\mu_{a_m}$ and $\mu_{b_m}$ are nonzero. The inequalities in \eqref{eq:tempered-b-variance-bound} and \eqref{eq:tempered-a-covariance-bound} are strict since $\Sigma_{a_m}\succ0$ and $\Sigma_{b_m}>0$. Combining these bounds with \eqref{eq:active-mean-identity} at $c=1/2$ gives
    \begin{equation*}
        1 < \frac{n^2}{\tau^2\sigma^4}\frac{\|R\|_2^2}{d}.
    \end{equation*}
    However, the assumption on $\tau$ implies
    \begin{equation*}
        \frac{n^2}{\tau^2\sigma^4}\frac{\|R\|_2^2}{d}\leq 1,
    \end{equation*}
    which is a contradiction. Therefore, we must have $\mu_{a_m}=0$ and $\mu_{b_m}=0$ for all $m$. Since $c<1$, the covariance asymptotics in \eqref{eq:zero-mean-a-covariance-limit} and \eqref{eq:zero-mean-b-variance-limit} apply, and hence, as above,
    \begin{equation*}
        BA^\top x \ind_{q_T^*} \mathcal N\left(0,\frac{x^\top x}{d}\right).
    \end{equation*}

    Now suppose $\tau < \tau_*$. We first show that not all means can be zero for sufficiently large $M$. Suppose they are zero. Without loss of generality, we consider the first neuron and write $\Sigma_{a_1}=v_aI_d$, $\Sigma_{b_1}=v_b$, and $L_M=n/(T_M\sigma^2)$. By \eqref{posterior cov a_m} and \eqref{posterior cov b_m}, with $\sigma^2$ replaced by $T_M\sigma^2$ and $T_M=\tau/M^{1/2}$, we have
    \begin{equation*}
        v_a^{-1}=d+L_Mv_b,\qquad v_b^{-1}=M+L_Mdv_a.
    \end{equation*}
    By \eqref{eq:zero-mean-a-covariance-limit} and \eqref{eq:zero-mean-b-variance-limit}, $v_a\to1/d$ and $Mv_b\to1$. Since $0<\tau<\tau_*$, we have $R\neq0$. Let $(a,b)\in\R^2$. Hold all covariances and the other means fixed, and vary $\mu_{a_1}=aR^\top/\|R\|_2$ and $\mu_{b_1}=b$. Using $Q=I_d$, the terms of \eqref{eq:minproblemexpanded}, with $\sigma^2$ replaced by $T_M\sigma^2$, that depend on $(a,b)$ are
    \begin{align*}
        (d+L_Mv_b)a^2+(M+L_Mdv_a)b^2-2L_M\|R\|_2ab+L_Ma^2b^2 =\frac{a^2}{v_a}+\frac{b^2}{v_b}-2L_M\|R\|_2ab+L_Ma^2b^2.
    \end{align*}
    The Hessian with respect to $(a,b)$ at $(0,0)$ is
    \begin{equation}\label{eq:zero-mean-hessian}
        2\begin{pmatrix}
            v_a^{-1} & -L_M\|R\|_2\\
            -L_M\|R\|_2 & v_b^{-1}
        \end{pmatrix}.
    \end{equation}
    The determinant is
    \begin{align*}
        \det\left(2\begin{pmatrix}
            v_a^{-1} & -L_M\|R\|_2\\
            -L_M\|R\|_2 & v_b^{-1}
        \end{pmatrix}\right)
        &= \frac{2}{v_a}\frac{2}{v_b}-\left(-2L_M\|R\|_2\right)^2\\
        &= \frac{4}{v_av_b}-4L_M^2\|R\|_2^2\\
        &= \frac{4}{v_av_b}\left(1-L_M^2\|R\|_2^2v_av_b\right).
    \end{align*}
    For sufficiently large $M$, the determinant is negative since
    \begin{equation*}
        L_M^2\|R\|_2^2v_av_b
        \to\frac{n^2\|R\|_2^2}{d\tau^2\sigma^4}
        =\left(\frac{\tau_*}{\tau}\right)^2>1.
    \end{equation*}
    Thus, the objective decreases in some direction, contradicting the global minimality of $q_{T_M}^*$, so there must be at least one neuron that is active. Since at most one neuron is active, exactly one is for sufficiently large $M$. For each sufficiently large $M$, let $m^*=m^*(M, q_{T_M}^*)$ denote the index of the unique active neuron. Let
    \begin{equation*}
        \gamma_M = \frac{nM^{1/2}}{\tau\sigma^2}\mu_{b_{m^*}}^2
    \end{equation*}
    and
    \begin{equation*}
        t_M = \frac{nM^{1/2}}{\tau\sigma^2}\left(\mu_{b_{m^*}}^2+\Sigma_{b_{m^*}}\right).
    \end{equation*}
    Since $\Sigma_{b_{m^*}} \leq \frac{1}{M}$,
    \begin{equation*}
        0 \leq \frac{nM^{1/2}}{\tau\sigma^2}\Sigma_{b_{m^*}} \leq \frac{n}{\tau\sigma^2}M^{-1/2} \to 0,
    \end{equation*}
    hence
    \begin{equation*}
        t_M - \gamma_M \to 0.
    \end{equation*}
    Also,
    \begin{equation*}
        \Sigma_{a_{m^*}} = \frac{I_d}{t_M+d}.
    \end{equation*}
    By Lemma~\ref{lem:activebranchidentity},
    \begin{equation}\label{eq:active-covariance-identity}
        R\Sigma_{a_{m^*}}R^\top = \gamma_M R\Sigma_{a_{m^*}}^2R^\top + \frac{\tr{\Sigma_{a_{m^*}}}}{nM^{1/2}/(\tau\sigma^2)} + \frac{\tau^2\sigma^4}{n^2}.
    \end{equation}
    We first show that $(\gamma_M)_M$ is bounded. If $\gamma_M \to \infty$ along some subsequence, then $t_M \to \infty$ as well, so
    \begin{equation*}
        \Sigma_{a_{m^*}} = \frac{I_d}{t_M+d} \to 0.
    \end{equation*}
    Since $t_M\geq\gamma_M$, along this subsequence,
    \begin{align*}
        0\leq\gamma_M R\Sigma_{a_{m^*}}^2R^\top =\frac{\gamma_M\|R\|_2^2}{(t_M+d)^2} \leq\frac{\gamma_M\|R\|_2^2}{(\gamma_M+d)^2}\leq\frac{\|R\|_2^2}{\gamma_M}\to0.
    \end{align*}
    Consequently,
    \begin{equation*}
        R\Sigma_{a_{m^*}}R^\top \to 0, \qquad \gamma_M R\Sigma_{a_{m^*}}^2R^\top \to 0, \qquad \frac{\tr(\Sigma_{a_{m^*}})}{nM^{1/2}/(\tau\sigma^2)} \to 0.
    \end{equation*}
    Taking limits along this subsequence in the identity from Lemma~\ref{lem:activebranchidentity} gives
    \begin{equation*}
        0=0+0+\frac{\tau^2\sigma^4}{n^2}=\frac{\tau^2\sigma^4}{n^2}>0,
    \end{equation*}
    a contradiction. Hence, $(\gamma_M)_M$ is bounded.

    Let $\gamma_{M_\ell} \to \alpha_0$ along some subsequence. Since $t_{M_\ell} - \gamma_{M_\ell}\to 0$, we also have $t_{M_\ell}\to \alpha_0$, and therefore
    \begin{equation*}
        \Sigma_{a_{m^*}}^{(\ell)} \to A_0 \coloneq \frac{I_d}{\alpha_0+d}.
    \end{equation*}
    Taking limits in \eqref{eq:active-covariance-identity} along this subsequence gives
    \begin{equation*}
        RA_0R^\top = \alpha_0RA_0^2R^\top + \frac{\tau^2\sigma^4}{n^2}.
    \end{equation*}
    Since $(\alpha_0+d)A_0 = I_d$, we have
    \begin{equation*}
        \alpha_0A_0^2 + dA_0^2 = A_0,
    \end{equation*}
    so
    \begin{equation*}
        RA_0R^\top - \alpha_0RA_0^2R^\top = dRA_0^2R^\top.
    \end{equation*}
    Therefore,
    \begin{equation*}
        \frac{d\|R\|_2^2}{(\alpha_0+d)^2} = \frac{\tau^2\sigma^4}{n^2}.
    \end{equation*}
    Since $\alpha_0\geq0$ and $d>0$, taking the positive square root gives
    \begin{equation*}
        \alpha_0+d=\frac{n\sqrt{d}\|R\|_2}{\tau\sigma^2}=\frac{d\tau_*}{\tau},
    \end{equation*}
    and hence
    \begin{equation*}
        \alpha_0=d\left(\frac{\tau_*}{\tau}-1\right)>0,
    \end{equation*}
    since $\tau<\tau_*$. Thus, every convergent subsequence of $(\gamma_M)_M$ has the same limit $\alpha_0$, and hence
    \begin{equation*}
        \gamma_M \to \alpha_0.
    \end{equation*}

    Finally,
    \begin{equation*}
        \mu_{a_{m^*}}\mu_{b_{m^*}} = \frac{nM^{1/2}}{\tau\sigma^2}\Sigma_{a_{m^*}}R^\top\mu_{b_{m^*}}^2 = \frac{\gamma_M}{t_M+d}R^\top \to \frac{\alpha_0}{\alpha_0+d}R^\top.
    \end{equation*}

    For this active neuron,
    \begin{align}\label{eq:active-neuron-variance-bound}
        \Var_{q_{T_M}^*}(b_{m^*}a_{m^*}^\top x)
        &=\mu_{b_{m^*}}^2x^\top\Sigma_{a_{m^*}}x
        +\Sigma_{b_{m^*}}(\mu_{a_{m^*}}^\top x)^2
        +\Sigma_{b_{m^*}}x^\top\Sigma_{a_{m^*}}x \leq \frac{2}{L_M}x^\top x,
    \end{align}
    recalling $L_M=\frac{nM^c}{\tau\sigma^2}$. The bound follows from \eqref{posterior cov a_m} and \eqref{posterior cov b_m} with $\sigma^2$ replaced by $T_M\sigma^2$, together with Cauchy-Schwarz, since 
    \begin{align*}
        (\mu_{b_{m^*}}^2+\Sigma_{b_{m^*}})x^\top\Sigma_{a_{m^*}}x &\leq \frac{1}{L_M}x^\top x,\\
        \Sigma_{b_{m^*}}(\mu_{a_{m^*}}^\top x)^2 &\leq \Sigma_{b_{m^*}}
        (\|\mu_{a_{m^*}}\|_2^2)x^\top x\\ &\leq \frac{1}{L_M}x^\top x.
    \end{align*}
    Adding these two bounds gives the stated inequality.
    Thus, the active branch contributes vanishing variance. Since $c<1$, the zero branches have the same asymptotic covariance structure as in the prior case by \eqref{eq:zero-mean-a-covariance-limit} and \eqref{eq:zero-mean-b-variance-limit}. Therefore, by the same CLT argument as above,
    \begin{equation*}
        BA^\top x \ind_{r_T} \mathcal N\left(\frac{\alpha_0}{\alpha_0+d}Rx, \frac{x^\top x}{d}\right),
    \end{equation*}
    proving the assertion, replacing $\alpha_0=d\left(\frac{\tau_*}{\tau}-1\right)$.

    Now let $c>1/2$. We consider three subcases: $1/2 < c< 1$, $c = 1$, and $c > 1$. If all means were zero, write the first neuron's covariances as $\Sigma_{a_1}=v_aI_d$ and $\Sigma_{b_1}=v_b$. With $L_M=n/(T_M\sigma^2)$ again, \eqref{posterior cov a_m} and \eqref{posterior cov b_m} with $\sigma^2$ replaced by $T_M\sigma^2$ give $v_a^{-1}=d+L_Mv_b$ and $v_b^{-1}=M+L_Mdv_a$. Since $Mv_b\leq1$,
    \begin{align*}
        (v_av_b)^{-1}
        =dM+L_M Mv_b+L_Md\leq dM+L_M(d+1)=o(L_M^2).
    \end{align*}
    Hence, $L_M^2\|R\|_2^2v_av_b\to\infty$, so the Hessian in \eqref{eq:zero-mean-hessian} has negative determinant for sufficiently large $M$, ruling out all means being zero. Therefore, exactly one neuron is active. For each sufficiently large $M$, we again let $m^*=m^*(M, q_{T_M}^*)$ denote the index of the unique active neuron. Let
    \begin{equation*}
        L_M=\frac{nM^c}{\tau\sigma^2}, \qquad
        \gamma_M=L_M\mu_{b_{m^*}}^2, \qquad
        \eta_M=L_M\Sigma_{b_{m^*}}, \qquad
        t_M=\gamma_M+\eta_M.
    \end{equation*}
    Then
    \begin{equation}\label{eq:active-covariance-scale}
        \Sigma_{a_{m^*}}=\frac{I_d}{t_M+d}.
    \end{equation}
    By the first identity in Lemma~\ref{lem:activebranchidentity},
    \begin{equation*}
        \eta_M=\frac{1}{L_MR\Sigma_{a_{m^*}}R^\top}.
    \end{equation*}
    Since
    \begin{equation*}
        R\Sigma_{a_{m^*}}R^\top= \frac{\|R\|_2^2}{t_M+d},
    \end{equation*}
    we have
    \begin{equation*}
        0<\eta_M= \frac{t_M+d}{L_M\|R\|_2^2}.
    \end{equation*}
    Suppose, for contradiction, that $(\gamma_M)_M$ has a bounded subsequence, so for some subsequence $(M_\ell)_\ell$, $\gamma_{M_\ell} \leq C$ for some $C > 0$. Then, 
    \begin{equation*}
        t_{M_\ell} = \gamma_{M_\ell} + \eta_{M_\ell} \leq C + \frac{t_{M_\ell}  + d}{L_{M_\ell} \|R\|_2^2}.
    \end{equation*}
    The above inequality implies that 
    \begin{equation*}
        \left(1 - \frac{1}{L_{M_\ell} \|R\|_2^2}\right) t_{M_\ell} \leq C + \frac{d}{L_{M_\ell}\|R\|_2^2}.
    \end{equation*}
    For $M_\ell$ large enough, $ 1 / (L_{M_\ell} \|R\|_2^2) < 1/2$, so 
    \begin{equation*}
        t_{M_\ell} \leq \frac{C + \frac{d}{L_{M_\ell}\|R\|_2^2}}{1 - \frac{1}{L_{M_\ell}\|R\|_2^2}} < 2 \left(C + \frac{d}{L_{M_\ell} \|R\|_2^2}\right),
    \end{equation*}
    thus $(t_{M_\ell})_\ell$ is bounded. Since $L_{M_\ell}\to\infty$, the bound above on $\eta_M$ gives $\eta_{M_\ell}\to0$. By Bolzano-Weierstrass, $\gamma_{M_\ell}$ converges to some $\gamma\geq0$ along a further subsequence. Since $t_M=\gamma_M+\eta_M$, we also have $t_M\to\gamma$ along this subsequence.
    \begin{equation*}
        \gamma_M\to\gamma,\qquad t_M\to\gamma,
    \end{equation*}
    along this subsequence. Substituting $t_M\to\gamma$ into \eqref{eq:active-covariance-scale} gives
    \begin{equation*}
        \Sigma_{a_{m^*}}=\frac{I_d}{t_M+d}\to A_\gamma\coloneq\frac{I_d}{\gamma+d}.
    \end{equation*}
    Since $c>1/2$,
    \begin{equation*}
        \frac{\tr(\Sigma_{a_{m^*}})}{L_M}\to0
    \end{equation*}
    and
    \begin{equation*}
        \frac{M}{L_M^2}=\frac{\tau^2\sigma^4}{n^2}M^{1-2c}\to0.
    \end{equation*}
    Taking limits in the second identity in Lemma~\ref{lem:activebranchidentity} gives
    \begin{equation}\label{eq:active-limit-identity}
        RA_\gamma R^\top=\gamma RA_\gamma^2R^\top.
    \end{equation}
    However, notice that
    \begin{align*}
        A_\gamma - \gamma A_\gamma^2 &= A_\gamma(A_\gamma^{-1} - \gamma I_d)A_\gamma\\
        &= A_\gamma (dI_d) A_\gamma\\
        &= dA_\gamma^2,
    \end{align*}
    thus
    \begin{equation*}
        RA_\gamma R^\top-\gamma RA_\gamma^2 R^\top=dRA_\gamma^2R^\top>0,
    \end{equation*}
    which contradicts \eqref{eq:active-limit-identity}, since that equation gives $RA_\gamma R^\top-\gamma RA_\gamma^2R^\top=0$. Thus, $(\gamma_M)_M$ cannot have a bounded subsequence, and hence
    \begin{equation*}
        \gamma_M\to\infty.
    \end{equation*}
    Since $t_M=\gamma_M+\eta_M$ and $\eta_M>0$, we have $t_M\geq\gamma_M$, so $t_M\to\infty$, and
    \begin{equation*}
        0\leq\frac{\eta_M}{t_M}
        \leq\frac{1+d/t_M}{L_M\|R\|_2^2}\to0.
    \end{equation*}
    Hence,
    \begin{equation*}
        \frac{\gamma_M}{t_M}\to1.
    \end{equation*}
    Therefore,
    \begin{align*}
        \mu_{a_{m^*}}\mu_{b_{m^*}}=\frac{\gamma_M}{t_M+d}R^\top=\frac{\gamma_M}{t_M}\left(1+\frac{d}{t_M}\right)^{-1}R^\top\to R^\top.
    \end{align*}
    Recall that $q_{T_M}^*$ is a global minimizer of the objective in \eqref{eq:minproblemexpanded}, with $\sigma^2$ replaced by $T_M\sigma^2$, over the Gaussian mean-field family at width $M$ and temperature $T_M=\tau/M^c$. Thus,
    \begin{equation}\label{eq:active-expectation-limit}
        \E_{q_{T_M}^*}(BA^\top x)=\mu_{b_{m^*}}\mu_{a_{m^*}}^\top x\to x^\top R^\top=Rx.
    \end{equation}

    Suppose $1/2<c<1$. Equations \eqref{eq:zero-mean-a-covariance-limit} and \eqref{eq:zero-mean-b-variance-limit} show that the zero branches have the same asymptotic covariance structure as in the prior case since $c< 1$, while the active branch contributes vanishing variance. Therefore, using the same CLT argument,
    \begin{equation*}
        BA^\top x\ind_{q_{T_M}^*}\mathcal N\left(Rx,\frac{x^\top x}{d}\right).
    \end{equation*}

    Let $c=1$. For each inactive neuron $m\neq m^*$,
    \begin{align*}
        \Sigma_{a_m} &= \frac{I_d}{d+\frac{nM}{\tau\sigma^2}\Sigma_{b_m}},\\
        \Sigma_{b_m} &= \left(\frac{nM}{\tau\sigma^2}\tr{\Sigma_{a_m}}+M\right)^{-1}.
    \end{align*}
    Let $s_m^{(M)}=M\Sigma_{b_m}$. Then
    \begin{align*}
        s_m^{(M)}
        &=\left(1+\frac{n}{\tau\sigma^2}\tr{\Sigma_{a_m}}\right)^{-1}\\
        &=\left(1+\frac{nd}{ns_m^{(M)}+\tau\sigma^2d}\right)^{-1}.
    \end{align*}
    For all $m\neq m^*$, the fixed-point equation for $s_m^{(M)}$ is equivalent to
    \begin{equation}\label{eq:inactive-variance-quadratic}
        n\left(s_m^{(M)}\right)^2+\left[\tau\sigma^2d+n(d-1)\right]s_m^{(M)}-\tau\sigma^2d=0.
    \end{equation}
    The leading coefficient is positive, and the constant term is negative, so \eqref{eq:inactive-variance-quadratic} has exactly one positive root. Since the coefficients do not depend on $m$ or $M$ and $s_m^{(M)}>0$, each $s_m^{(M)}$ equals the same positive value $s$, satisfying
    \begin{equation*}
        s=\left(1+\frac{nd}{ns+\tau\sigma^2d}\right)^{-1}=\Bigl(1+\frac{\alpha d}{\alpha s+\tau d}\Bigr)^{-1}.
    \end{equation*}
    Therefore,
    \begin{equation*}
        \Sigma_{a_m}=\frac{I_d}{d+ns/(\tau\sigma^2)}.
    \end{equation*}
    If $x=0$, then the sum of the contributions from inactive neurons is zero and the conclusion is immediate. Suppose $x\neq0$. Under $q_{T_M}^*$, the $M-1$ terms $b_ma_m^\top x$ with $m\neq m^*$ are independent and have zero means. Since $\Sigma_{b_m}=s/M$, for each inactive neuron,
    \begin{equation*}
        \varsigma_m^2\coloneq\Var_{q_{T_M}^*}(b_ma_m^\top x)=\Sigma_{b_m}x^\top\Sigma_{a_m}x=\frac{s}{M}\frac{x^\top x}{d+ns/(\tau\sigma^2)}=\frac{1}{M}\frac{x^\top x}{n/(\tau\sigma^2)+d/s}>0.
    \end{equation*}
    Define the total variance of the sum over inactive neurons by
    \begin{equation*}
        s_M^2\coloneq\sum_{\substack{m=1\\m\neq m^*}}^M\varsigma_m^2=\frac{M-1}{M}\frac{x^\top x}{n/(\tau\sigma^2)+d/s}\to\frac{x^\top x}{n/(\tau\sigma^2)+d/s}>0.
    \end{equation*}
    Since $b_m$ and $a_m^\top x$ are independent zero-mean Gaussians,
    \begin{equation*}
        \E_{q_{T_M}^*}(b_m^4)=3\Sigma_{b_m}^2,\qquad\E_{q_{T_M}^*}((a_m^\top x)^4)=3(x^\top\Sigma_{a_m}x)^2.
    \end{equation*}
    Hence,
    \begin{equation*}
        \E_{q_{T_M}^*}((b_ma_m^\top x)^4)=\E_{q_{T_M}^*}(b_m^4)\E_{q_{T_M}^*}((a_m^\top x)^4)=9\Sigma_{b_m}^2(x^\top\Sigma_{a_m}x)^2=9\varsigma_m^4.
    \end{equation*}
    The variances $\varsigma_m^2$ are the same for all inactive neurons, so taking $\delta=2$, the Lyapunov ratio is
    \begin{equation*}
        \frac{1}{s_M^4}\sum_{\substack{m=1\\m\neq m^*}}^M\E_{q_{T_M}^*}(|b_ma_m^\top x|^4)=\frac{9\displaystyle\sum_{\substack{m=1\\m\neq m^*}}^M\varsigma_m^4}{\left(\displaystyle\sum_{\substack{m=1\\m\neq m^*}}^M\varsigma_m^2\right)^2}=\frac{9}{M-1}\to0.
    \end{equation*}
    Thus, Lyapunov's CLT gives
    \begin{equation*}
        \frac{1}{s_M}\sum_{\substack{m=1\\m\neq m^*}}^M b_ma_m^\top x\ind_{q_{T_M}^*}\mathcal N(0,1).
    \end{equation*}
    Since $s_M\to\sqrt{x^\top x/(n/(\tau\sigma^2)+d/s)}$, Slutsky's theorem gives
    \begin{equation*}
        \sum_{\substack{m=1\\m\neq m^*}}^M b_ma_m^\top x\ind_{q_{T_M}^*}\mathcal N\left(0,\frac{x^\top x}{n/(\tau\sigma^2)+d/s}\right).
    \end{equation*}
    Since the active branch contributes vanishing variance, it follows that
    \begin{equation*}
        BA^\top x\ind_{q_{T_M}^*}
        \mathcal N\left(Rx,\frac{x^\top x}{n/(\tau\sigma^2)+d/s}\right).
    \end{equation*}

    Let $c>1$. The expectation converges to $Rx$ by \eqref{eq:active-expectation-limit}, so it remains to show that the variance converges to zero. For each inactive neuron $m\neq m^*$, both means are zero, and \eqref{posterior cov a_m} with $\sigma^2$ replaced by $T_M\sigma^2$ gives
    \begin{equation*}
        \Sigma_{b_m}\Sigma_{a_m}=\frac{\Sigma_{b_m}}{d+L_M\Sigma_{b_m}}I_d\preceq\frac{1}{L_M}I_d.
    \end{equation*}
    Since the $M-1$ inactive contributions are independent under $q_{T_M}^*$, their total variance satisfies
    \begin{align*}
        \Var_{q_{T_M}^*}\left(\sum_{\substack{m=1\\m\neq m^*}}^M b_ma_m^\top x\right)&=\sum_{\substack{m=1\\m\neq m^*}}^M\Sigma_{b_m}x^\top\Sigma_{a_m}x\leq\frac{M-1}{L_M}x^\top x\leq\frac{M}{L_M}x^\top x=\frac{\tau\sigma^2}{n}M^{1-c}x^\top x\to0.
    \end{align*}
    By \eqref{eq:active-neuron-variance-bound}, the active neuron also satisfies
    \begin{equation*}
        \Var_{q_{T_M}^*}(b_{m^*}a_{m^*}^\top x)\leq\frac{2}{L_M}x^\top x=\frac{2\tau\sigma^2}{n}M^{-c}x^\top x\to0.
    \end{equation*}
    Therefore, independence of the active and inactive contributions gives
    \begin{equation*}
        \Var_{q_{T_M}^*}(BA^\top x)\leq\frac{M+1}{L_M}x^\top x=\frac{\tau\sigma^2}{n}(M+1)M^{-c}x^\top x\to0.
    \end{equation*}
    Together with \eqref{eq:active-expectation-limit}, Chebyshev's inequality gives
    \begin{equation*}
        BA^\top x\ind_{q_{T_M}^*}Rx,
    \end{equation*}
    as required.
\end{proof}

\subsection{Proof of Corollary~\ref{cor:recovery}}\label{app:proof-recovery}
\begin{proof}
    We recall that $\alpha > 0$, $R \neq 0$, and under the assumption $Q = I_d$,
    \begin{equation*}
        m_{\text{NNGP}}(x) = \frac{\alpha}{\alpha + d}Rx, \qquad v_{\text{NNGP}} = \frac{x^\top x}{\alpha + d}. 
    \end{equation*}
    In particular, since $R\neq0$, then $x=R^\top$ satisfies $Rx=\|R\|_2^2>0$, so $m_{\text{NNGP}}(R^\top)=\frac{\alpha}{\alpha+d}\|R\|_2^2$ lies strictly between $0$ and $RR^\top=\|R\|_2^2$. Likewise, for $x\neq0$, $v_{\text{NNGP}}(x)$ lies strictly between $0$ and the prior variance $v_0(x)=x^\top x/d$. A limiting mean equal to $0$ or to $Rx$, or a limiting variance equal to $0$ or to $v_0(x)$, therefore does not match the NNGP posterior for all $x$.

    For 1., let $c = 1/2$. If $\tau \geq \tau_*$, then the limiting mean is zero by case~2(a) of Theorem~\ref{thm:main-theorem}, so it does not agree with $m_{\text{NNGP}}(x)$ at $x = R^\top$. If $\tau < \tau_*$, then case~2(b) gives $m_{1/2}(x) = (1-\tau/\tau_*)Rx$. Since $R \neq 0$, this equals $m_{\text{NNGP}}(x)$ for all $x$ if and only if
    \begin{equation*}
        1-\frac{\tau}{\tau_*} = \frac{\alpha}{\alpha+d},
    \end{equation*}
    which gives $\tau = \tau_*d/(\alpha+d) = \tau_\mu$. Since $\tau_\mu < \tau_*$, this value satisfies the condition of case~2(b).

    For 2., let $c = 1$ and $s \in (0,1)$ be the unique solution of \eqref{eq:s-fixed-point}. By case~4 of Theorem~\ref{thm:main-theorem}, the limiting variance is $v_{\text{NNGP}}(x)$ for all $x$ if and only if
    \begin{equation*}
        \frac{\alpha}{\tau}+\frac{d}{s} = \alpha+d.
    \end{equation*}
    Letting $D = \tau(\alpha+d)-\alpha$, this is equivalent to $d/s = D/\tau$. Since $s \in (0,1)$, we must have $D > \tau d$, which is equivalent to $\tau > 1$. Conversely, if $\tau > 1$, then
    \begin{equation*}
        \tilde s = \frac{\tau d}{D} \in (0,1).
    \end{equation*}
    Therefore, by uniqueness of $s$, the variances agree for all $x$ if and only if $\tau > 1$ and $\tilde s$ satisfies \eqref{eq:s-fixed-point}.

    Rewrite the fixed-point equation as
    \begin{equation*}
        \frac{1}{s}-1 = \frac{\alpha d}{\alpha s+\tau d}.
    \end{equation*}
    Since $\alpha+D = \tau(\alpha+d)$, substituting $\tilde s$ gives
    \begin{equation*}
        \frac{1}{\tilde s}-1 = \frac{\alpha(\tau-1)}{\tau d}, \qquad \alpha\tilde s+\tau d = \frac{\tau d(\alpha+D)}{D} = \frac{\tau^2d(\alpha+d)}{D}.
    \end{equation*}
    Thus, $\tilde s$ satisfies the fixed-point equation if and only if
    \begin{align*}
        \frac{\alpha(\tau-1)}{\tau d} &= \frac{\alpha D}{\tau^2(\alpha+d)},\\
        \tau(\tau-1)(\alpha+d) &= d\bigl(\tau(\alpha+d)-\alpha\bigr),
    \end{align*}
    or equivalently, $g(\tau) = 0$, where
    \begin{equation*}
        g(\tau) = \tau^2-(d+1)\tau+\frac{d\alpha}{\alpha+d}.
    \end{equation*}
    Since $g(1) = -d^2/(\alpha+d) < 0$, this quadratic has one root below $1$ and one above $1$. The latter is $\tau_v$, so the limiting variance agrees with $v_{\text{NNGP}}$ if and only if $\tau = \tau_v$ (the second root). Moreover, $g(d+1) = d\alpha/(\alpha+d) > 0$, so $\tau_v \in (1,d+1)$.

    For 3., Theorem~\ref{thm:main-theorem} gives limiting variance $v_0(x)$ for $c < 1$ and zero for $c > 1$. Since $v_{\text{NNGP}}(x)$ lies strictly between these values for $x \neq 0$, recovering the NNGP posterior variance requires $c = 1$. The theorem also gives limiting mean zero for $c < 1/2$ and $Rx$ for $c > 1/2$. Both differ from $m_{\text{NNGP}}(x)$ at $x = R^\top$, so recovering the NNGP posterior mean requires $c = 1/2$. Therefore, no schedule $T_M = \tau/M^c$ recovers both moments for all $x$.
\end{proof}

\section{Computing the Variational Bayes Free Energy Objective}\label{app:longcomputationLVB}
Since we assume the variational posterior factorizes in \eqref{eq:vbfactorization} and impose priors $p(A)$ and $p(B)$ on $A$ and $B$ respectively, the free energy $F_{\text{VB}}(q)$ from Section~\ref{sec:variational-optimization} can be rewritten as 
\begin{equation}\label{eq:vbfreeenergy}
    F_{\text{VB}}(q) = \E_{q(A, B)} \left[\log \frac{q(A)q(B)}{p(Y^n \mid X^n, A, B)p(A)p(B)}\right].
\end{equation}
Here $\E_q$ denotes expectation with respect to $q(A,B)$. Since $q\in \mathcal{Q}_{MF}$, \eqref{eq:vbfreeenergy} can be decomposed as 
\begin{equation}\label{eq:vbfreeexpanded}
    F_{\text{VB}}(q) = \E_q (-\log p(Y^n \mid X^n, A, B)) + \sum_{m=1}^M \left[\E_q\left(\log \frac{q(a_m)}{p(a_m)}\right) + \E_q\left(\log \frac{q(b_m)}{p(b_m)}\right)\right].
\end{equation}
By the normality assumptions on the distributions $q(a_m), p(a_m), q(b_m), p(b_m)$, 
\begin{align}
    \E_q\left(\log \frac{q(a_m)}{p(a_m)}\right) &= \frac12 \left[d\left(\|\mu_{a_m}\|_2^2 + \tr(\Sigma_{a_m})\right) - \log |\Sigma_{a_m}| - d \log d - d\right] \nonumber\\
    &= \frac12 \left[d\left(\|\mu_{a_m}\|_2^2 + \tr(\Sigma_{a_m})\right) - \log |\Sigma_{a_m}| \right]  +C \label{eq:klr(a_m)phi(a_m)},
\end{align}
and similarly 
\begin{equation}\label{eq:klr(b_m)phi(b_m)}
    \E_q\left(\log \frac{q(b_m)}{p(b_m)}\right) = \frac 12 \left[M\left(\mu_{b_m}^2 + \Sigma_{b_m}\right) - \log \Sigma_{b_m}\right] + C.
\end{equation}
By the normality assumption of $p(Y^n \mid X^n, A, B)$, 
\begin{equation*}
    -\log p(Y^n \mid X^n, A, B) = \frac 1{2\sigma^2} \sum_{i=1}^n \left\|y_i - \sum_{m=1}^M b_m a_m^\top x_i\right\|_2^2 + C.
\end{equation*}
Let $f_i = \sum_{m=1}^M b_m a_m^\top x_i$, and $u_{im}=b_m a_m^\top x_i-\mu_{b_m}\mu_{a_m}^\top x_i$, so $\E_q[u_{im}]=0$. Then
\begin{align}
    \E_q\left(\left\|y_i - \sum_{m=1}^M b_m a_m^\top x_i\right\|_2^2\right) &= \E_q\left[\|y_i - \E_q(f_i)\|_2^2 + \|f_i - \E_q(f_i)\|_2^2 \right] \nonumber \\
    &= \left\|y_i - \sum_{m=1}^M \mu_{b_m} \mu_{a_m}^\top x_i\right\|_2^2 + \E_q\left[\left\|\sum_{m=1}^M b_m a_m^\top x_i - \sum_{m=1}^M \mu_{b_m} \mu_{a_m}^\top x_i \right\|_2^2\right] \nonumber \\
    &= \left\|y_i - \sum_{m=1}^M \mu_{b_m} \mu_{a_m}^\top x_i\right\|_2^2 + \E_q\left(\left\|\sum_{m=1}^M u_{im}\right\|_2^2\right) \nonumber\\
    &= \left\|y_i - \sum_{m=1}^M \mu_{b_m} \mu_{a_m}^\top x_i\right\|_2^2 + \E_q\left(\sum_{m=1}^M \|u_{im}\|_2^2 + \sum_{m \neq j} u_{im}u_{ij}\right).\nonumber
\end{align}
By the assumed factorization $q(A,B)=\prod_{m=1}^M q(a_m)q(b_m)$, the centered contributions from distinct neurons are independent, hence 
\begin{equation*}
    \E_q(u_{im} u_{ij}) = \E_q(u_{im})\E_q(u_{ij}) = 0
\end{equation*}
for all $m \neq j$. So, 
\begin{equation}\label{eq:expecneglogposterior}
    \E_q\left( \sum_{i=1}^n \left\|y_i - \sum_{m=1}^M b_m a_m^\top x_i\right\|_2^2\right) = \sum_{i=1}^n\left\|y_i - \sum_{m=1}^M \mu_{b_m} \mu_{a_m}^\top x_i\right\|_2^2 + \sum_{i=1}^n \E_q\left(\sum_{m=1}^M \|u_{im}\|_2^2\right).
\end{equation}
For each $m \in \{1,\ldots, M\}$, 
\begin{align}
    \E_q(\|u_{im}\|_2^2 ) &= \E_q (\|b_m a_m^\top x_i - \mu_{b_m} \mu_{a_m}^\top x_i\|_2^2)\nonumber \\
    &= \E_q(\|b_m a_m^\top x_i\|_2^2) - \|\mu_{b_m} \mu_{a_m}^\top x_i\|_2^2\nonumber \\
    &= \E_q((a_m^\top x_i)^2 b_m^2) - \|\mu_{b_m} \mu_{a_m}^\top x_i \|_2^2 \nonumber\\
    &= ((\mu_{a_m}^\top x_i)^2 + x_i^\top \Sigma_{a_m} x_i) (\mu_{b_m}^2 + \Sigma_{b_m}) - \|\mu_{b_m}\mu_{a_m}^\top x_i\|_2^2 \nonumber\\
    &= \mu_{b_m}^2 x_i ^\top \Sigma_{a_m}x_i + \Sigma_{b_m}(\mu_{a_m}^\top x_i)^2 + \Sigma_{b_m}x_i^\top \Sigma_{a_m} x_i \label{eq:meanUh}.
\end{align}
\eqref{eq:klr(a_m)phi(a_m)}, \eqref{eq:klr(b_m)phi(b_m)}, \eqref{eq:expecneglogposterior}, and \eqref{eq:meanUh} come together to rewrite $\mathcal L^{\text{VB}}(q)$ as
\begin{multline*}
   \mathcal L^{\text{VB}}(q) = \sum_{m=1}^M\left[d\left(\|\mu_{a_m}\|_2^2 + \tr(\Sigma_{a_m})\right) - \log |\Sigma_{a_m}| + M\left(\mu_{b_m}^2 + \Sigma_{b_m}\right) - \log \Sigma_{b_m}\right] \\
    + \frac{1}{\sigma^2}\sum_{i=1}^n \left[\left\|y_i - \sum_{m=1}^M \mu_{b_m}\mu_{a_m}^\top x_i\right\|_2^2 + \sum_{m=1}^M \left(\mu_{b_m}^2 x_i ^\top \Sigma_{a_m}x_i + \Sigma_{b_m}(\mu_{a_m}^\top x_i)^2 + \Sigma_{b_m}x_i^\top \Sigma_{a_m} x_i \right)\right]+C.
\end{multline*}

\section{Likelihood Tempering as KL-Divergence Downweighting}\label{app:tempdownweight}
Let $\pi(A,B \mid X^n,Y^n)$ be the true posterior where $(X^n,Y^n)$ are the observed variables and $(A,B)$ are the latent ones. Recall the KL-divergence between the variational posterior $q(A,B)$ and the true posterior $\pi(A,B \mid X^n,Y^n)$ 
\begin{equation*}
    \text{KL}  (q(A,B)  \|  \pi(A,B \mid X^n,Y^n) ) = \int q(A,B) \log \frac{q(A,B)}{\pi(A,B \mid X^n,Y^n)}   \, dA\,dB .
\end{equation*}
By Bayes' theorem,
\small
\begin{multline*}
    \text{KL}(q(A,B)  \|  \pi(A,B \mid X^n,Y^n) ) = \int q(A,B) \log \frac{q(A,B)}{p(A,B)}  \, dA\,dB  + \int q(A,B) \log p(Y^n \mid X^n)  \, dA\,dB \\- \int q(A,B) \log p(Y^n \mid X^n,A,B)  \, dA\,dB .
\end{multline*}
\normalsize
Notice that the first quantity is the KL-divergence between the variational posterior $q$ and the true prior $p(A,B)$, the second is equal to the log evidence $\log p(Y^n \mid X^n)$ since $q$ is a density, and the last quantity is the expectation of $\log p(Y^n \mid X^n,A,B)$ with respect to $q$, denoted as $\E_q[\log p(Y^n \mid X^n,A,B)]$. Recall the evidence lower bound (ELBO)
\begin{equation*}
\text{ELBO}(q) = \E_q  [\log p(Y^n \mid X^n,A,B) ] - \text{KL} (q(A,B)  \|  p(A,B) ).
\end{equation*}
Since the log evidence is constant with respect to $q$, the problem of variational inference where we want to minimize the KL-divergence between densities $q$ from a family of variational distributions $\mathcal{Q}$ and the true posterior $\pi(A,B \mid X^n,Y^n)$ is equivalent to maximizing the ELBO over $q$, thus the variational inference problem can be expressed as 
\begin{equation*}
    \min_{q \in \mathcal{Q}} \text{KL} (q(A,B)  \|  p(A,B) ) - \E_q [\log p(Y^n \mid X^n,A,B) ].
\end{equation*}
Define the tempered normalizing constant
\begin{equation*}
    Z_T(X^n,Y^n)=\int p(A,B)p(Y^n\mid X^n,A,B)^{1/T}\,dA\,dB.
\end{equation*}
Then
\begin{align*}
    &\text{KL}(q(A,B) \| \pi_T(A,B \mid X^n,Y^n)) = \int q(A,B) \log \frac{q(A,B) Z_T(X^n,Y^n)}{p(A,B) p(Y^n \mid X^n,A,B)^{1/T}} \,dA\,dB \\
    &\qquad= \text{KL}(q(A,B) \| p(A,B)) + \log Z_T(X^n,Y^n) - \frac{1}{T}\E_q[\log p(Y^n \mid X^n,A,B)],
\end{align*}
so the variational inference problem is 
\begin{equation}\label{eq:tempered-variational-objective}
    \min_{q \in \mathcal{Q}} \text{KL} (q(A,B)  \|  p(A,B) ) - \frac{1}{T}\E_q[\log p(Y^n \mid X^n,A,B)], 
\end{equation}
which is equivalent to multiplying the KL term  $\text{KL} (q(A,B) \| p(A,B) )$ by $T$.
\begin{equation*}
    \min_{q \in \mathcal{Q}} T \text{KL}(q(A,B)\|p(A,B)) - \E_q[\log p(Y^n \mid X^n,A,B)].
\end{equation*}

For the Gaussian observation model in Section~\ref{sec:lbnn}, the log-likelihood is
\begin{equation}\label{eq:gaussian-log-likelihood-tempering}
    \log p(Y^n\mid X^n,A,B)=-\frac{n}{2}\log(2\pi\sigma^2)-\frac{1}{2\sigma^2}\sum_{i=1}^n\|y_i-BA^\top x_i\|_2^2.
\end{equation}
Taking expectations in \eqref{eq:gaussian-log-likelihood-tempering} and multiplying by $-1/T$ gives
\begin{equation*}
    -\frac{1}{T}\E_q[\log p(Y^n\mid X^n,A,B)]=\frac{1}{2T\sigma^2}\E_q\left[\sum_{i=1}^n\|y_i-BA^\top x_i\|_2^2\right]+\frac{n}{2T}\log(2\pi\sigma^2).
\end{equation*}
The last term is independent of $q$. Thus, the objective in \eqref{eq:tempered-variational-objective} is the untempered objective with $\sigma^2$ replaced by $T\sigma^2$, up to a term independent of $q$. Since $T\sigma^2>0$, Theorem~\ref{thm:orthogexpecAB} applies, so all but at most one neuron have zero variational means under any global minimizer of the tempered objective. This proves Remark~\ref{rem:tempered-one-active}.

\section{Background Details}\label{app:background-details}

\subsection{Deep Feed-Forward Bayesian Neural Networks}\label{app:bnns}
A feed-forward network with input dimension $M_0=d$, hidden widths $M_1,\ldots,M_L$ and scalar output, $M_{L+1}=1$, computes $f_\theta(x)=f^{L+1}(x)$, where $h^0(x)=x$ and
\begin{align*}
    f^\ell(x) &= \frac{W_\ell h^{\ell-1}(x)}{\sqrt{M_{\ell-1}}}, && \ell=1,\ldots,L+1,\\
    h^\ell(x) &= \varphi\bigl(f^\ell(x)\bigr), && \ell=1,\ldots,L,
\end{align*}
with $W_\ell\in\R^{M_\ell\times M_{\ell-1}}$, $\theta=(W_\ell)_{\ell=1}^{L+1}$ and an activation $\varphi$ applied componentwise. With independent standard Gaussian entries in each $W_\ell$, the factor $1/\sqrt{M_{\ell-1}}$ keeps the pre-activations of order one as the widths grow, which is the NTK parametrization of \citet{jacotNeuralTangentKernel2018}. The network of Section~\ref{sec:lbnn} is the case $L=1$, $M_1=M$ and $\varphi=\mathrm{id}$, with $A=W_1^\top/\sqrt d$ and $B=W_2/\sqrt M$.

\subsection{Variational Inference}\label{app:VI}
For a model with prior $p(\theta)$, likelihood $p(\mathcal D\mid\theta)$ and posterior $\pi(\theta\mid\mathcal D)$, every density $q$ satisfies
\begin{equation*}
    \log p(\mathcal D)=\text{ELBO}(q)+\text{KL}\bigl(q\,\|\,\pi(\cdot\mid\mathcal D)\bigr),
\end{equation*}
with $\text{ELBO}(q)=\E_q[\log p(\mathcal D\mid\theta)]-\text{KL}(q\,\|\,p)$. Since the evidence $p(\mathcal D)$ does not depend on $q$, minimising the KL-divergence to the posterior over a family $\mathcal{Q}$ is equivalent to maximising the ELBO, which is a lower bound on $\log p(\mathcal D)$ \citep{bleiVariationalInferenceReview2017}. A mean-field family consists of the densities $q(\theta)=\prod_{j=1}^Jq_j(\theta_j)$ for a fixed partition $\theta=(\theta_1,\ldots,\theta_J)$. The family $\mathcal{Q}_{\text{MF}}$ of Section~\ref{sec:lnnvi} uses the blocks $a_1,\ldots,a_M,b_1,\ldots,b_M$ with Gaussian factors, so that the coordinates within each $a_m$ remain dependent.

\subsection{Coordinate-Ascent Updates and First-Order Conditions}\label{app:coordinateupdates}
Let $q\in\mathcal{Q}_{\text{MF}}$, and let $q^-_{a_m}$ and $q^-_{b_m}$ denote the products of all its factors except $q(a_m)$ and $q(b_m)$, respectively. As a function of $q(a_m)$, the free energy \eqref{eq:vbfreeenergyoriginal} equals $\text{KL}(q(a_m)\,\|\,\tilde q(a_m))$ up to an additive constant, and similarly for $q(b_m)$, where
\begin{align}
    \tilde q(a_m) &\propto p(a_m)\exp\bigl(\E_{q^-_{a_m}}[\log p(Y^n\mid X^n,A,B)]\bigr), \label{eq:ram}\\
    \tilde q(b_m) &\propto p(b_m)\exp\bigl(\E_{q^-_{b_m}}[\log p(Y^n\mid X^n,A,B)]\bigr) \label{eq:rbm}
\end{align}
are the coordinate-ascent updates applied iteratively in practical algortihms \citep{bleiVariationalInferenceReview2017}. Both targets are Gaussian, as computed below, and the KL-divergence between two Gaussian distributions is stationary in the parameters of its first argument only where the two coincide. The stationary points of \eqref{eq:minproblemexpanded} are therefore the fixed points $q(a_m)=\tilde q(a_m)$ and $q(b_m)=\tilde q(b_m)$, $m=1,\ldots,M$.

Since $Q=I_d$, one has $\sum_ix_ix_i^\top=nI_d$ and $\sum_iy_ix_i=nR^\top$. As the factors are independent, the expected log-likelihoods in \eqref{eq:ram} and \eqref{eq:rbm} are, up to additive constants,
\begin{align*}
    &-\frac\alpha2\bigl(\mu_{b_m}^2+\Sigma_{b_m}\bigr)\|a_m\|_2^2+\alpha\mu_{b_m}a_m^\top\Bigl(R-\sum_{k\neq m}\mu_{b_k}\mu_{a_k}^\top\Bigr)^\top,\\
    &-\frac\alpha2\bigl(\|\mu_{a_m}\|_2^2+\tr(\Sigma_{a_m})\bigr)b_m^2+\alpha b_m\Bigl(R-\sum_{k\neq m}\mu_{b_k}\mu_{a_k}^\top\Bigr)\mu_{a_m},
\end{align*}
with $\alpha=n/\sigma^2$. Completing the squares with $$p(a_m)\propto\exp\left(-\frac{d\|a_m\|_2^2}{2}\right),$$ and $$p(b_m)\propto\exp\left(-\frac{Mb_m^2}{2}\right),$$ shows that both targets are Gaussian, with the covariances \eqref{posterior cov a_m} and \eqref{posterior cov b_m} and the means
\begin{align*}
    \mu_{a_m} &= \alpha\mu_{b_m}\Sigma_{a_m}\Bigl(R-\sum_{k\neq m}\mu_{b_k}\mu_{a_k}^\top\Bigr)^\top,\\
    \mu_{b_m} &= \alpha\Sigma_{b_m}\Bigl(R-\sum_{k\neq m}\mu_{b_k}\mu_{a_k}^\top\Bigr)\mu_{a_m}.
\end{align*}
At a stationary point with at most one active neuron, the residual may be replaced by $R$ for every $m$, which gives \eqref{posterior mean a_m} and \eqref{posterior mean b_m}: the other means vanish for the active neuron, and both sides vanish for an inactive one. Replacing $\alpha$ by $\alpha/T$ gives the first-order conditions of the tempered objective.

\section{Simulation Details}\label{app:simulation-details}
\subsection{Setup}
Here, we study the finite-width behaviour of tempered variational posterior under the linear neural network model and Gaussian priors introduced in Section~\ref{sec:lbnn}. We use the same model throughout and vary the network width $M$ according to the tempering schedules described below. 

We generate one dataset with sample size $n = 40$ and input dimension $d = 2$, and start with an $n\times d$ matrix of independent $\mathcal N(0, 1)$ entries. Then, we center each column, orthonormalize the columns using a QR decomposition, and multiply the resulting matrix by $\sqrt n$. As a result, the design matrix $X$ satisfies 
\begin{equation*}
    Q = \frac 1n X^\top X = I_d. 
\end{equation*}
The responses $y_i$ are generated by 
\begin{equation*}
    y_i = 1.4 x_{i1} - 0.8 x_{i2} + \varepsilon_i, \quad \varepsilon_i\iidsim \mathcal N(0,\sigma^2), \quad \sigma^2=20.
\end{equation*}
When fitting the variational posterior, we treat $\sigma^2$ as a known entity. The same dataset is used for every network width and tempering regime, so the simulation studies the effect of increasing width conditional on a fixed dataset. 

We consider widths of 
\begin{equation*}
    M \in \{2^4, 2^6,\ldots, 2^{28}\}
\end{equation*}
and set $T_M = \tau/M^c$. Let 
\begin{equation*}
    R = \frac 1n y^\top X, \qquad \alpha = \frac{n}{\sigma^2}, \qquad \tau_* = \frac{\alpha \|R\|_2}{\sqrt d},
\end{equation*}
where $\tau_*$ is the threshold appearing in Lemma \ref{lem:tempered-zero-means}. Additionally, define 
\begin{equation*}
    \tau_\mu = \tau_* \left( 1 + \frac \alpha d\right)^{-1}
\end{equation*}
and 
\begin{equation*}
    \tau_v = \frac{d + 1 + \sqrt{(d+1)^2 - \frac{4d\alpha}{\alpha + d}}}{2}.
\end{equation*}
For this dataset, $\tau_*=2.908$, $\tau_\mu=1.454$ and $\tau_v=2.618$.

The eight tempering schedules we consider are
\begin{align*}
    (c,\tau)\in\{&
    (1/4,\tau_\mu), (1/2,\tau_*/0.7), (1/2,\tau_\mu), (3/4,\tau_\mu),\\
    &
    (0.95,\tau_v), (1,\tau_v), (1.05,\tau_v), (5/4,\tau_\mu)
    \}.
\end{align*}
The constants are each computed once and held fixed as only $M$ varies. When $c = 1/2$, the value $\tau_\mu$ is chosen so that the limiting variational posterior expectation agrees with the untempered NNGP posterior expectation, while $\tau_*/0.7$ gives a value above the critical threshold. When $c = 1$, the value $\tau_v$ is chosen so that the limiting variational posterior variance agrees with the untempered NNGP posterior variance. We also use $\tau_v$ for $c = 0.95$ and $c = 1.05$ to emphasize the effect of changing $c$ near $c=1$. 

\subsection{Variational Posterior Computation}\label{sec:simulation-computation}
For every $M$ and $T_M$, we compute a Gaussian mean-field variational posterior using the representation from Section~\ref{sec:lbnn}. Since at most one neuron is active under a global minimizer (Remark~\ref{rem:tempered-one-active}), we assume without loss of generality that it is the first, and write
\begin{equation*}
    q_T(a_1) = \mathcal N_d(\mu_a, v_a I_d),  \quad q_T(b_1) = \mathcal N(\mu_b, v_b).
\end{equation*}
For $m = 2,\ldots, M$, write 
\begin{equation*}
    q_T(a_m) = \mathcal N_d(0, v_{a0} I_d),  \quad q_T(b_m) = \mathcal N(0, v_{b0}).
\end{equation*}
Let 
\begin{equation*}
    L_M = \frac{n}{T_M\sigma^2}. 
\end{equation*}
Since $Q = I_d$, the first-order equations for the first neuron are 
\begin{align}
    v_a &= \left[d+L_M(\mu_b^2+v_b)\right]^{-1}, \,\mu_a = L_Mv_a\mu_bR^\top, \label{eq:a-first-neuron-eq}\\
    v_b &= \left[M+L_M(\|\mu_a\|_2^2+dv_a)\right]^{-1}, \,\mu_b = L_Mv_bR\mu_a. \label{eq:b-first-neuron-eq}
\end{align}

For $m = 2,\ldots, M$, set $\mu_{a_m} = 0$, $\mu_{b_m} = 0$. The common variances of these neurons are then obtained by iterating 
\begin{equation*}
    v_{a0} = \left(d + L_M v_{b0}\right)^{-1}, \quad v_{b0} = \left(M + L_M dv_{a0}\right)^{-1}. 
\end{equation*}
We initialize this iteration at the prior variances: $v_{a0} = 1/d$ and $v_{b0} = 1/M$.

For the first neuron, we initialize $\mu_b=M^{-1/4}$ and set $v_a$ and $v_b$ equal to the fitted zero-mean variances. The nonzero initialization allows the iteration to move away from the zero-mean stationary point when the data support nonzero variational means. We use the same update procedure for every tempering regime, rather than substituting the asymptotic expressions from Theorem~\ref{thm:main-theorem}.

We take inspiration from \cite{nakajimaGlobalAnalyticSolution2013} and consider an iterated conditional modes (ICM) algorithm approach \citep{besagStatisticalAnalysisDirty1986,bishopPatternRecognitionMachine2006}. At each iteration, we first update $v_a$ and $\mu_a$ using the current values of $v_b$ and $\mu_b$, and then update $v_b$ and $\mu_b$ using the new values of $v_a$ and $\mu_a$. This means that the equations in \eqref{eq:b-first-neuron-eq} hold after each iteration. We then check if the updated values of $v_b$ and $\mu_b$ also satisfy the equations in \eqref{eq:a-first-neuron-eq}. The covariance residual is computed as 
\begin{equation*}
    e_{\text{cov}} = \left|v_a\left[d+L_M(\mu_b^2+v_b)\right]-1\right|
\end{equation*}
and expectation residual as 
\begin{equation*}
    e_{\text{mean}} = \frac{\|\mu_a - L_Mv_a\mu_bR^\top\|_\infty}{\frac{1}{\sqrt d} + \|\mu_a\|_\infty}.
\end{equation*}
For the remaining neurons, where $\mu_{a_m} = 0$ and $\mu_{b_m} = 0$, we use the covariance residual. 

We continue iterating until either all residuals are smaller than $10^{-10}$, or a maximum of $2\times 10^7$ iterations is reached. If the residual tolerance is not met within this limit, the script stops with an error. Therefore, every accepted fit numerically satisfies the finite-width first-order equations.

\subsection{Posterior Mean and Variance}\label{sec:posterior-simulation}
After fitting the variational posterior, we compute the mean and variance of the latent network output at
\begin{equation*}
    x^* = \frac{1}{\sqrt 2}(1,-1)^\top.
\end{equation*}
Since the variational factors are independent,
\begin{equation*}
    \E_{q_{T_M}}(f_\theta(x^*)) = \mu_b\mu_a^\top x^*,
\end{equation*}
and
\begin{equation*}
    \Var_{q_{T_M}}(f_\theta(x^*)) = (M-1)v_{b0}v_{a0}(x^*)^\top x^* + \mu_b^2v_a(x^*)^\top x^*+ v_b(\mu_a^\top x^*)^2 + v_bv_a(x^*)^\top x^*.
\end{equation*}
We evaluate these expressions using the fitted variational parameters for each width and tempering schedule. They give the exact moments of the fitted finite-width distribution. Since we consider the latent network output, observation noise is not included.

We plot the predictive mean and variance for each of the eight tempering schedules against $M$ on a logarithmic horizontal axis. The mean plot includes the prior mean
\begin{equation*}
    m_0(x^*) = 0,
\end{equation*}
the untempered NNGP posterior mean 
\begin{equation*}
    m_{\text{NNGP}}(x^*) = \frac{\alpha}{\alpha + d}Rx^*,
\end{equation*}
and the least squares prediction 
\begin{equation*}
    m_\infty(x^*) = Rx^*.
\end{equation*}
The variance plot includes the prior variance 
\begin{equation*}
    v_0(x^*) = \frac{(x^*)^\top x^*}{d} = \frac12,
\end{equation*}
the untempered NNGP posterior variance 
\begin{equation*}
    v_{\text{NNGP}}(x^*) = \frac{(x^*)^\top x^*}{\alpha + d} = \frac 14,
\end{equation*}
and the zero-variance point-mass limit. These quantities are only included in the plots as theoretical reference values. The mean and variance curves are obtained from the fitted finite-width variational posteriors. The simulation script uses base R with seed $1001$ for dataset generation.

\subsection{Verification of the Global Minimizer}
Every fitted distribution is also a global minimizer. For the fits with zero means, $T_M\geq \tau_*/\sqrt M$, so Lemma \ref{lem:tempered-zero-means} applies. For the remaining fits, eliminating the means from \eqref{eq:a-first-neuron-eq} and \eqref{eq:b-first-neuron-eq} gives
\begin{equation*}
    \frac{M}{L_M}u^2+(d-1)u-L_Md\|R\|_2^2=0,
    \qquad u=v_a^{-1}.
\end{equation*}
This equation has exactly one positive root, so there is only one stationary point with one active neuron, apart from changing the signs of both means and choosing which neuron is active. By Theorem~\ref{thm:orthogexpecAB}, the only other possible global minimizer is the zero-mean stationary point (which is unique as proved in Appendix \ref{app:limitdistofbhahxproof} for $T=1$, and valid for $T>0$ by the change from $\sigma^2$ to $T\sigma^2$). For each of these fits, the tempered objective at that point is at least $4.04$ greater than at the fitted distribution on the scale of $\mathcal L^{\text{VB}}$.

\subsection{Simulation Infrastructure}\label{app:sim-infrastructure}
The simulation script uses base R (version 4.6.1) with seed $1001$ for dataset generation. All computations were run on a laptop CPU (Apple M3 Pro), and the full simulation takes under a minute; no GPU or computing cluster was used.

\end{document}